\let\proof\relax
\let\endproof\relax
\DeclareMathOperator*{\argmax}{arg\,max}
\DeclareMathOperator*{\argmin}{arg\,min}
\newcommand{\algnamegen}{\texttt{R-$\square$-UCB}\@\xspace}
\newcommand{\algnamegenshort}{\texttt{R-$\square$-UCB}\@\xspace}
\newcommand{\algnameblock}{\texttt{DR-BD-UB}\@\xspace}
\newcommand{\algnameblockshort}{\texttt{DR-BD-UB}\@\xspace}
\newcommand{\algnamegendet}{\texttt{DR-G-UB}\@\xspace}
\newcommand{\algnamegendetshort}{\texttt{DR-G-UB}\@\xspace}
\newcommand{\rawucb}{\texttt{RAW-UCB}\@\xspace}
\newcommand{\settingnameshort}{{GTB}\@\xspace}
\newcommand{\vnu}{\bm{\nu}}
\newcommand{\Nonehalf}{\mathbb{E}_{\vnu} \left[ N_1^R \left( \frac{T}{2} \right) \right]}
\newcommand{\Ntwohalf}{\mathbb{E}_{\vnu} \left[ N_2^R \left( \frac{T}{2} \right) \right]}
\newcommand{\NtwoT}{\mathbb{E}_{\vnu} \left[ N_2^R \left( T \right) \right]}
\newcommand{\Nthreehalf}{\mathbb{E}_{\vnu} \left[ N_3^R \left( \frac{T}{2} \right) \right]}
\newcommand{\NthreeT}{\mathbb{E}_{\vnu} \left[ N_3^R \left( T \right) \right]}
\DeclareRobustCommand{\eg}{e.g.,\@\xspace} 
\DeclareRobustCommand{\ie}{i.e.,\@\xspace}
\DeclareRobustCommand{\wrt}{w.r.t.\@\xspace}
\DeclareRobustCommand{\quotes}[1]{``#1''}
\newenvironment{proofsketch}{%
  \proof}{\endproof}
\definecolor{blued}{RGB}{70,197,221}
\definecolor{applegreen}{rgb}{0.55, 0.71, 0.0}
\definecolor{brightBlue}{RGB}{68, 119, 170}
\definecolor{brightCyan}{RGB}{102, 204, 238}
\definecolor{brightGreen}{RGB}{34, 136, 51}
\definecolor{brightYellow}{RGB}{204, 187, 68}
\definecolor{brightRed}{RGB}{238, 102, 119}
\definecolor{brightPurple}{RGB}{170, 51, 119}
\definecolor{brightGrey}{RGB}{187, 187, 187}
\definecolor{vibrantBlue}{RGB}{0, 119, 187}
\definecolor{vibrantCyan}{RGB}{51, 187, 238}
\definecolor{vibrantTeal}{RGB}{0, 153, 136}
\definecolor{vibrantOrange}{RGB}{238, 119, 51}
\definecolor{vibrantRed}{RGB}{204, 51, 17}
\definecolor{vibrantMagenta}{RGB}{238, 51, 119}
\definecolor{vibrantGrey}{RGB}{100, 100, 100}
\declaretheorem[name=Proposition,sharenumber=thr]{proposition}
\declaretheorem[name=Assumption]{ass}
\declaretheorem[name=Lemma,sharenumber=thr]{lemma}
\declaretheorem[name=Remark]{remark}
\begin{document}

\title{Bridging Rested and Restless Bandits with Graph-Triggering: Rising and Rotting}

\author{\name Gianmarco Genalti \email gianmarco.genalti@polimi.it \\
\name Marco Mussi \email marco.mussi@polimi.it \\
\name Nicola Gatti \email nicola.gatti@polimi.it \\
\name Marcello Restelli \email marcello.restelli@polimi.it \\
\name Matteo Castiglioni \email matteo.castiglioni@polimi.it \\
\name Alberto Maria Metelli \email albertomaria.metelli@polimi.it \\
\addr Politecnico di Milano \\
Piazza Leonardo da Vinci 32, Milan, 20133, Italy}

\editor{Kevin Jamieson}

\maketitle

\begin{abstract}%
Rested and Restless Bandits are two well-known bandit settings that are useful to model real-world sequential decision-making problems in which the expected reward of an arm evolves over time due to the actions we perform or due to the nature. In this work, we propose Graph-Triggered Bandits (GTBs), a unifying framework to generalize and extend rested and restless bandits. In this setting, the evolution of the arms' expected rewards is governed by a graph defined over the arms. An edge connecting a pair of arms $(i,j)$ represents the fact that a pull of arm $i$ triggers the evolution of arm $j$, and vice versa. Interestingly, rested and restless bandits are both special cases of our model for some suitable (degenerated) graph. As relevant case studies for this setting, we focus on two specific types of monotonic bandits: rising, where the expected reward of an arm grows as the number of triggers increases, and rotting, where the opposite behavior occurs. For these cases, we study the optimal policies. We provide suitable algorithms for all scenarios and discuss their theoretical guarantees, highlighting the complexity of the learning problem concerning instance-dependent terms that encode specific properties of the underlying graph structure.\footnote{A conference version of this work~\citep{genalti2024graph}, studying Rising GTBs only, appeared at the \emph{International Conference on Machine Learning}.}
\end{abstract}

\begin{keywords}
  Multi-Armed Bandits, Rising, Rotting, Rested, Restless
\end{keywords}

\section{Introduction}
\label{sec:intro}

In the basic stochastic Multi-Armed Bandit~\citep[MAB,][]{lattimore2020bandit} problem, at each round, the learner is asked to choose an action (a.k.a.~arm) among a finite action set and, then, it observes a reward drawn from an unknown probability distribution. The simplicity of the MAB framework is both a strength and a limitation. On the one hand, the simple nature of the framework allows for the development of elegant and efficient algorithms that can be exactly characterized and studied from an information-theoretic perspective. On the other hand, the basic MAB model assumes a relatively simplistic environment that may not capture the complexities of real-world situations. As a result, traditional MAB approaches might not be sufficient for more intricate decision-making problems where additional factors come into play. 
To address these limitations, researchers extended the MAB framework by incorporating additional structures and complexities in order to be able to handle realistic scenarios. Examples of that are \emph{linear}~\citep{abbasi2011improved}, \emph{continuous-action spaces}~\citep{kleinberg2008multi}, and \emph{kernelized} bandits~\citep{chowdhury2017kernelized}, which impose structure over the arms, \emph{non-stationary} bandits~\citep{gur2014stochastic}, which allow us to consider evolving environments, \emph{delayed} reward bandits~\citep{pikeburke2018bandits}, allowing us to consider delayed feedback.
Over the different structures available in the literature, we focus on these two specific types of MAB structures, called \emph{restless} and \emph{rested} bandits~\citep {tekin2012online}. In the former, the expected rewards evolve following the time (\ie as an effect of \emph{nature}); in the latter, the expected reward of an arm evolves as a function of the pulls we perform on that specific arm.

In this paper, we propose a unified framework to generalize restless and rested bandits. In particular, we define a novel space of MABs called Graph-Triggered Bandits (GTBs). A GTB is represented by a bandit complemented with a \emph{graph} describing the interactions between arms. Specifically, an arm \emph{triggers} the evolution of its own expected reward (as for rested bandits) and the evolution of the \quotes{connected} arms. Figure~\ref{fig:examples_restless_rested} shows an example of this scenario, where the nodes represent arms, and the edges represent interactions. Interestingly, rested and restless bandits are two vertices in the space of GTBs. In particular, restless bandits correspond to the case of a \emph{fully-connected} graph, while rested ones correspond to the graph with the \emph{self-loops only}. 

This framework is driven by both \emph{theoretical} and \emph{practical} considerations. Theoretically, it offers a unified approach that generalizes \emph{both rested and restless} bandits. Specifically, our goal is to establish a framework in which the well-known rested and restless bandits emerge as special cases, represented by appropriate (degenerated) graphs (see Figure~\ref{fig:examples_restless_rested}). Practically, restless and rested bandits can model a wide range of real-world situations. For example, consider the scenario where we must choose which product to advertise (represented by our arms), with the reward being the number of sales for that product. On the one hand, with rested bandits, we can handle cases in which the products are all independent. On the other hand, with restless bandits, we can handle scenarios in which all the products interact. However, all the intermediate scenarios, \eg where advertising a product boosts its sales and also enhances the sales of the subset of complementary products, cannot be handled using restless/rested solutions. Indeed, this scenario is a rested problem with elements exhibiting restless behavior, and our generalization allows us to address such situations.

\begin{figure}[t!]
    \centering
    \resizebox{0.95\linewidth}{!}{\includegraphics{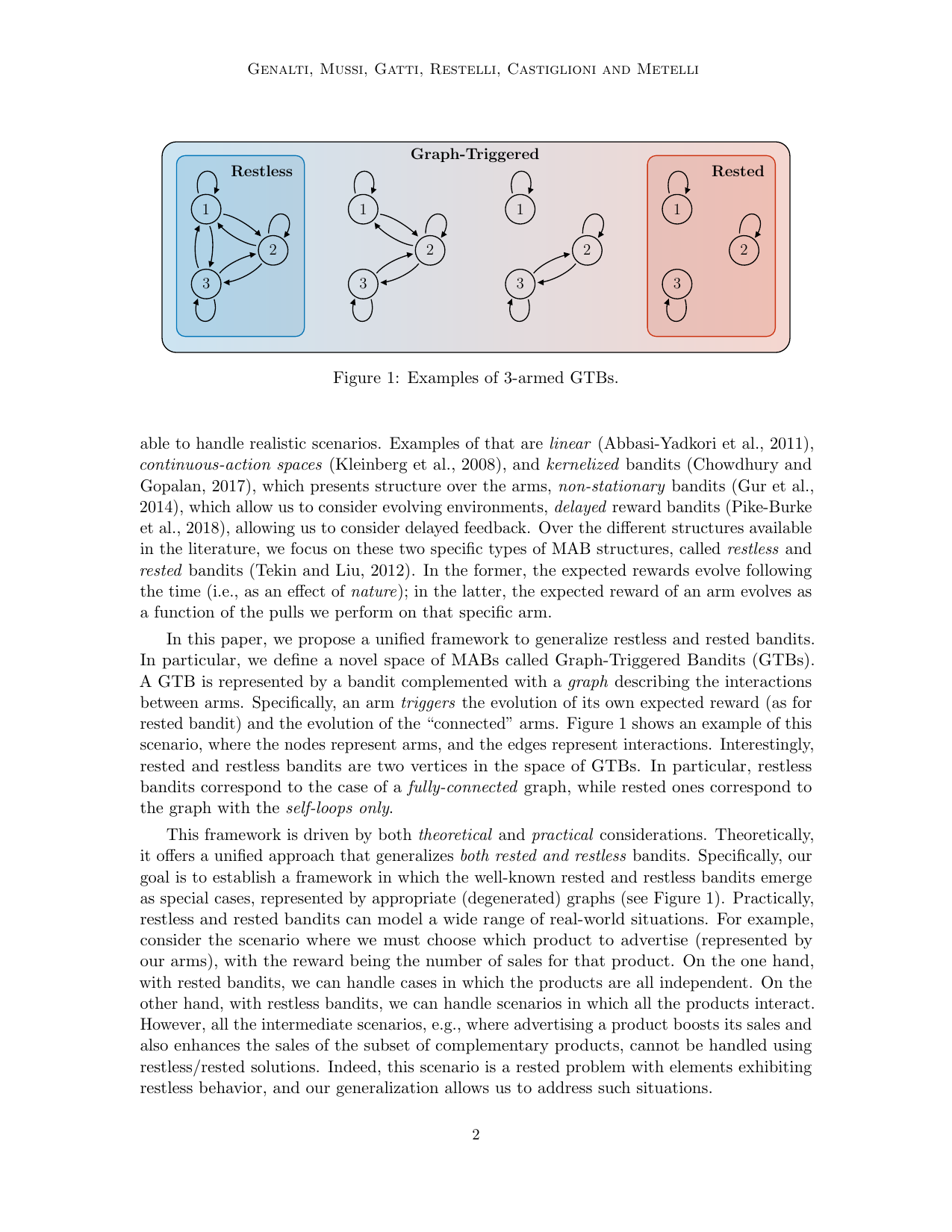}}
    \caption{Examples of $3$-armed GTBs.}
    \label{fig:examples_restless_rested}
\end{figure}

\paragraph{Contributions.}
In this paper, we present Graph-Triggered Bandits (GTBs), a setting aiming to generalize and extend rested and restless bandits settings by introducing a graph structure to represent the interaction between the arms. We focus on the cases of rising and rotting bandits, as they represent interesting case studies allowing us to obtain no-regret algorithms. More in detail, the contributions are as follows.
\begin{itemize}[noitemsep,leftmargin=.5cm,topsep=0pt]
    \item In Section~\ref{sec:problemformulation}, after having introduced the fundamental notions on the rested and restless bandits, we introduce the novel framework of GTBs and discuss the relevant quantities characterizing an instance, including a representation of the graph based on the connectivity matrix. Then, we present the learning problem and the performance index to evaluate algorithms in this setting. 
    \item In Section~\ref{sec:rising}, we study the \emph{Rising GTBs} scenario. We discuss the optimal policy in this setting, by first providing a negative result, showing that computing the optimal policy is NP-hard for an arbitrary graph (Theorem~\ref{thr:hardness}). Then, we characterize the optimal policy for \emph{block-diagonal} connectivity matrices, which can be computed in polynomial time (Theorem~\ref{thr:optimalblocks}). 
    Subsequently, we discuss the deterministic scenario, and we propose two algorithms, the first, \algnameblock, for block-diagonal connectivity matrices and the second, \algnamegendet, for general graphs. We analyze their regret guarantees, highlighting the dependence on the graph structure (Theorems~\ref{thr:det_block_bound} and~\ref{thr:regret_det_gen}). Finally, we analyze the \algnamegenshort algorithm~\citep{metelli2022stochastic}, designed for rested and restless stochastic rising bandits that does not require the knowledge of the graph. We characterize its regret guarantees, focusing on the dependence on the characteristics of the underlying graph (Theorems~\ref{thr:regret_block} and~\ref{thr:regret_gen}). In particular, we show that the regret of \algnamegenshort can be controlled regardless of the nature of the connectivity matrix of the rising GTB instance. In benign instances in which the total increment of the reward functions is sublinear in $T$, the \algnamegenshort achieves sublinear regret.
    \item In Section~\ref{sec:rotting}, we study the \emph{Rotting GTBs} scenario. As for the Rising GTBs case, we prove that computing the optimal policy is NP-hard for arbitrary graphs (Theorem~\ref{thr:hardnessRotting}). Then, we characterize the optimal policy for \emph{block-diagonal} connectivity matrices, which admits a convenient closed-form solution (Theorem~\ref{thr:opt}).
    Then, we focus on the special case of block-diagonal connectivity matrices, and we study how the \rawucb algorithm~\citep{seznec2020single} obtains strong regret guarantees with no knowledge of the graph (Theorem~\ref{thr:regret_block_ROT}). Finally, we present a non-learnability result for \emph{all} the Rotting GTBs problem under general matrices (Theorem~\ref{thr:rottingregretLBgeneric}).
\end{itemize}
The relevant literature is discussed in Appendix~\ref{sec:related}. The proofs of all the statements are provided in Appendices~\ref{apx:proofs} and~\ref{apx:rotting} for the Rising and Rotting GTBs, respectively.\footnote{For Rising GTBs, we report a short version of the proofs. The extended version is provided in~\citep{genalti2024graph}.}
\section{Graph-Triggered Bandits}
\label{sec:problemformulation}
In this section, we present the framework of Graph-Triggered Bandits (GTBs). We start in Section~\ref{sec:problemformulation:rested-ess} by introducing the basic background notions on stochastic rested and restless bandits. Then, in Section~\ref{sec:problemformulation:setting}, we formalize the GTBs setting. Finally, in Section~\ref{sec:problemformulation:learning}, we formalize the learning problem for the GTBs setting.

\subsection{Notions on Rested and Restless Bandits}
\label{sec:problemformulation:rested-ess}
Let $T\in\mathbb{N}$ be the learning horizon. We define an instance $\boldsymbol{\nu}=(\nu_i)_{i\in [k]}$ of a $k$-armed bandit as a vector of probability distributions with support defined over $\mathbb{R}$, where $k\in\mathbb{N}$.\footnote{Given $k\in\mathbb{N}$, we define $[k] \coloneqq \{1, 2, \ldots,k\}$.} The agent interacts with the environment as follows. At every round $t\in[T]$, the agent is asked to select an action $I_t$ among the $k$ available ones and it observes a reward $X_{I_t,t} \sim \nu_{I_t}$. We define $N_{i,t} \coloneqq \sum_{\tau \in [t]} \mathbbm{1}\{ I_\tau = i \} $ as the number of pulls of the arm $i\in[k]$ until round $t$. 
We consider two specific types of MAB, namely \emph{restless} and \emph{rested} bandits~\citep{tekin2012online}. In both cases, to each arm $i \in[k]$ corresponds a sequence of probability distributions $\boldsymbol{\nu}=(\nu_{i,n})_{i \in [k],\, n \in [T]}$, where the expected reward $\mu_i(n) = \mathbb{E}_{X \sim \nu_{i,n}}[X]$ evolves according to an history-dependent quantity $n \in \mathbb{N}$.
In the rested scenario, the expected reward of a generic arm $i$ evolves according to the number of pulls of such an arm, \ie $n  \leftarrow N_{i,t}$. Conversely, in the restless case, the expected reward of a generic arm $i$ evolves according to the current time $t$, \ie $n \leftarrow t$. This means that, in rested bandits, the reward distribution of an arm evolves only when it is pulled, while in restless bandits, it evolves at each round, no matter the action performed.
As customary in this field, we consider expected rewards $\mu_i(n)$ bounded in $[0,1]$, for every $i\in[k]$ and $n\in [T]$. Finally, we assume distributions to be \emph{subgaussian}\footnote{A (zero-mean) random variable $X$ is $\sigma^2$-subgaussian if it holds $\mathbb{E}\left[ \exp \left( \lambda X \right) \right] \le \exp \left( \frac{\sigma^2 \lambda^2}{2} \right)$ for every $\lambda \in \mathbb{R}$.}  for every arm $i$ and $n \in \mathbb{N}$, with their subgaussianity constants upper bounded by $\sigma^2$.

\subsection{Setting}
\label{sec:problemformulation:setting}
In rested and restless bandits, there exists no structure among different arms. We now present a generalization of rested and restless bandits obtained by adding a structure allowing arms to interact. 
We consider arms as connected through an undirected graph, which can be either \emph{known} or \emph{unknown} to the agent. If we pull an arm $i\in[k]$, we get its reward, and we \emph{trigger} an evolution of the expected reward of the arm $i$ and of all the arms connected to $i$. We neither get nor observe rewards from the connected arms (\ie bandit feedback).
Such a graph can be represented by a symmetric Connectivity Matrix (CM) $\mathbf{G}\in\{0,1\}^{k\times k}$. If the matrix contains the value $1$ in row $i$ and column $j$, this implies that the pull of arm $i$ determines the evolution of the expected reward of arm $j$. If the matrix contains a $0$ in position $(i,j)$, this implies that a pull of arm $i$ does not cause an evolution of the expected reward of arm $j$. The pull of an arm $i$ always implies the evolution of its own expected reward, formally $\mathbf{G}_{i,i}=1, \; \forall i \in [k]$.
For every round $t \in [T]$ and arm $i \in [k]$, we define the number $\widetilde{N}_{i,t}$ of \emph{triggers} that it has undergone as follows:
\begin{align}\label{eq:def_ntilde}
\widetilde{N}_{i,t} =  \sum_{\tau \in [t]} \mathds{1}\{\mathbf{G}_{I_\tau,i} = 1 \} =\mathbf{e}_i^\top \mathbf{G}^\top \mathbf{N}_{t},
\end{align}
where $\mathbf{e}_i$ is a vector belonging to the canonical basis of $\mathbb{R}^k$ whose components are all zero except for the $i$-th and $\mathbf{N}_{t} \coloneqq (N_{1,t}, \ldots, N_{k,t})^\top$ is the vector containing the number of pulls of each arm up to round $t$. In GTBs, rewards are sampled from probability distributions whose average rewards vary with the number of triggers, \ie $n \leftarrow \widetilde{N}_{i,t}$ and, consequently, the expected reward of an arm $i$ evolves as $\mu_{i}(\widetilde{N}_{i,t})$. 
Furthermore, we define $t_{i,n} \coloneqq \sum_{l\in [T]} \mathbbm{1} \{ N_{i,l} \le n \}$ as the round in which arm $i$ has been pulled for the $n$-th time. With $\boldsymbol{t}_{i,t} \coloneqq ( t_{i,n} )_{n \le N_{i,t}}$ we refer to the vector containing all the rounds in which the arm $i$ has been pulled, up to time $t$. Moreover, we introduce $t_{i,n}^I \coloneqq \widetilde{N}_{i,t_{i,n}}$, namely the \emph{internal time} of the $n$-th pull of arm $i$, which is the number of triggers of arm $i$ at the time of the $n$-th pull. We also define, given the connectivity matrix of a graph $\mathbf{G}$, the notion of $\bar{k}_1 \coloneqq |\{i \in [k] : \text{deg}(i)=1\}| $ as the number of arms having degree of $1$, where $\text{deg}(i) \coloneqq \mathbf{1}_{k}^\top \mathbf{G} \mathbf{e}_i$ is the degree of a node, \ie the number of edges incident to the node. We now observe the relationship between rested and restless bandits and our setting.

\begin{remark}[Inclusion of Rested and Restless Bandits in GTBs]
GTBs include both \emph{rested} and \emph{restless} bandits~\citep{tekin2012online}. These two settings can be recovered by considering $\mathbf{G}=\mathbf{I}_k$ and $\mathbf{G}=\mathbf{1}_{k \times k}$ for rested and restless settings, respectively.\footnote{We denote $\mathbf{I}_k$ the identity matrix of dimension $k$ and $\mathbf{1}_{k\times k}$ the square matrix of dimension $k$ whose  entries are all equal to $1$.} Indeed, a \emph{restless} bandit can be seen as a particular instance of \settingnameshort where all arms are triggered at each round, making them change every round independently from which action has been chosen ($\widetilde{N}_{i,t}=t$, for every $i \in [k]$). Instead, in a \emph{rested} bandit an arm changes its expected reward only when it is directly chosen ($\widetilde{N}_{i,t}=N_{i,t}$, for every $i \in [k]$).\footnote{This can be easily seen by looking at Equation~\eqref{eq:def_ntilde} considering $\mathbf{G}=\mathbf{I}_k$ and observing that the vector $\mathbf{e}_i$ selects the $i$-th element of vector $\mathbf{N}_t$.}
\end{remark}

\paragraph{Block-Diagonal Connectivity Matrix.}
We now discuss a particular case of GTBs that is interesting from both the practical and analytical point of view. Until now, we considered $\mathbf{G} \in \{ 0, 1\}^{k \times k}$ to be a general binary symmetric matrix. However, we now focus on the specific case in which $\mathbf{G}$ is a \emph{block-diagonal} connectivity matrix, \ie a matrix in which the main-diagonal blocks are square matrices of all ones, and all off-diagonal blocks are zero matrices. Formally, let $\mathbb{B}_{\widetilde{k}}\subset \{0,1\}^{k \times k}$ be the set of block-diagonal connectivity matrices with exactly $\widetilde{k} \in [k]$ distinct blocks of $1$s. We call the \emph{GTBs with block-diagonal connectivity matrix} the set of instances where it holds that $\mathbf{G} \in \mathbb{B}_{\widetilde{k}}$, for some $\widetilde{k} \leq k$. We identify with $\mathcal{C}_\mathbf{G}=\{C_{m,\mathbf{G}}\}_{m\in[\widetilde{k}]}$ the partition of $[k]$ corresponding to the diagonal blocks of $\mathbf{G}$. In graph theory, a block-diagonal connectivity matrix $\mathbf{G} \in \mathbb{B}_{\widetilde{k}}$ corresponds to a cluster graph, \ie a graph formed from the disjoint union of complete graphs or \emph{cliques}~\citep{shamir2004cluster}. We call $\mathcal{C}_\mathbf{G}$ the set of cliques and we indicate with $\widetilde{N}_{C_m,t} \coloneqq \sum_{i \in C_m} N_{i,t}$ the number of times an arm belonging to clique $C_m\in \mathcal{C}_\mathbf{G}$ has been pulled, namely the number of triggers of the clique $C_m$.

\subsection{Learning Problem}
\label{sec:problemformulation:learning}
We define $\mathcal{H}_{t}=\{(I_l, X_{I_l,l})\}_{l\in[t]}$ as the \emph{history of interactions} at a given round $t\in [T]$. We define a policy $\pi (t) $ as a function 
$\pi (t): \mathcal{H}_{t-1} \mapsto I_t $ returning the next action given the history up to that round. 
For a given instance $\boldsymbol{\nu}$ of a \settingnameshort, the performance of a policy $\pi$ is measured by means of \emph{expected cumulative reward} throughout $T$ rounds, formally:
\begin{align*}
    J_{\boldsymbol{\nu},\mathbf{G},T}(\pi)\coloneqq \mathbb{E} \left[ \sum_{t \in [T]}\mu_{I_t}(\widetilde{N}_{I_t,t})\right],
\end{align*}
where the expectation is taken over the randomness of both the environment and the policy/algorithm. A policy $\pi$ is \emph{optimal} for instance $\boldsymbol{\nu}$, a connectivity matrix $\mathbf{G}$, and time horizon $T$ if it maximizes the expected cumulative reward, formally:
$$\pi^*_{\boldsymbol{\nu},\mathbf{G},T} \in \argmax_{\pi} J_{\boldsymbol{\nu},\mathbf{G},T}(\pi).$$ We denote by $J_{\boldsymbol{\nu},\mathbf{G},T}^* = J_{\boldsymbol{\nu},\mathbf{G},T}(\pi^*_{\boldsymbol{\nu},\mathbf{G},T})$ the expected cumulative reward attained by the optimal policy.
We can now define the \emph{expected policy regret} as: 
\begin{align*}
R_{\boldsymbol{\nu},\mathbf{G},T}(\pi) = J_{\boldsymbol{\nu},\mathbf{G},T}^* - J_{\boldsymbol{\nu},\mathbf{G},T}(\pi).
\end{align*}
Therefore, our learning problem is to find a policy $\pi$ minimizing the expected policy regret $R_{\boldsymbol{\nu},\mathbf{G},T}(\pi)$. Since the optimal policy depends simultaneously on $\boldsymbol{\nu}$, $\mathbf{G}$, and $T$, from now on, we consider an instance of the \settingnameshort problem the triple $(\boldsymbol{\nu},\mathbf{G},T)$, instead of the reward distributions $\boldsymbol{\nu}$ only. 

\begin{remark}[On the Chosen Notion of Regret]\label{remark:policyregret} 
In GTBs, we consider a notion of \emph{policy} regret~\citep{DekelTA12}. Indeed, in this setting, diverging from the optimal sequence of actions influences not only instantaneous regret but also leads to a suboptimal history, implying future regret even when returning to an optimal policy from there on. This notion of regret, which shares similarities with the one of reinforcement learning, is more challenging to optimize.
\end{remark}

\section{Rising Graph-Triggered Bandits}
\label{sec:rising}
Among the various types of restless and rested bandits available in the literature, in this section, we focus on \emph{Rising Bandits}~\citep{heidari2016tight,metelli2022stochastic}. We first introduce the assumption of the rising setting and some useful quantities. Then, we discuss the optimality in this setting (Section~\ref{sec:optimality}). Subsequently, we discuss the regret minimization problem for both the deterministic (Section~\ref{sec:det}) and stochastic (Section~\ref{sec:alg_g_blocchi}) scenarios.

Rising bandits are a specific class of MABs in which the expected reward of each arm evolves in a non-decreasing and concave manner. The following assumption formalizes such behavior. 
\begin{ass}[Non-decreasing and Concave Payoffs]\label{ass:rising}
Let $\boldsymbol{\nu}$ be an instance of a rising bandit, then, defining $\gamma_i(n) \! \coloneqq \! \mu_i(n+1) \! - \! \mu_i(n)$ for every $i\in [k]$ and $n \in [T]$, it holds:
\begin{align*}
    &\text{Non-decreasing:}~~~~~ \; \gamma_i(n) \ge 0, \\ 
    &\text{Concave:}~~~~~ \! \gamma_i(n-1) \ge \gamma_i(n).
\end{align*}
\end{ass}
The two parts of this assumption allow us to provide theoretical guarantees in both the restless and rested settings. Such guarantees cannot be provided without the concavity assumption~\citep[see Theorem~4.2 of][]{metelli2022stochastic}. We call \emph{Rising GTBs}, the instances of GTBs in which the expected rewards fulfill Assumption~\ref{ass:rising}.

\paragraph{Instance Characterization.}
Assumption~\ref{ass:rising} ensures sufficient structure on the problem to allow for algorithms with provably strong theoretical guarantees. 
In this scenario, given an instance $\boldsymbol{\nu}$, we define the \emph{total increment} as:
\begin{align*}
\Upsilon_{\boldsymbol{\nu}}(M, q) \coloneqq \sum_{t\in [M-1]} \max_{i\in[k]} \gamma_i(t)^q,
\end{align*}
where $M \in \mathbb{N}$ and $q \in [0,1]$.
This quantity figures in the (instance-dependent) analysis of algorithms and characterizes the difficulty of learning in instance $\boldsymbol{\nu}$.

\subsection{Optimality in Rising GTBs}
\label{sec:optimality}
In this part, we discuss the notion of \emph{optimality} for our learning problem. 
We first characterize the complexity of finding the optimal policy followed by the clairvoyant when both the expected values and the matrix $\mathbf{G}$ are \emph{known}.

\begin{restatable}[Complexity of Finding the Optimal Policy in Rising GTBs]{thr}{theoremHardness}
\label{thr:hardness}
Computing the optimal policy in Rising GTBs with general matrices $\mathbf{G}$ is NP-hard.
\end{restatable}
This theorem follows from a reduction to the NP-hard problem of determining if a large clique in a given graph exists~\citep{karp1972reducibility}. Intuitively, given a graph $(V,E)$, we build an instance in which the cumulative reward is maximum only if the learner plays a sequence of arms that are associated with vertices in a clique.
Theorem~\ref{thr:hardness} implies that the class of problems of Rising GTBs is computationally harder than all restless bandits and rested rising bandits, for which the optimal policy can be computed in polynomial time~\citep{heidari2016tight}. Moreover, the optimal policy does not admit a simple closed-form representation. Thus, in general, the optimal policy cannot be reduced to a greedy one or to a fixed-arm policy. The result highlights how this definition of optimal policy is closer to that of MDPs rather than that of standard bandit settings. 

We now show how, for the special case of Rising GTBs with block-diagonal connectivity matrices, the optimal policy can be efficiently computed and admits a closed-form solution. 
\begin{restatable}[Optimal Policy in Rising GTBs with Block-Diagonal CM]{thr}{theoremBlockOptimal}\label{thr:optimalblocks} 
For any instance $(\boldsymbol{\nu}, \mathbf{G}, T)$ of Rising GTBs with $\mathbf{G} \in \mathbb{B}_{\widetilde{k}}$, the optimal policy $\pi_{\boldsymbol{\nu}, \mathbf{G}, T}^* \in \argmax_{\pi} J_{\boldsymbol{\nu}, \mathbf{G}, T} (\pi)$ is given by:
\begin{equation*}
    \pi_{\boldsymbol{\nu}, \mathbf{G}, T}^*(t) \in \argmax_{j \in C_{\boldsymbol{\nu},\mathbf{G}, T}^*} \mu_j(t), \qquad \forall t \in [T],
\end{equation*}
where $C_{\boldsymbol{\nu},\mathbf{G},T}^*$ is the \quotes{best} cumulative reward clique: $$C_{\boldsymbol{\nu},\mathbf{G},T}^* \in \argmax_{C \in \mathcal{C}_\mathbf{G}} \sum_{t\in[T]} \max_{j \in C} \mu_j(t).$$
\end{restatable}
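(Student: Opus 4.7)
The plan is to exploit two features of block-diagonal connectivity matrices: (i) actions in different cliques do not interact, so the expected cumulative reward decouples into a sum over cliques; and (ii) within a single clique, all arms share the same trigger count at every round, so the within-clique optimization is pointwise. These two observations reduce the proof to a single substantive between-clique budget-allocation argument. Only the non-decreasing half of Assumption~\ref{ass:rising} is actually needed.

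First I would carry out the \emph{within-clique} reduction. For any policy $\pi$, let $T_m$ be the (possibly random) number of rounds in which $\pi$ pulls an arm of clique $C_m$, so that $\sum_{m \in [\widetilde{k}]} T_m = T$. Because $\mathbf{G}$ is block-diagonal, $I_t \in C_m$ implies $\widetilde{N}_{I_t,t} = \widetilde{N}_{C_m, t}$ via~\eqref{eq:def_ntilde}; equivalently, the $n$-th pull inside $C_m$ sees trigger count exactly $n$, regardless of which arm of $C_m$ is chosen at that step. Setting $f_m(n) \coloneqq \max_{j \in C_m} \mu_j(n)$, $F_m(s) \coloneqq \sum_{n=1}^{s} f_m(n)$, and denoting by $I_m^{(n)}$ the arm chosen for the $n$-th pull inside $C_m$, we obtain
\begin{align*}
J_{\boldsymbol{\nu},\mathbf{G},T}(\pi) \;=\; \mathbb{E}\Bigl[\sum_{m \in [\widetilde{k}]} \sum_{n=1}^{T_m} \mu_{I_m^{(n)}}(n)\Bigr] \;\le\; \mathbb{E}\Bigl[\sum_{m \in [\widetilde{k}]} F_m(T_m)\Bigr],
\end{align*}
with equality whenever, inside each $C_m$, the internal $n$-th pull is made by some $j \in \argmax_{j \in C_m} \mu_j(n)$.

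The core step is the \emph{between-clique} allocation. Under the non-decreasing part of Assumption~\ref{ass:rising}, each $f_m$ is non-decreasing, hence $F_m$ has non-decreasing increments and is therefore discretely convex with $F_m(0)=0$. The chord inequality through $0$ and $T$ then yields $F_m(T_m)/T_m \le F_m(T)/T$, i.e., $F_m(T_m) \le (T_m/T)\, F_m(T)$. Summing over cliques and using $\sum_m T_m = T$,
\begin{align*}
\sum_{m \in [\widetilde{k}]} F_m(T_m) \;\le\; \sum_{m \in [\widetilde{k}]} \frac{T_m}{T}\, F_m(T) \;\le\; \max_{m \in [\widetilde{k}]} F_m(T) \;=\; F_{C^*_{\boldsymbol{\nu},\mathbf{G},T}}(T).
\end{align*}
Achievability is immediate: $\pi^*_{\boldsymbol{\nu},\mathbf{G},T}$ stays inside $C^*_{\boldsymbol{\nu},\mathbf{G},T}$ throughout, so at round $t$ the trigger count equals $t$ and the collected reward is $f_{C^*_{\boldsymbol{\nu},\mathbf{G},T}}(t)$; summing over $t \in [T]$ matches the upper bound $F_{C^*_{\boldsymbol{\nu},\mathbf{G},T}}(T)$.

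The main obstacle I anticipate is the between-clique step, since a priori one might expect that curvature in the per-arm rewards could favor splitting the budget across several cliques. The key observation is that what governs the allocation is the \emph{cumulative} reward $F_m$, not the per-round $f_m$: since $f_m$ is non-decreasing, $F_m$ is discretely convex, which forces $\sum_m F_m(T_m)$, maximized over the simplex $\{\sum_m T_m = T,\, T_m \ge 0\}$, to attain its optimum at a vertex. The best vertex is $C^*_{\boldsymbol{\nu},\mathbf{G},T}$ by its defining maximum, and the corresponding optimal within-clique rule is $\argmax_{j \in C^*_{\boldsymbol{\nu},\mathbf{G},T}} \mu_j(t)$ at round $t$.
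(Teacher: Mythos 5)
Your proof is correct, and its skeleton coincides with the paper's: both replace each arm of a clique by the pointwise maximum $\max_{j\in C_m}\mu_j(\cdot)$ (your $f_m$), use block-diagonality to see that the $n$-th pull inside a clique is made at trigger count exactly $n$, and thereby reduce the problem to allocating the budget $T$ among cliques, i.e., to a $\widetilde{k}$-armed rested rising bandit whose arms are the cliques. Where you diverge is in how that between-clique step is settled: the paper imports Proposition~1 of~\citep{heidari2016tight} to conclude that committing to the single clique with the largest cumulative reward is optimal, whereas you prove this from scratch by observing that $F_m(s)=\sum_{n\le s}\max_{j\in C_m}\mu_j(n)$ has non-decreasing increments, hence is discretely convex with $F_m(0)=0$, so the chord inequality $F_m(T_m)\le (T_m/T)\,F_m(T)$ pushes the maximum of $\sum_m F_m(T_m)$ over $\sum_m T_m=T$ onto a vertex of the simplex. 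This buys a self-contained argument and makes explicit something the paper's citation hides: only the non-decreasing half of Assumption~\ref{ass:rising} is needed to characterize the optimal policy (concavity matters for the learning results, not for optimality). Your handling of the remaining details is also sound: the upper bound is pathwise, so it covers adaptive and randomized policies before taking expectations, the case $T_m=0$ is trivial since $F_m(0)=0$, and the greedy policy inside $C^*_{\boldsymbol{\nu},\mathbf{G},T}$ attains $F_{C^*_{\boldsymbol{\nu},\mathbf{G},T}}(T)$ exactly, matching the bound.
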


\begin{proof}
For each clique $C_m \in \mathcal{C}_\mathbf{G}$, we substitute the reward function of every arm $i \in C_m$ with $\mu_i^* (t) = \max_{i\in C_m} \mu_{i}(t)$, for every $ t\in [T]$. Now, since all arms sharing the same clique have the same reward function, our instance is equivalent to a $\widetilde{k}$-armed bandit problem, where $\widetilde{k}$ is the number of cliques. Since arms in different cliques are not connected, this corresponds to a rested bandit problem, and we use Proposition 1 of~\citep{heidari2016tight} to get that the optimal policy would only pull the best action in terms of cumulative reward at the end of the time horizon $T$. To conclude the proof, we remark that playing greedily inside a clique corresponds exactly to playing on the reward function defined above, which dominates the initial problem, and so the maximum cumulative reward is exactly the one attained in the problem with $\widetilde{k}$ arms.
\end{proof}

This result characterizes the optimal policy when the graph linking the actions is composed only of cliques. In particular, the clairvoyant would play a greedy policy but always inside the same predefined subset of arms composing a clique. Naturally, the chosen clique would be the one having the maximum cumulative reward at the end of the trial. We point out how this policy \quotes{combines} the optimal policies from both rising rested bandits (corresponding to always playing the arm with the highest \emph{cumulative} reward), and the optimal policy from rising restless bandits (the \emph{greedy} policy, corresponding to always playing the arm with the highest \emph{instantaneous} reward).

\subsection{Deterministic Rising GTBs}\label{sec:det}
In this part, we propose two novel algorithms to learn in \emph{deterministic} Rising GTBs, \ie all instances of Rising GTBs where $\sigma=0$. More in detail, in Section~\ref{sec:alg:det:blocchi}, we discuss the block-diagonal CM case, while in Section~\ref{sec:algdet:general}, we discuss the general scenario. The deterministic scenario allows for a better understanding of the complex structure of this setting since it \emph{ignores} the statistical learning problem.

We start by introducing a biased estimator which, for every arm $i\in [k]$, propagates its reward function to the current time $t$ by estimating the first derivative using the last two observations:
\begin{equation}
    \bar{\mu}_i(t) \coloneqq \mu (t_{i,N_{i,t-1}}^I) + (t-t_{i,N_{i,t-1}}^I) \frac{\mu(t_{i,N_{i,t-1}}^I)-\mu(t_{i,N_{i,t-1}-1}^I)}{t_{i,N_{i,t-1}}^I-t_{i,N_{i,t-1}-1}^I}.
    \label{eq:estimator_det}
\end{equation}
This estimator relies on the concept of \emph{internal time}. Internal times are particularly useful since they can separate the bias into two components:
\begin{align*}
    t-t_{i,N_{i,t-1}}^I = \underbrace{(t - t_{i,N_{i,t}}^I)}_{\text{\textcolor{vibrantRed}{(\texttt{A})}}} + \underbrace{(t_{i,N_{i,t}}^I-t_{i,N_{i,t-1}}^I)}_{\text{\textcolor{vibrantBlue}{(\texttt{B})}}}.
\end{align*}
As we will see in Section~\ref{sec:alg:det:blocchi}, this decomposition assumes a particular meaning in instances where $\mathbf{G}\in \mathbb{B}_{\widetilde{k}}$, where \textcolor{vibrantRed}{(\texttt{A})} represents the rested component of the bias, since $t_{i,N_{i,t}}^I = \widetilde{N}_{C_m, t_{i,N_{i,t}}}$ making it equivalent to the bias of a rested bandit where cliques are the arms; and \textcolor{vibrantBlue}{(\texttt{B})} represents the restless component of the bias, since from arm $i$ perspective $t_{i,N_{i,t}}^I = \widetilde{N}_{i, t}$ can be interpreted as the current time inside the clique. 

\subsubsection{Algorithm for Deterministic Rising GTBs with Block-Diagonal CMs}
\label{sec:alg:det:blocchi}

We now introduce \texttt{Deterministic Rising Block-Diagonal Upper Bound} (\algnameblock), an optimistic anytime regret minimization algorithm for deterministic Rising GTBs with block-diagonal connectivity matrix, whose pseudocode is provided in Algorithm~\ref{alg:alg_block_det}. The algorithm takes as input the connectivity matrix $\mathbf{G}$ and employs the estimator presented in Equation~\eqref{eq:estimator_det}. 
Then, after having initialized the counters of the number of pulls, it starts the interaction with the environment. At each round $t \in [T]$, it estimates (line~\ref{blocchi_det:line:updatemean}) the $\bar{\mu}_i(t)$ for every $i\in [k]$ as in Equation~\eqref{eq:estimator_det} and plays greedy according to it (line~\ref{blocchi_det:line:play}).\footnote{At the beginning, the algorithm is required to play every arm $2$ times in a round-robin fashion in order to be able to compute $\bar{\mu}_i(t)$.}

\RestyleAlgo{ruled}
\LinesNumbered
\begin{algorithm}[t!]
\caption{\algnameblock.}\label{alg:alg_block_det}
\SetKwInOut{Input}{Input}
\small
\Input{Connectivity matrix $\mathbf{G} \in \mathbb{B}_{\widetilde{k}}$}

\For{$t \in [T]$}{

Compute $\bar{\mu}_{i}(t)$ as in Equation~\eqref{eq:estimator_det} \label{blocchi_det:line:updatemean}, $ \ \forall i \in [k]$

Select $ I_t \in \argmax_{i \in [k]} \bar{\mu}_{i}(t)$ 

Play $I_t$ and observe $\mu_{I_t}(\widetilde{N}_{I_t, t})$ \label{blocchi_det:line:play}

}

\end{algorithm}

The following result provides the regret bound of \algnameblock, highlighting the impact of the graph topology.
\begin{restatable}[\algnameblock Regret in Det.\ Rising GTBs with Block-Diagonal CMs]{thr}{TheoremdeterministicBlock}
\label{thr:det_block_bound}
    \phantom{a} Let $(\boldsymbol{\nu}, \mathbf{G}, T)$ be an instance of Rising GTB, where $\mathbf{G} \in \mathbb{B}_{\widetilde{k}}$ and $\sigma=0$. Then, \emph{\algnameblock} suffers a regret bounded by:
\begin{align*}
R_{\boldsymbol{\nu},\mathbf{G},T}({\text{\emph{\algnameblockshort}}}) \leq \widetilde{\mathcal{O}} \Bigg( \inf_{q \in [0,1]} \Bigg\{ &  \underbrace{T^q\sum_{C_m \in \mathcal{C}} |C_m| \Upsilon_{\boldsymbol{\nu}}\left(\left\lceil \frac{\widetilde{N}_{C_m,T}}{|C_m|}\right\rceil,q\right)}_{\text{\emph{\textcolor{vibrantRed}{(\texttt{A}) Rested Bias Contribution}}}} + \\ & + \underbrace{\sum_{C_m \in \mathcal{C}} |C_m| \widetilde{N}_{C_m,T}^\frac{q}{1+q}\Upsilon_{\boldsymbol{\nu}}\left(\left\lceil \frac{\widetilde{N}_{C_m,T}}{|C_m|}\right\rceil,q\right)^\frac{1}{1+q}}_{\text{\emph{\textcolor{vibrantBlue}{(\texttt{B}) Restless Bias Contribution}}}}\Bigg\}\Bigg).
\end{align*}
\end{restatable}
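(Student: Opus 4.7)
The plan is to adapt the optimism-based analysis of rising bandits~\citep{metelli2022stochastic} to exploit the clique structure of $\mathbf{G}$. First, I would prove that the estimator of Equation~\eqref{eq:estimator_det} is optimistic, i.e., $\bar{\mu}_i(t) \ge \mu_i(\widetilde{N}_{i,t})$ for every $i \in [k]$ and $t \in [T]$. This follows because, by Assumption~\ref{ass:rising}, the slopes $\gamma_i(\cdot)$ are non-increasing, so the secant slope over the last two observed internal times dominates every later slope; the linear extrapolation therefore upper bounds $\mu_i(t)$, and hence $\mu_i(\widetilde{N}_{i,t})$ since $t \ge \widetilde{N}_{i,t}$ and $\mu_i$ is non-decreasing. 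Combining optimism with Theorem~\ref{thr:optimalblocks} (under which the per-round optimum equals $\max_{j \in C^*}\mu_j(t)$, because $\pi^*$ stays in the best clique $C^*$, making $\widetilde{N}_{j,t}=t$ there) and the greedy rule of \algnameblockshort yields the decomposition
\begin{align*}
R_{\boldsymbol{\nu},\mathbf{G},T}(\algnameblockshort) \le \sum_{t \in [T]} \bigl[\bar{\mu}_{I_t}(t) - \mu_{I_t}(\widetilde{N}_{I_t,t})\bigr].
\end{align*}

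Second, I would bound each per-round bias by the secant slope times the extrapolation gap $t - t^I_{I_t,N_{I_t,t-1}}$, which I split along the hint already provided in the text:
\begin{align*}
t - t^I_{I_t,N_{I_t,t-1}} = \underbrace{(t - \widetilde{N}_{I_t,t})}_{\textcolor{vibrantRed}{(\texttt{A})}} + \underbrace{(\widetilde{N}_{I_t,t} - t^I_{I_t,N_{I_t,t-1}})}_{\textcolor{vibrantBlue}{(\texttt{B})}}.
\end{align*}
Thanks to the block-diagonal structure, for $I_t \in C_m$ we have $\widetilde{N}_{I_t,t}=\widetilde{N}_{C_m,t}$, so~(A) equals the number of rounds up to $t$ spent outside $C_m$ (a rested-type delay seen from $C_m$'s viewpoint), while~(B) equals the number of pulls of arms in $C_m$ between the last two pulls of $I_t$ (a restless-type delay internal to $C_m$).

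Third, I would sum these two contributions clique by clique. Fix $C_m$; within the rounds in which $I_t\in C_m$, the internal times of the relevant arms are all at most $\lceil \widetilde{N}_{C_m,T}/|C_m|\rceil$, the average per-arm pull count. I would then invoke the \quotes{threshold trick} of~\citep{metelli2022stochastic}: for each $q\in[0,1]$, the first few pulls of every arm are absorbed by a trivial unit bound, while the later ones are controlled through $\sum_n \gamma_i(n)^q \le \Upsilon_{\boldsymbol{\nu}}(M,q)$ with $M=\lceil \widetilde{N}_{C_m,T}/|C_m|\rceil$. The factor $T^q$ in~(A) arises because the out-of-clique delay per round is at most $T$; the factor $\widetilde{N}_{C_m,T}^{q/(1+q)}\,\Upsilon^{1/(1+q)}$ in~(B) comes from telescoping the within-clique gaps to $\widetilde{N}_{C_m,T}$ and applying H\"older's inequality. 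Taking the infimum over $q\in[0,1]$ yields the stated expression.

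The main obstacle is the rested contribution~(A): summing slopes against the out-of-clique delays $t-\widetilde{N}_{C_m,t}$ requires a careful pairing argument guaranteeing that whenever this delay is large, the slope $\gamma_{I_t}$ has already entered its \quotes{late} regime, so the product remains controlled. The restless contribution~(B) is comparatively routine, as it essentially reduces, inside each clique, to the deterministic rested-rising analysis already carried out in~\citep{metelli2022stochastic}.
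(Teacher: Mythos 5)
Your proposal follows essentially the same route as the paper's proof: establish optimism and a per-round bias bound for the estimator of Equation~\eqref{eq:estimator_det} (the paper's Lemma~\ref{lemma:lemmaBiasdet}), reduce the regret to the cumulative bias via the greedy rule and Theorem~\ref{thr:optimalblocks}, split the extrapolation gap through the internal time $t^I_{I_t,N_{I_t,t}}=\widetilde{N}_{I_t,t}$ into the rested term \textcolor{vibrantRed}{(\texttt{A})} and restless term \textcolor{vibrantBlue}{(\texttt{B})}, and bound each clique-by-clique with the threshold/H\"older and worst-case-allocation arguments of~\citet{metelli2022stochastic}. The only cosmetic differences are that the paper clips each per-round term by $\min\{1,\cdot\}$ to absorb the initialization (your \quotes{trivial unit bound} plays the same role) and that the argument $\lceil \widetilde{N}_{C_m,T}/|C_m|\rceil$ arises from the worst-case allocation of pull counts within a clique rather than from a bound on the internal times themselves, as your wording slightly suggests.
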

\begin{proofsketch}
We can bound the instantaneous regret of \algnameblockshort with the linear bias induced by the estimator:
\begin{align*}
    \mu_{i_t^*}(t) \pm \bar{\mu}_{I_t}(t) - \mu_{I_t}(\widetilde{N}_{I_t,t}) &\le \sum_{t=1}^T\min\{1,(t-t_{I_t,N_{I_t,t-1}}^I)\gamma_{I_t}(t_{I_t,N_{I_t,t-1}-1}^I)\}.
\end{align*}
Then, we can separate the rested contribution from the restless one by splitting the bias:
\begin{align*}
    &\le \sum_{t=1}^T \min\{1,(t - t_{I_t, N_{I_t,t}} ^I)\gamma_{I_t}(t_{I_t,N_{I_t,t-1}-1}^I)\} +\\
    &\qquad + \sum_{t=1}\min\{1,(t_{I_t, N_{I_t,t}} ^I-t_{I_t,N_{I_t,t-1}}^I)\gamma_{I_t}(t_{I_t,N_{I_t,t-1}-1}^I)\}  \label{:tbr:002}
\end{align*}
These two terms represent the rested and the restless contribution to the regret, and we can bound them using similar techniques as in~\citep{metelli2022stochastic}.
\end{proofsketch}
In the bound in Theorem \ref{thr:det_block_bound} (as well as in the bounds in the next sections), $q$ represents the trade-off between being worst-case and exploiting \textit{good}/near-stationary instances (\ie the ones in which the growth $\Upsilon_T$ is strongly sublinear in $T$). A nice feature of our bounds is that they hold for every value of $q$, including the most favorable one, in which the trade-off is optimized. As the shape of $\Upsilon_T$ depends on $q$ itself, it is not possible to express the optimal $q$ minimizing the bound in closed form. Thus, we decided to report the bounds in this general form to highlight their generality. We report the result as a function of the number of triggers $\widetilde{N}_{C_m,T}$ of the cliques in order to better discuss the properties of the graph. However, this dependence can be removed by simply observing $\widetilde{N}_{C_m,T} \le T$. This choice allows us to have an interesting discussion on the nature of this result \wrt the graph structure. We observe that we can separate two contributions to the regret: one coming from the rested behavior (part \textcolor{vibrantRed}{(\texttt{A})} of the bound) determined by the need for identifying the best clique, and the other from the restless behavior needed for identifying the best arm inside the clique (part \textcolor{vibrantBlue}{(\texttt{B})} of the bound). If we compare this result to the bounds in Theorems~4.4 and~5.2 of~\citep{metelli2022stochastic}, we can notice how the shapes of the two contributions correspond. We also remark that, in the two corner cases, \ie rested and restless bandits, the regret bound is actually smaller and corresponds exactly to the bounds presented in~\citep{metelli2022stochastic}, even though this is not immediately visible in Theorem~\ref{thr:det_block_bound} because of a mathematical artifact of the proof.\footnote{More details can be found in Remark~\ref{rem:coincidence_rising} (Appendix~\ref{apx:proofs}).} In the bound, the graph topology emerges by means of cliques' sizes, which act as multiplicative constants. The major consequence is that having fewer cliques leads, in general, to a better bound. As intuition suggests, the rested scenario can lead to a worst-case bound in the first component (which is, by the way, the one having the greater order in $T$), and this can be seen by a simple application of Jensen's Inequality, and by noticing that $\Upsilon_{\boldsymbol{\nu}}$ is a concave function:
$$
\sum_{C_m \in \mathcal{C}} |C_m| \Upsilon_{\boldsymbol{\nu}}\left(\left\lceil \frac{\widetilde{N}_{C_m,T}}{|C_m|}\right\rceil,q\right) \le k\Upsilon_{\boldsymbol{\nu}}\left(\left\lceil \frac{T}{k }\right\rceil,q\right).
$$
We remark that in the two corner cases, one of the two contributions vanishes, even though it cannot be directly seen in Theorem~\ref{thr:det_block_bound}. However, since the restless regret has a better order than the rested one, graphs with fewer cliques may lead, in general, to better bounds. Unfortunately, to precisely quantify this property, one would need to know the exact shape of $\Upsilon_{\boldsymbol{\nu}}$ and to solve a difficult optimization problem.

\subsubsection{Algorithm for Deterministic Rising GTBs with General Matrices}
\label{sec:algdet:general}
After having studied the scenario of block-diagonal connectivity matrices, we now consider the case in which $\mathbf{G}$ can be arbitrary. 
Before introducing the algorithm, we need to define the concept of \emph{block sub-matrix}.

\begin{restatable}[Block Sub-matrix]{defi}{blocksubmatrix}
\label{def:block_submat}
    Let $\mathbf{G} \in \{0,1\}^{k \times k}$ be a general matrix, a block-diagonal matrix $\mathbf{G}^L \in \mathbb{B}_{\widetilde{k}}$ is a \emph{sub-matrix} of $\mathbf{G}$ if it satisfies:
    \begin{equation}
    \label{eq:sub-mat}
        \mathbf{G}_{i,j} - \mathbf{G}_{i,j}^L \ge 0, ~~~\forall i,j \in [k].
    \end{equation}
    Moreover, we say that $\bar{\mathbf{G}}^L \in \mathbb{B}_{\widetilde{k}}$ is \emph{maximal} if it also satisfies:
    \begin{equation*}
        \bar{\mathbf{G}}^L \in \argmin_{\mathbf{G}^L \text{\emph{ \ satisfying Eq.~\eqref{eq:sub-mat}}}} \left| \mathcal{C}_{\mathbf{G}^L} \right| .
    \end{equation*}
\end{restatable}
Informally, $\mathbf{G}^L \in \mathbb{B}_{\widetilde{k}}$ is a sub-matrix of $\mathbf{G}$ if its graph can be obtained by only removing $1$s from $\mathbf{G}$. Finally, a maximal sub-matrix has the least number of cliques. Note that such a maximal sub-matrix is, in general, not unique. 

For this algorithm, we need to introduce a novel estimator, based on sub-matrices, whose definition recalls the one of Equation~\eqref{eq:estimator_det}:
\begin{align}
\bar{\mu}_i^L(t)  \coloneqq \mu (t_{i,N_{i,t-1}}^{I,L}) + (t-t_{i,N_{i,t-1}}^{I,L})\frac{\mu(t_{i,N_{i,t-1}}^{I,L})-\mu(t_{i,N_{i,t-1}-1}^{I,L})}{t_{i,N_{i,t-1}}^{I,L}-t_{i,N_{i,t-1}-1}^{I,L}},
\label{eq:estimator_det_gen}
\end{align}
where $t_{i,l}^{I,L} \coloneqq \mathbf{e}_i^\top (\bar{\mathbf{G}}^L)^\top \mathbf{N}_{t_{i,l}}$ is the internal time \wrt a maximal sub-matrix $\bar{\mathbf{G}}^L$ of the actual matrix $\mathbf{G}$. Given this new estimator, we can generalize Algorithm~\ref{alg:alg_block_det} to attain comparable performance even for a general connectivity matrix $\mathbf{G}$. We introduce a generalization of \algnameblock called \texttt{Deterministic Rising General Upper Bound} (\algnamegendet), whose pseudocode is provided in Algorithm~\ref{alg:alg_gen_det}. The algorithm takes as input a generic matrix $\mathbf{G}$ and computes $\bar{\mathbf{G}}^L$. Then, the algorithm interacts with the environment as before and uses the estimator defined in Equation~\eqref{eq:estimator_det_gen}. In other words, \algnamegendet pretends to be interacting with a bandit with a graph defined by $\bar{\mathbf{G}}^L$. The following result characterizes the performance of \algnamegendet.

\RestyleAlgo{ruled}
\LinesNumbered
\begin{algorithm}[t!]
\caption{\algnamegendet.}\label{alg:alg_gen_det}
\SetKwInOut{Input}{Input}
\small
\Input{Connectivity matrix $\mathbf{G}$}

Compute maximal sub-matrix $\bar{\mathbf{G}}^L$ from $\mathbf{G}$

\For{$t \in [T]$}{

Compute $\bar{\mu}_{i}^L(t)$ as in Equation~\eqref{eq:estimator_det_gen}, $\ \forall i \in [k]$

Select $ I_t \in \argmax_{i \in [k]} \bar{\mu}_{i}^L(t)$

Play $I_t$ and observe $\mu_{I_t}(\widetilde{N}_{I_t, t})$ \label{generale_det:line:play} 

}

\end{algorithm}

\begin{restatable}[\algnamegendet Regret in Det.\ Rising GTBs with General Matrices]{thr}{generaldeterministicBound}
\label{thr:regret_det_gen}
Let $(\boldsymbol{\nu}, \mathbf{G}, T)$ be an instance of Rising GTB, where $\mathbf{G} \in \{0,1\}^{k\times k}$ and $\sigma=0$. Then, \emph{\algnamegendet} suffers a regret bounded by:
\begin{align*}
R&{}_{\boldsymbol{\nu},\mathbf{G},T}({\text{\emph{\algnamegendetshort}}}) \\ & \!\!\! \leq \widetilde{\mathcal{O}}\Bigg( \min_{q \in [0,1]} \Bigg\{T^q \!\!\!\!\! \sum_{C_m^L \in \mathcal{C}_{\bar{\mathbf{G}}^L}} \!\!\!\! |C_m^L| \Upsilon_{\boldsymbol{\nu}}\left(\left\lceil \frac{\widetilde{N}_{C_m^L,T}}{|C_m^L|}\right\rceil,q\right) + \!\!\!\!\!\! \sum_{C_m^L \in \mathcal{C}_{\bar{\mathbf{G}}^L}} \!\!\!\!\! |C_m^L| \widetilde{N}_{C_m^L,T}^\frac{q}{1+q}\Upsilon_{\boldsymbol{\nu}}\left(\left\lceil \frac{\widetilde{N}_{C_m^L,T}}{|C_m^L|}\right\rceil,q\right)^{\!\! \frac{1}{1+q}}\Bigg\} \Bigg),
\end{align*}
where $\bar{\mathbf{G}}^L \in \mathbb{B}_{\widetilde{k}}$ is a maximal sub-matrix of $\mathbf{G}$.
\end{restatable}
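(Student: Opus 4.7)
The plan is to reduce the general-matrix case to the block-diagonal analysis of Theorem~\ref{thr:det_block_bound} by exploiting the fact that \algnamegendet is explicitly built to operate as if the underlying graph were the maximal sub-matrix $\bar{\mathbf{G}}^L \in \mathbb{B}_{\widetilde{k}}$. First I would establish that the estimator $\bar{\mu}_i^L(t)$ from Equation~\eqref{eq:estimator_det_gen} is a valid upper confidence bound on $\mu_i(\widetilde{N}_{i,t})$, where $\widetilde{N}_{i,t}$ counts triggers under the true graph $\mathbf{G}$. The key ingredients are (i) $t^{I,L}_{i,n} \le t^I_{i,n}$ for every pull $n$ (by the sub-matrix inequality in Definition~\ref{def:block_submat}), (ii) monotonicity giving $\mu_i(\widetilde{N}_{i,t}) \le \mu_i(t)$ since $\widetilde{N}_{i,t} \le t$, and (iii) concavity, which ensures that the linear extension through the last two observations $\bigl(t^{I,L}_{i,N_{i,t-1}-1},\mu_i(t^I_{i,N_{i,t-1}-1})\bigr)$ and $\bigl(t^{I,L}_{i,N_{i,t-1}},\mu_i(t^I_{i,N_{i,t-1}})\bigr)$ overshoots $\mu_i$ at any larger argument. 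The $y$-coordinates, being observed at true internal times $t^I_{i,n} \ge t^{I,L}_{i,n}$, are already shifted upward, which only reinforces the overshoot.

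Granted this optimism, the greedy rule $I_t \in \argmax_i \bar{\mu}_i^L(t)$ implies that the per-round regret is bounded by the estimator bias $\bar{\mu}_{I_t}^L(t) - \mu_{I_t}(\widetilde{N}_{I_t,t})$. I would then inject the internal-time decomposition
\[ t - t^{I,L}_{i,N_{i,t-1}} \;=\; \underbrace{\bigl(t - t^{I,L}_{i,N_{i,t}}\bigr)}_{\text{(\texttt{A}) rested part}} \;+\; \underbrace{\bigl(t^{I,L}_{i,N_{i,t}} - t^{I,L}_{i,N_{i,t-1}}\bigr)}_{\text{(\texttt{B}) restless part}}, \]
now interpreted relative to $\bar{\mathbf{G}}^L$: part (\texttt{A}) corresponds to the time spent across cliques of $\bar{\mathbf{G}}^L$ and part (\texttt{B}) to the time within the clique containing arm $i$. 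Aggregating the two contributions round by round, clique by clique over $\mathcal{C}_{\bar{\mathbf{G}}^L}$ via the concavity chaining on the single-step increments $\gamma_i(n)$, and finally optimizing over $q \in [0,1]$ reproduces the announced two-term bound with the sub-matrix cliques $|C_m^L|$ and trigger counts $\widetilde{N}_{C_m^L,T}$ in the roles that $|C_m|$ and $\widetilde{N}_{C_m,T}$ play in Theorem~\ref{thr:det_block_bound}.

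The main obstacle is the UCB step: the extrapolation uses sub-matrix times $t^{I,L}$ on the $x$-axis while the $y$-values are rewards at true triggers $t^I$, so the naive concavity-overshoot argument does not apply directly. The remedy is the chain $\mu_i(\widetilde{N}_{i,t}) \le \mu_i(t) \le \bar{\mu}_i^L(t)$: monotonicity shrinks the target down to $\mu_i(t)$, and the upward shift $\mu_i(t^I_{i,n}) \ge \mu_i(t^{I,L}_{i,n})$ of the data points guarantees that the linear extension already dominates $\mu_i$ on the sub-matrix axis, hence also at $x = t$. A secondary subtlety is that the optimal policy on $\mathbf{G}$ is generally unknown and NP-hard to compute (Theorem~\ref{thr:hardness}); however, the per-round inequality $\bar{\mu}_{I_t}^L(t) \ge \mu_j(\widetilde{N}_{j,t})$ valid for every $j$ and every achievable history sidesteps any explicit knowledge of $\pi^*_{\boldsymbol{\nu},\mathbf{G},T}$ and reduces the regret to a pure bias sum, after which the clique-by-clique accounting closes the argument exactly as in the block-diagonal case.
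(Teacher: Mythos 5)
Your skeleton is the paper's: reduce to the block-diagonal analysis of Theorem~\ref{thr:det_block_bound} run on the cliques of a maximal sub-matrix $\bar{\mathbf{G}}^L$, establish that the sub-matrix estimator is optimistic, and then bound the cumulative bias via the rested/restless split and the $\Upsilon_{\boldsymbol{\nu}}$ machinery. Your optimism step is in fact correct and slightly different from the paper's: since $t_{i,n}^{I}-t_{i,n-1}^{I}\ge t_{i,n}^{I,L}-t_{i,n-1}^{I,L}$ (the extra-trigger count $t_{i,n}^{I}-t_{i,n}^{I,L}$ is non-decreasing in $n$ by Definition~\ref{def:block_submat}), the slope of your \emph{hybrid} extrapolation (observed rewards $\mu_i(t_{i,n}^{I})$ plotted against sub-matrix abscissae $t_{i,n}^{I,L}$) dominates $\gamma_i(t_{i,n}^{I}-1)$, and with $\widetilde{N}_{i,t}-t_{i,n}^{I}\le t-t_{i,n}^{I,L}$ one indeed gets $\bar{\mu}_i^L(t)\ge\mu_i(\widetilde{N}_{i,t})$. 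The paper proves optimism differently, by showing the extrapolation map $f_i(t;x,y)$ is non-increasing in both arguments and invoking $t_{i,n}^{I,L}\le t_{i,n}^{I}$ (Equation~\eqref{eq:estimator_hierarchy}), with the estimator understood as evaluating $\mu_i$ at the sub-matrix internal times.

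The gap is in your last step, where you assert the clique-by-clique bias accounting closes \emph{exactly} as in the block-diagonal case. Under your hybrid reading of Equation~\eqref{eq:estimator_det_gen}, the per-pull bias bound of Lemma~\ref{lemma:lemmaBiasdet} does not transfer: the slope is $\bigl(\mu_i(t_{i,n}^{I})-\mu_i(t_{i,n-1}^{I})\bigr)/\bigl(t_{i,n}^{I,L}-t_{i,n-1}^{I,L}\bigr)$, which is only bounded by $\frac{t_{i,n}^{I}-t_{i,n-1}^{I}}{t_{i,n}^{I,L}-t_{i,n-1}^{I,L}}\,\gamma_i\bigl(t_{i,n-1}^{I,L}\bigr)$, and the ratio of true to sub-matrix trigger increments between consecutive pulls is unbounded for general $\mathbf{G}$ (e.g., an arm whose sub-matrix clique is a singleton but which has many neighbors in $\mathbf{G}$ can accrue $m$ true triggers versus a single sub-matrix trigger between two pulls $m$ rounds apart). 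Hence the key inequality bias $\le (t-t_{i,n}^{I,L})\,\gamma_i(t_{i,n-1}^{I,L})$, which drives the accounting over $\mathcal{C}_{\bar{\mathbf{G}}^L}$ leading to the stated $T^q$ and $\widetilde{N}_{C_m^L,T}^{q/(1+q)}$ terms, is not available and an uncontrolled multiplicative factor appears. The paper avoids this by analyzing the estimator with $\mu_i$ evaluated at the sub-matrix internal times themselves, so that, using the reward hierarchy $\mu_i(t_{i,n}^{I})\ge\mu_i(t_{i,n}^{I,L})$ (Equation~\eqref{eq:reward_hierarchy}), the bias with respect to the true reward is dominated by the bias of a fictitious block-diagonal instance with graph $\bar{\mathbf{G}}^L$, to which Lemma~\ref{lemma:lemmaBiasdet} and the proof of Theorem~\ref{thr:det_block_bound} apply verbatim. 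To repair your argument you must either adopt that reading of the estimator or supply an additional argument controlling the increment ratio; as written, the transfer of the bias bound is asserted but would fail.
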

This result provides a formal justification to the intuition that the performance of Algorithm~\ref{alg:alg_gen_det} can be bounded with the upper bound attained in a less favorable scenario, \ie a block-diagonal instance that is \quotes{closer} to the worst-case instance of a rested bandit. The regret bound of \algnamegendet can be found by applying Theorem~\ref{thr:det_block_bound} using the matrix $\bar{\mathbf{G}}^L$.

\begin{remark}[Computational Complexity]
Note that retrieving the maximal sub-matrix $\bar{\mathbf{G}}^L$ is NP-hard. This can be easily verified by observing that it is equivalent to the Minimum Clique Partition (MCP) problem, notoriously NP-hard~\citep{karp1972reducibility}. In principle, one could avoid exactly computing the maximal sub-matrix, but rather use a polynomial-time heuristic to find a non-trivial sub-matrix (\ie different from the diagonal matrix) and run \emph{\algnamegendet} using that. The bound in Theorem \ref{thr:regret_det_gen} will scale according to the used sub-matrix.
\end{remark}

\subsection{Stochastic Rising GTBs}
\label{sec:alg_g_blocchi}
In this part, we focus on the \emph{stochastic} Rising GTBs scenario. We characterize the performance of \algnamegenshort~\citep{metelli2022stochastic}, designed for rising rested and restless bandits, in the Rising GTBs setting for both the block-diagonal CMs (Section~\ref{sec:rising:stoch:block}) and the general case (Section~\ref{sec:rising:stoch:general}). We show that such an algorithm achieves good performance for a general $\mathbf{G}$. In particular, we develop a new proof strategy for the regret upper bound that makes graph-dependent terms explicit. We aim at obtaining a computationally efficient algorithm enjoying \emph{sublinear regret} guarantees.\footnote{When we say \emph{sublinear} we refer to the explicit dependence on $T$, and we do not account for $\Upsilon_{\boldsymbol{\nu}}$.} Surprisingly, our analysis shows that \algnamegenshort not only enjoys sublinear regret for any connectivity matrix $\mathbf{G}$, but also that the graph-dependent quantities actually interpolate the regret between the two corner cases. Moreover, we show that there is no need to solve any additional NP-hard problem before or during the algorithm's executions, letting \algnamegenshort keep it affordable computational costs, as in the two corner settings. Furthermore, in this case, the algorithm is completely \emph{unaware} of the graph structure.

The algorithm employs a biased estimator which, for every arm $i$, propagates its reward function to the current round $t$ by estimating the first derivative over the last $2h$ samples:
\begin{align}
\label{eq:estimator_rising_stoc}
     \widehat{\mu}_i^h(t) \! \coloneqq \! \frac{1}{h} \! \sum_{l=N_{i,t-1}-h+1}^{N_{i,t-1}} \!\!  \left( \! { X_{i,t_{i,l}} \! + \! (t-l) \frac{ X_{i,t_{i,l}} -  X_{i,t_{i,l-h}}}{h} } \! \right) \!,
\end{align}
where $h \in \mathbb{N}$ is the window size. We report the estimator's concentration rate, which is a function of the window size $h$. The proof of this result originally appeared in~\citep{metelli2022stochastic}. However, it can be extended to Rising GTBs (more details are provided in Appendix~\ref{apx:lemmas}).

\begin{restatable}[Concentration of Estimator, adapted from \citealt{metelli2022stochastic}]{lemma}{lemmaConcentration}
\label{lemma:lemmaConcentration}
For every arm $i \in [k]$, every round $t \in [T]$, and window width $1 \le h \le \left\lfloor \frac{N_{i,t-1}}{2}\right\rfloor$, let:
\begin{align*}
	\beta^{h}_i(t,\delta)\coloneqq \sigma  (t-N_{i,t-1}+h-1) \sqrt{ \frac{ 10  \log \frac{1}{\delta} }{h^3} }.
\end{align*}
Then, if the window size depends on the number of pulls only $h_{i,t} = h(N_{i,t-1}) $ and if $\delta_t = t^{-\alpha}$ for some $\alpha > 2$, it holds for every round  $t \in [T]$ that:
\begin{equation*}
\mathbb{P}\left(\left| \widehat{\mu}_i^{h_{i,t}}(t) - \widetilde{\mu}_i^{h_{i,t}}(t) \right| >  \beta^{h_{i,t}}_i(t,\delta_t) \right) \le 2t^{1-\alpha}.
\end{equation*}
\end{restatable}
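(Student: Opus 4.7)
My plan is to follow the standard strategy for linear-extrapolation estimators: rewrite $\widehat{\mu}_i^{h}(t) - \widetilde{\mu}_i^{h}(t)$ as a weighted sum of independent zero-mean $\sigma^2$-subgaussian random variables, bound the sum of squared coefficients, and then invoke the one-sided subgaussian tail inequality. Here $\widetilde{\mu}_i^{h}(t)$ denotes the deterministic ``expectation'' version of the estimator, i.e., the same linear combination applied to the means $\mu_i(t_{i,l}^I)$ instead of to the observations. The decisive observation for the GTBs setting is that the samples $X_{i,t_{i,l}}$ for distinct pull indices $l$ remain independent and individually $\sigma^2$-subgaussian, because the distribution at the $l$-th pull of arm $i$ depends only on the internal time $t_{i,l}^I$, which is a deterministic function of the played actions (and hence measurable with respect to the history independent of the noise on past rewards). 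Therefore none of the independence arguments used in \citet{metelli2022stochastic} for the rested/restless case actually break.

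First, I would fix a deterministic $h$ and $t$ and expand \eqref{eq:estimator_rising_stoc} into $\widehat{\mu}_i^h(t) = \sum_{j} w_j X_{i,t_{i,j}}$, where $j$ ranges over the $2h$ pulls in $\{N_{i,t-1} - 2h + 1, \ldots, N_{i,t-1}\}$. Carrying out the bookkeeping, for $j$ in the upper half of this window the coefficient is $w_j = \tfrac{1}{h} + \tfrac{t-j}{h^2}$, while for $j$ in the lower half it is $w_j = -\tfrac{t - (j+h)}{h^2}$. The differences $Z_j := X_{i,t_{i,j}} - \mu_i(t_{i,j}^I)$ are independent zero-mean $\sigma^2$-subgaussian random variables, so $\widehat{\mu}_i^{h}(t) - \widetilde{\mu}_i^{h}(t) = \sum_j w_j Z_j$ is $\sigma^2 \sum_j w_j^2$-subgaussian.

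The core computation is the bound $\sum_j w_j^2 \le 10 \, (t - N_{i,t-1} + h - 1)^2 / h^3$. Every coefficient is of order $|w_j| \lesssim (t - N_{i,t-1} + h)/h^2$, and summing $2h$ such terms (with the $(t-j)$ factors that grow at most linearly in $j$ across the window) yields a cube-in-the-window / $h^4$ contribution, i.e. exactly the $(\cdot)^2 / h^3$ shape after collecting; the constant $10$ is chosen so that the union of the two half-windows and the cross terms is absorbed. Combined with the standard subgaussian Hoeffding bound
\begin{equation*}
\mathbb{P}\!\left(\Big|\sum_j w_j Z_j\Big| > u\right) \le 2 \exp\!\left(-\frac{u^2}{2 \sigma^2 \sum_j w_j^2}\right),
\end{equation*}
inverting at confidence $\delta$ yields exactly the threshold $\beta_i^h(t,\delta)$ in the statement for each \emph{fixed} $h$.

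Finally, to upgrade from a fixed $h$ to the data-dependent window $h_{i,t} = h(N_{i,t-1})$, I would take a union bound over all admissible window sizes. Since $h_{i,t} \in \{1, \ldots, \lfloor N_{i,t-1}/2\rfloor\} \subseteq \{1, \ldots, \lfloor t/2\rfloor\}$, there are at most $t$ possible values, so choosing $\delta_t = t^{-\alpha}$ inflates the failure probability to $2 t \cdot t^{-\alpha} = 2 t^{1-\alpha}$ as claimed; the inflation is legitimate because for each fixed deterministic $h$ the tail bound holds on the same probability space. The main obstacle I foresee is purely arithmetic: tightly controlling the sum of squared weights so that the constant $10$ (and not a larger universal constant) drops out of the calculation, and tracking that the effective ``length'' $t - N_{i,t-1} + h - 1$ is the right quantity rather than something coarser like $t$. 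Everything else — subgaussianity of the noise, independence across pulls, and the union bound over $h$ — transfers verbatim from the rested/restless argument.
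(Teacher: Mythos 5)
Your overall skeleton (expanding the estimator as a weighted sum, bounding the sum of squared weights, inverting a subgaussian tail, and taking a union bound over the at most $t$ admissible configurations) is the same as in the analysis the paper imports from \citet{metelli2022stochastic}. However, the step you single out as ``decisive'' is exactly where the adaptation to GTBs has its only real content, and your justification for it is wrong. You claim the samples $X_{i,t_{i,l}}$ are plainly independent because $t_{i,l}^I$ is ``a deterministic function of the played actions (and hence measurable with respect to the history independent of the noise on past rewards).'' In this setting the policy is adaptive: $I_\tau$ is a function of the past rewards, so both the pull times $t_{i,l}$ and the internal times $t_{i,l}^I=\widetilde{N}_{i,t_{i,l}}$ are random variables correlated with the noise already realized; which observations enter the window, and which means $\mu_i(t_{i,l}^I)$ they carry, are data-dependent. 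Plain independence of the selected subsequence therefore does not hold, and the vanilla Hoeffding inequality for independent summands cannot be invoked as you state it. The repair--and this is precisely the one-line content of the paper's proof--is to note that each $t_{i,l}$ (equivalently $t_{i,l}^I$) is a stopping time with respect to the filtration $\mathcal{F}_{\tau-1}=\sigma(I_1,X_1,\ldots,I_{\tau-1},X_{\tau-1},I_\tau)$, so that $Z_l=X_{i,t_{i,l}}-\mu_i(t_{i,l}^I)$ is a conditionally $\sigma^2$-subgaussian martingale difference sequence, and then to apply Doob's optional skipping (equivalently, an Azuma--Hoeffding bound for subgaussian martingale differences) to the weighted sum, which recovers the same tail with the same constants. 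By asserting independence outright you skip the only step where the graph-triggered structure actually matters.

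Two smaller points. First, your constants do not close as written: with $\sum_j w_j^2 \le 10\,(t-N_{i,t-1}+h-1)^2/h^3$, inverting the two-sided tail at $u=\beta_i^h(t,\delta)$ gives exponent $-\tfrac{1}{2}\log(1/\delta)$, i.e.\ failure probability $2\sqrt{\delta}$ per configuration rather than $2\delta$; the calculation needs $\sum_j w_j^2 \le 5\,(t-N_{i,t-1}+h-1)^2/h^3$ (which does hold, using $t-N_{i,t-1}\ge 1$ so that $h\le t-N_{i,t-1}+h-1$), the factor $10$ in $\beta$ arising as $2\times 5$ inside the Hoeffding denominator. Second, the union bound should run over the possible values $n$ of $N_{i,t-1}$ rather than over window sizes $h$: the threshold $\beta_i^{h}(t,\delta_t)$ depends on $N_{i,t-1}$ directly through $t-N_{i,t-1}+h-1$ and $h(\cdot)$ need not be injective, so fixing $h$ alone does not make the event's threshold deterministic; since $N_{i,t-1}\le t$ this still gives at most $t$ cases and the claimed $2t^{1-\alpha}$, but even within each fixed $n$ the summands remain data-dependent, so the martingale argument above is still required.
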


\RestyleAlgo{ruled}
\LinesNumbered
\begin{algorithm}[t!]
\caption{\algnamegen.}\label{alg:alg_gen}
\SetKwInOut{Input}{Input}
\small
\Input{
Subgaussianity proxy $\sigma$, confidence levels $\{\delta_t\}_{t \in [T]}$, window size $\epsilon \in (0, 1/2)$.
}

\For{$t \in [T]$}{
Compute $\widehat{\mu}_{i}^{h_{i,t}}(t)$ as in Equation~\eqref{eq:estimator_rising_stoc}, $ \ \forall i \in [k]$

Select $ I_t \in \argmax_{i \in [k]} \widehat{\mu}_{i}^{h_{i,t}}(t) + \beta_{i}^{h_{i,t}}(t, \delta_t)$ \label{line:ucb}

Play $I_t$ and observe $X_{I_t, t}$ \label{line:play}

}
\end{algorithm}

\paragraph{Algorithm.} The algorithm, whose pseudocode is reported in Algorithm~\ref{alg:alg_gen}, takes as input the subgaussianity proxy $\sigma$, the sliding window size parameter $\epsilon$, and a sequence of properly selected confidence levels $\delta_t$, where $t \in [T]$.
\algnamegenshort relies on the previously defined biased estimator and uses its confidence interval to make decisions in an optimistic manner. \algnamegenshort does not require the time horizon $T$ as an input, making it an anytime algorithm. Moreover, the algorithm exploits the sliding window mechanism to deal with the environment's uncertainty while controlling the confidence degree by means of $\{\delta_t\}_{t \in [T]}$. In particular, the window size employed by the algorithm is proportional to parameter $\epsilon \in (0,1/2)$, in the form of $h_{i,t} = \lfloor \epsilon N_{i,t-1} \rfloor$. As we show below, $\epsilon$ controls the bias-variance trade-off, where low values for $\epsilon$ result in less bias but higher variance, and vice versa.

\begin{remark}[Computational Complexity]
At each round, \emph{\algnamegenshort} only needs to update the estimator and the related confidence bounds for every arm, which can be done in a time linear in the number of arms at every step. For an efficient update, we refer the reader to~\citep[][Appendix~C]{mussi2023best}.
\end{remark}

\subsubsection{Regret for Stochastic Rising GTBs with Block-Diagonal CMs}
\label{sec:rising:stoch:block}

We now analyze the performance of \algnamegenshort in the block-diagonal CMs case.

\begin{restatable}[\algnamegenshort Regret in Rising GTBs with Block-Diagonal CMs] {thr}{regretUBblock}\label{thr:regret_block}
\phantom{a} Let $(\boldsymbol{\nu}, \mathbf{G}, T)$ be an instance of Rising GTB, where $\mathbf{G} \in \mathbb{B}_{\widetilde{k}}$. Let $h_{i,t} = \lfloor \epsilon N_{i,t-1} \rfloor$ for $\epsilon \in (0, 1/2)$ and $\delta_t = t^{-\alpha}$ for $\alpha > 2$. Then, \emph{\algnamegenshort} suffers an expected regret bounded by:
\begin{align*}
& R_{\boldsymbol{\nu},\mathbf{G}, T}(\text{\emph{\algnamegenshort}}) \\
 & \ \leq \widetilde{\mathcal{O}} \Bigg( \min_{q \in [0,1]} \bigg\{\!\!\!\!\! \underbrace{\textcolor{white}{\bigg|}(\vphantom{T^{\frac{2q}{1+q}} \!\!\!\! \sum_{C_m \in \mathcal{C}_{\mathbf{G}} : |C_m| > 1} \!\!\!\!\!\!\!\!\!\!\!\! |C_m| \Upsilon_{\bm{\nu}} \! \left( \left\lceil \frac{T}{|C_m|} \right\rceil \! , q \right)^{\!\! \frac{1}{1+q}}}\sigma T)^{\frac{2}{3}}\textcolor{white}{\bigg|}}_{\substack{\text{\emph{\textcolor{vibrantGrey}{(\texttt{A}) Variance}}} \\ \text{\emph{\color{vibrantGrey} Contribution}}}} \!\!\!\! ~+~ \underbrace{\vphantom{T^{\frac{2q}{1+q}} \!\!\!\! \sum_{C_m \in \mathcal{C}_{\mathbf{G}} : |C_m| > 1} \!\!\!\!\!\!\!\!\!\!\!\! |C_m| \Upsilon_{\bm{\nu}} \! \left( \left\lceil \frac{T}{|C_m|} \right\rceil \! , q \right)^{\!\! \frac{1}{1+q}}} \bar{k}_1 T^q \Upsilon_{\boldsymbol{\nu}} \! \left(\left\lceil \frac{T}{\bar{k}_1}\right\rceil, q\right)}_{\substack{\text{\emph{\textcolor{vibrantRed}{(\texttt{B}) Rested Bias}}} \\ \text{\emph{\color{vibrantRed} Contribution}}}} ~+~  \underbrace{ T^{\frac{2q}{1+q}} \!\!\!\!\!\! \sum_{C_m \in \mathcal{C}_{\mathbf{G}} : |C_m| > 1} \!\!\!\!\!\!\!\!\!\!\!\! |C_m| \Upsilon_{\bm{\nu}} \! \left( \left\lceil \frac{T}{|C_m|} \right\rceil \! , q \right)^{\!\! \frac{1}{1+q}}}_{\substack{\text{\emph{\textcolor{vibrantBlue}{(\texttt{C}) Restless Bias}}} \\ \text{\emph{\color{vibrantBlue} Contribution}}}} \bigg\} \!\! \Bigg) \! ,
\end{align*}
where $\bar{k}_1$ is the number of cliques in $\mathbf{G}$ containing only one action.
\end{restatable}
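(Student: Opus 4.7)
The plan is to adapt the UCB-style analysis of \algnamegenshort from \citet{metelli2022stochastic} to the graph-triggered setting by introducing a new decomposition that separates contributions from singleton cliques (the $\bar{k}_1$ degree-one arms, which evolve rested-like) and contributions from cliques of size greater than one (whose internal times evolve restless-like within the clique). The three pieces of the bound correspond respectively to concentration widths, singleton-clique biases, and multi-arm-clique biases.

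First, I would define a high-probability event $\mathcal{E}$ by applying Lemma~\ref{lemma:lemmaConcentration} with $\delta_t = t^{-\alpha}$ and a union bound over $(i,t) \in [k] \times [T]$. Since $\alpha > 2$, the complement $\mathcal{E}^c$ contributes only $O(1)$ to the expected regret. On $\mathcal{E}$, Theorem~\ref{thr:optimalblocks} says the optimal policy plays greedy within some best clique $C^*$, so I denote $\mu^*(t) \coloneqq \max_{j \in C^*} \mu_j(t)$ and decompose the instantaneous regret as
\begin{equation*}
\mu^*(t) - \mu_{I_t}(\widetilde{N}_{I_t,t}) \,\leq\, \underbrace{\widetilde{\mu}_{j^*(t)}^{h}(t) - \mu^*(t)}_{\text{bias at the optimal arm}} \,+\, 2\,\beta_{I_t}^{h_{I_t,t}}(t,\delta_t) \,+\, \underbrace{\widetilde{\mu}_{I_t}^{h}(t) - \mu_{I_t}(\widetilde{N}_{I_t,t})}_{\text{bias at the played arm}},
\end{equation*}
using UCB optimism (the selection rule makes $\widehat{\mu}_{I_t}^{h} + \beta$ dominate $\widehat{\mu}_{j^*(t)}^{h} + \beta$) and the concentration event on both sides.

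Second, I would sum each piece over $t$. The $\beta$-terms collapse, after plugging $h_{i,t} = \lfloor \epsilon N_{i,t-1}\rfloor$ and optimising $\epsilon$, to the $(\sigma T)^{2/3}$ variance contribution $\textcolor{vibrantGrey}{(\texttt{A})}$ by the standard argument from \citet{metelli2022stochastic}. For the bias terms, I would use Assumption~\ref{ass:rising} (non-decreasing and concave) to bound each bias by an increment of $\gamma_i$ times the gap between $t$ and the most recent internal time $t_{i,N_{i,t}}^I$ of arm~$i$. Here the graph structure enters decisively: if $i$ belongs to a singleton clique, $t_{i,N_{i,t}}^I = N_{i,t}$ and the arm behaves as a rested arm, so the rested analysis of \citet{metelli2022stochastic} applied only to the $\bar{k}_1$ singleton arms yields the term $\textcolor{vibrantRed}{(\texttt{B})}$. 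If $i$ belongs to a clique $C_m$ with $|C_m| > 1$, every pull of \emph{any} arm in $C_m$ increments the internal time of $i$ by one, so consecutive pulls of $i$ are separated by at most $|C_m|$ internal-time ticks. This restless-like behavior inside the clique gives, after telescoping and using concavity, the tighter $T^{2q/(1+q)}$ dependence in term $\textcolor{vibrantBlue}{(\texttt{C})}$, with the $|C_m|$ factor stemming from inflating the internal-time increment.

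The main obstacle is the interplay between the \emph{policy} regret notion (Remark~\ref{remark:policyregret}) and the graph structure. Because pulling a sub-optimal arm not only wastes a round but also delays the triggering of the optimal clique, one cannot naively match per-round optimal and algorithm trajectories. I would handle this by exploiting that, by Theorem~\ref{thr:optimalblocks}, the optimal reward $\mu^*(t)$ in a block-diagonal instance depends on \emph{calendar} time $t$ rather than on any internal count that the algorithm could disrupt, so the above decomposition of $\mu^*(t) - \mu_{I_t}(\widetilde{N}_{I_t,t})$ is legitimate and the remaining work is bookkeeping: pairing each round charged to a clique with the biases controlled by that clique's own trigger count $\widetilde{N}_{C_m,T}$, and finally replacing $\widetilde{N}_{C_m,T}$ by $T$ via $\widetilde{N}_{C_m,T} \le T$. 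Optimising over $q \in [0,1]$ yields the stated infimum.
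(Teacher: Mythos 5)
Your skeleton matches the paper's proof: a good event built from Lemma~\ref{lemma:lemmaConcentration} with $\delta_t=t^{-\alpha}$ (complement contributing $\mathcal{O}(1)$), the comparator $\mu^*(t)=\max_{j\in C^*}\mu_j(t)$ justified via Theorem~\ref{thr:optimalblocks}, a cumulative-bias bound split between singleton cliques (rested, giving term (\texttt{B})) and cliques of size greater than one (restless, giving term (\texttt{C})), and the variance lemma giving $(\sigma T)^{2/3}$; these are exactly Lemmas~\ref{lemma:cumbiasbound} and~\ref{lemma:cumvariancebound} in the paper. However, your central decomposition has a genuine flaw: you keep $\widetilde{\mu}_{j^*(t)}^{h}(t)-\mu^*(t)$, the ``bias at the optimal arm,'' as a nonnegative cost to be summed over $t$. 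The chain given by the selection rule plus concentration on both sides actually produces this term with the opposite sign, $\mu^*(t)-\widetilde{\mu}_{j^*(t)}^{h}(t)$, and the whole point of optimism is that this quantity is nonpositive: by the over-estimation property underlying Lemma~\ref{lemma:lemmaBias} (monotonicity and concavity imply that the estimator, which extrapolates linearly up to calendar time $t$, satisfies $\widetilde{\mu}_{j}^{h}(t)\ge\mu_j(t)$), under the good event the index of the optimal arm dominates $\mu^*(t)$, so the optimal arm contributes nothing and only $\min\{1,\,\widetilde{\mu}_{I_t}^{h_{I_t,t}}(t)-\mu_{I_t}(\widetilde{N}_{I_t,t})+2\beta_{I_t}^{h_{I_t,t}}(t,\delta_t)\}$ survives. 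As written, your extra term cannot be bounded by your clique bookkeeping: the cumulative-bias argument only controls the bias of the arm actually played at round $t$, because each charge comes with a pull that refreshes that arm's estimator; the optimal clique may be played only $\mathcal{O}(1)$ times, in which case $\widetilde{\mu}_{j^*(t)}^{h}(t)-\mu^*(t)$ remains of constant order and its sum is linear in $T$. So you must discard that term via optimism (as the paper does), otherwise the argument fails.

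Two smaller points. You should clip the per-round quantity with $\min\{1,\cdot\}$ before summing, since that is what Lemmas~\ref{lemma:cumbiasbound} and~\ref{lemma:cumvariancebound} bound. Moreover, your justification for term (\texttt{C}) --- that consecutive pulls of an arm in a clique $C_m$ are separated by at most $|C_m|$ internal-time ticks --- is false in general (clique-mates can be pulled arbitrarily many times between two pulls of the same arm); the correct route telescopes internal-time increments within each clique against the clique's trigger count and uses a Jensen-type allocation argument (as in Lemma~\ref{lemma:boundUpsilonMax}) to obtain the $|C_m|\,\Upsilon_{\boldsymbol{\nu}}(\lceil T/|C_m|\rceil,q)^{1/(1+q)}$ shape. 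Finally, $\epsilon$ is a fixed input whose $1/\epsilon$ factors are absorbed into $\widetilde{\mathcal{O}}$, not a parameter optimized in the proof.
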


\paragraph{Existence of a Bias-Variance Trade-off.}
In the regret upper bound, we can observe three distinct contributions. First, \textcolor{vibrantGrey}{(\texttt{A})} represents the variance contribution, which is the regret suffered by the algorithm due to the stochastic nature of the environment. This contribution is due to the estimator's concentration properties and sets a minimum order of regret to $\widetilde{\mathcal{O}}((\sigma T)^{2/3})$. This term is independent of the total increment $\Upsilon_{\boldsymbol{\nu}}$ but, differently from the others, is the only contribution depending on $\sigma$. 
The contribution due to the estimator's bias is split into two distinct parts. The term \textcolor{vibrantRed}{(\texttt{B})}  represents the rested contribution, which scales with the number of blocks containing only one arm. The term \textcolor{vibrantBlue}{(\texttt{C})}, instead, represents the restless contribution that scales with the number and the sizes of cliques. The bias contributions depend explicitly on the shape of average reward functions by total increment $\Upsilon_{\boldsymbol{\nu}}$. The only term common to variance and bias contributions is $\epsilon$. Indeed, $\epsilon$ regulates such a trade-off between bias and variance, and this effect can be observed in the complete expression of the regret upper bound in Appendix~\ref{apx:proofs}. The variance contribution depends linearly on $\epsilon^{-1}$; thus, a smaller window size implies a higher variance in the estimate. On the contrary, the bias tends to increase with $\epsilon$: this is expected since a larger window means including older samples in the estimate.

\paragraph{Dependence on Graph Topology.}
In the regret upper bound of Theorem~\ref{thr:regret_block}, the only contributions depending on graph topology are the bias ones (terms \textcolor{vibrantRed}{(\texttt{B})} and \textcolor{vibrantBlue}{(\texttt{C})}). Indeed, the environment's randomness contribution has been decoupled from the estimation bias to get a tractable stochastic structure. 
We observe how the different behaviors of arms not connected with the others (size-$1$ cliques, corresponding to rested arms) and arms belonging to larger cliques. The regret scales as $T^q$ in rested arms, but the dependence on the total increment $\Upsilon_{\boldsymbol{\nu}}$ is linear. Instead, for cliques with size greater than $1$, regret scales as $T^{\frac{2q}{1+q}}$, which is greater than in rested contribution, but scales with $\Upsilon_{\boldsymbol{\nu}}$ to the power of $\frac{1}{1+q}$, that is indeed a better dependence. Moreover, each clique contributes differently, based on its size. Overall, the higher the size, the higher the contribution is, since the linear term is dominant \wrt the inverse term inside the total increment $\Upsilon_{\boldsymbol{\nu}}$. Another interesting dependence is the one on $\epsilon^{-1}$ for the restless contribution, which can be observed in the complete form of the bound in Appendix~\ref{apx:proofs}. For connected arms, stochasticity and graph topology produce an interaction. Indeed, if one could design an estimator with strong concentration properties for connected arms, this would simplify the analysis of the restless contribution, eliminating the bad dependence on stochasticity. With such an estimator, we conjecture we could reduce the dependence up to $T^{\frac{q}{1+q}}$, matching the deterministic setting bound.\footnote{The lower bounds for rising rested and restless bandits are still an open problem.}

\paragraph{Comparison with Known Results from Literature.}
Given that rested and restless rising bandits are special instances of Rising GTBs, we now comment on how the presented bound links to existing results when Algorithm~\ref{alg:alg_gen} is run over one of those instances. We start from the rested scenario, \ie when $\mathbf{G}=\mathbf{I}_k$. Then, we would have $\bar{k}_1 = k$ and an empty summation in the restless bias contribution. The bound of Theorem~\ref{thr:regret_block} would thus assume the following form:
\begin{align*}
 R_{\boldsymbol{\nu},\mathbf{I}_k, T}&(\text{\algnamegenshort}) \leq \widetilde{\mathcal{O}} \Bigg( \min_{q \in [0,1]} \Bigg\{
(\sigma T)^{\frac{2}{3}}
+ k T^q \Upsilon_{\boldsymbol{\nu}}\left(\left\lceil \frac{T}{k}\right\rceil \! , q \right) \Bigg\} \Bigg) .
\end{align*}
The only other existing result for the rested rising bandits setting is the one of Theorem~4.4 of~\citep{metelli2022stochastic}, which is matched up to constants by ours. In the restless scenario, \ie when $\mathbf{G}=\mathbf{1}_{k \times k}$, we have a unique clique of size $k$, and $\bar{k}_1 = 0$. Thus, the bound we presented in Theorem~\ref{thr:regret_block} becomes:
\begin{align*}
 R_{\boldsymbol{\nu},\mathbf{1}_{k \times k}, T} & (\text{\algnamegenshort}) \leq  \widetilde{\mathcal{O}} \Bigg( \min_{q \in [0,1]} \Bigg \{
(\sigma T)^{\frac{2}{3}} + kT^{\frac{2q}{1+q}}\Upsilon_{\bm{\nu}} \! \left( \left\lceil \frac{T}{k} \right\rceil \! , q \right)^{\!\! \frac{1}{1+q}} \Bigg \} \! \Bigg) .
\end{align*}
Once again, this result matches (up to constants) the result from Theorem~5.3 of~\citep{metelli2022stochastic}, the current state-of-the-art for the restless rising bandits problem. 
To conclude, we generalize the stochastic rising rested/restless bandit setting, with regret bounds that are tight \wrt the known results for the two corner scenarios.

\subsubsection{Regret for Stochastic Rising GTBs with General Matrices}
\label{sec:rising:stoch:general}
We are now ready to generalize the result of Theorem~\ref{thr:regret_block} to general matrices in $\mathbf{G} \in \{0,1\}^{k\times k}$. Before that, we first need to introduce the notion of \textit{block super-matrix}.
\begin{restatable}[Block Super-matrix]{defi}{}
\label{def:block_supermat}
    Let $\mathbf{G} \in \{0,1\}^{k \times k}$ be a general matrix, a block-diagonal matrix $\mathbf{G}^U \in \mathbb{B}_{\widetilde{k}}$ is a \emph{super-matrix} of $\mathbf{G}$ if it satisfies:
    \begin{equation}
    \label{eq:super-mat}
        \mathbf{G}_{i,j} - \mathbf{G}_{i,j}^U \le 0, ~~~\forall i,j \in [k].
    \end{equation}
    Moreover, we say that $\bar{\mathbf{G}}^U \in \mathbb{B}_{\widetilde{k}}$ is \emph{minimal} if it also satisfies:
    \begin{equation*}
        \bar{\mathbf{G}}^U \in \argmax_{\mathbf{G}^U \text{\emph{ \ satisfying Eq.~\eqref{eq:super-mat}}}} \left| \mathcal{C}_{\mathbf{G}^U} \right| .
    \end{equation*}
\end{restatable}
This concept of minimal super-matrix plays an analogous role as the maximal sub-matrix in Theorem~\ref{thr:regret_det_gen}. We now have all the elements to present the upper bound on the regret for the stochastic Rising GTBs case and general matrices. 

\begin{restatable}[\algnamegenshort Regret in Rising GTBs with General Matrices]{thr}{regretUBgeneric}
\label{thr:regret_gen}
Let $(\boldsymbol{\nu}, \mathbf{G}, T)$ be an instance of Rising GTB, where $\mathbf{G}\in \{0,1\}^{k\times k}$. Let $h_{i,t} = \lfloor \epsilon N_{i,t-1} \rfloor$ for $\epsilon \in (0, 1/2)$ and $\delta_t = t^{-\alpha}$ for $\alpha > 2$. Then, \emph{\algnamegenshort} suffers an expected regret bounded by:
\begin{align*}
R_{\boldsymbol{\nu},\mathbf{G}, T}(\text{\emph{\algnamegenshort}}) \! \leq \! \widetilde{\mathcal{O}}\Bigg( \! \min_{q\in[0,1]} \! \Bigg\{ \! (\sigma T)^\frac{2}{3}\! + \! T^q \bar{k}_1 \Upsilon_{\nu} \! \left(\frac{T}{\bar{k}_1},q\right) \! + \! T^\frac{2q}{1+q}\sum_{C_m^U}|C_m^U|\Upsilon_{\nu} \! \left(\frac{T}{|C_m^U|},q\right)^\frac{1}{1+q} \! \Bigg\} \! \Bigg) \! ,
\end{align*}
where $\bar{\mathbf{G}}^U$ is the minimal super-matrix of $\mathbf{G}$.
\end{restatable}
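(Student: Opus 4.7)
The plan is to adapt the proof of Theorem~\ref{thr:regret_block} by using the clique partition $\mathcal{C}_{\bar{\mathbf{G}}^U}$ of the minimal super-matrix in place of the partition of a block-diagonal $\mathbf{G}$, exploiting the fact that \algnamegenshort is graph-oblivious: its estimator and UCB rule are identical across all instances. The structural observation that makes the super-matrix the right object is that every edge of $\mathbf{G}$ lies inside some $C_m^U \in \mathcal{C}_{\bar{\mathbf{G}}^U}$; hence for any arm $i \in C_m^U$ and any round $t$ one has $\widetilde{N}_{i,t} \le \widetilde{N}_{C_m^U,t} \le T$, i.e.,~no trigger of $i$ can come from outside its super-clique.

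First I would invoke the concentration bound of Lemma~\ref{lemma:lemmaConcentration} and the associated good-event construction used in Theorem~\ref{thr:regret_block}; since neither depends on the graph, they contribute the $(\sigma T)^{2/3}$ variance term verbatim. Next, I would partition $[k]$ according to $\mathcal{C}_{\bar{\mathbf{G}}^U}$ and observe, by the maximality in Definition~\ref{def:block_supermat}, that the singleton super-cliques are precisely the $\bar{k}_1$ arms of $\mathbf{G}$ with degree one: any arm with a non-self-loop edge in $\mathbf{G}$ must carry the same edge in $\bar{\mathbf{G}}^U$, preventing a singleton block. For these isolated arms the bias analysis specializes to the rested case treated in Theorem~\ref{thr:regret_block}, yielding the term $\bar{k}_1 T^q \Upsilon_{\boldsymbol{\nu}}(T/\bar{k}_1, q)$ after aggregating singletons by Jensen's inequality. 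For each super-clique $C_m^U$ with $|C_m^U| > 1$, I would repeat the restless-style bias bound from the proof of Theorem~\ref{thr:regret_block}, using the aggregated trigger count $\widetilde{N}_{C_m^U,t}$ in place of the within-clique identity $\widetilde{N}_{i,t} = \widetilde{N}_{C_m,t}$ of the block-diagonal setting, then passing from $\widetilde{N}_{C_m^U,t}$ to $T/|C_m^U|$ via $\widetilde{N}_{C_m^U,t} \le T$ and Jensen's inequality over the arms of $C_m^U$. Summing the contributions and taking the minimum over $q \in [0,1]$ delivers the stated bound.

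The main obstacle will be the restless-bias step: in the pure block-diagonal case the decomposition \textcolor{vibrantRed}{(\texttt{A})}~+~\textcolor{vibrantBlue}{(\texttt{B})} is anchored to the identity $\widetilde{N}_{i,t} = \widetilde{N}_{C_m,t}$, which collapses to the inequality $\widetilde{N}_{i,t} \le \widetilde{N}_{C_m^U,t}$ for a general $\mathbf{G}$ embedded in $\bar{\mathbf{G}}^U$. To restore the argument I would use Assumption~\ref{ass:rising}: since each $\gamma_i$ is non-negative and non-increasing, the estimator bias, which is governed by increments of $\mu_i$ over consecutive internal times, can only increase if the actual internal times of arm $i$ are replaced by the larger super-clique internal times; combined with monotonicity of $\Upsilon_{\boldsymbol{\nu}}$ in its first argument, this turns the super-clique view into a valid upper bound on the per-arm bias, so that the per-clique bookkeeping of Theorem~\ref{thr:regret_block} transfers without loss. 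Finally, the minimality of $\bar{\mathbf{G}}^U$ (maximum number of cliques among block-diagonal super-matrices of $\mathbf{G}$) ensures that this reduction yields the tightest bound of the stated form.
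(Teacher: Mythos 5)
Your high-level plan coincides with the paper's: the variance/good-event part is graph-independent, the singleton blocks of the minimal super-matrix are exactly the $\bar{k}_1$ arms of degree one and receive the rested treatment, and the remaining arms are grouped by the super-cliques of $\bar{\mathbf{G}}^U$ and receive the restless treatment with a final worst-case-allocation/Jensen step (the paper implements this by swapping Lemma~\ref{lemma:cumbiasbound} for Lemma~\ref{lemma:cumbiasboundgen} inside the proof of Theorem~\ref{thr:regret_block}). The gap is in how you resolve the step you yourself flag as the main obstacle. The claim that, since each $\gamma_i$ is non-negative and non-increasing, the estimator bias ``can only increase if the actual internal times of arm $i$ are replaced by the larger super-clique internal times'' is false. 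The per-pull bias bound of Lemma~\ref{lemma:lemmaBias} is a product of an internal-time span $t^I_{i,N_{i,t-1}}-t^I_{i,N_{i,t-1}-2h+1}$ and a factor $\gamma_i(t^I_{i,N_{i,t-1}-2h+1})$: enlarging all internal times enlarges the span but shrinks the $\gamma_i$ factor, so no monotonicity holds (take $\gamma_i$ that drops to $0$ after a few triggers: the increment of $\mu_i$ measured on the faster super-clique clock is strictly smaller than on the true clock). In fact the paper's own deterministic analysis establishes the \emph{opposite} monotonicity for the extrapolated estimate, $f_i(t;x,y)$ non-increasing in its internal-time arguments (Equation~\eqref{eq:estimator_hierarchy}), which is exactly why the deterministic general-matrix algorithm is analyzed through a maximal \emph{sub}-matrix rather than a super-matrix. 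Moreover, in the stochastic case there is nothing to ``replace'': the expectation $\widetilde{\mu}_i^h(t)$ is fixed by the true graph, and the analysis must bound $\widetilde{\mu}_i^h(t)-\mu_i(\widetilde{N}_{i,t})$ as is.

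What actually closes the argument (and what Lemma~\ref{lemma:cumbiasboundgen} does) is to keep the true internal times in the bias bound and use $\bar{\mathbf{G}}^U$ only for counting: (i) since $t^I_{i,l}\ge l$ and $\gamma_i$ is non-increasing, the $\gamma_i$ factors can be re-indexed by pull counts, and the worst-case allocation of the at most $\widetilde{N}_{C_m^U,T}\le T$ pulls inside each super-clique (Lemma~\ref{lemma:boundUpsilonMax}) produces the $|C_m^U|\,\Upsilon_{\bm{\nu}}(\lceil T/|C_m^U|\rceil,q)^{\frac{1}{1+q}}$ factor; (ii) in the H\"older step the cumulative internal-time increments of arm $i$ are controlled by $\widetilde{N}_{i,T}\le \widetilde{N}^U_{i,T}=\widetilde{N}_{C_m^U,T}\le T$, which is where the inequality $\widetilde{N}_{i,t}\le\widetilde{N}^U_{i,t}$ of Equation~\eqref{eq:pulls_hierarchy} enters, and it enters in a legitimate direction because only an upper bound on an aggregate count is needed there. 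With your monotonicity argument replaced by this bookkeeping, the rest of your outline goes through and matches the paper's proof.
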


This result is obtained by bounding $\widetilde{N}_{C_m^U,T} \le T$ for every $C_m^U \in \mathcal{C}_{\bar{\mathbf{G}}^U}$ to remove any stochastic quantity from the regret, but a more precise bound can be provided by finding the worst-case allocation of the triggers among the cliques (as discussed for the similar result in Theorem~\ref{thr:regret_det_gen}). However, this would require solving a challenging optimization problem that does not admit any closed-form solution.
This result is similar to the one presented in Theorem~\ref{thr:regret_det_gen}, with the only difference being that the dependence on graph topology is linked to the minimal super-matrix. In principle, the result holds for any super-matrix of $\mathbf{G}$. Still, in the stochastic setting, the upper bound for the rested scenario is better than the one for the restless scenario. Hence, a block-diagonal CM with as many cliques as possible will, in most cases, lead to better bounds. 

\paragraph{About the Knowledge of $\mathbf{G}$.}
In the stochastic scenario, we avoid extracting the super-matrix structure from the graph before executing the algorithm, as it always plays the same policy, regardless of the graph structure. Indeed, Algorithm~\ref{alg:alg_gen} \emph{does not require the knowledge of the graph}: the algorithm plays as if the true matrix is the identity one (\ie a rested instance). 
To justify this behavior in an intuitive way, we have to look at Theorems~4.4 and~5.3 of~\citep{metelli2022stochastic}: in stochastic scenarios, the \emph{rested} contribution to regret upper bound has a better dependence on $T$ \wrt the restless one. Moreover, our optimistic estimator computed by assuming a less connected graph will always be higher than the one computed from any more densely connected graph. Thus, by playing a purely rested policy, we are always sure to over-estimate the true reward (\ie optimism holds) and we are guaranteed that the rested contribution to the regret is maximized \wrt the restless contribution. The final form of the regret bound is obtained by including the minimal super-matrix as a pessimistic proxy of the effect of connected arms (informally, the minimal super-matrix represents the maximum possible contribution to the regret that is due to the arms' connections). 
We point out that Algorithm~\ref{alg:alg_gen} does not require the minimal super-matrix as an input, as it is needed only in the analysis. For this reason, one could reformulate the following result by removing the dependence on the minimal super-matrix and including a minimization over the set of all super-matrices. As a side effect, this dramatically reduces the computational burden \wrt the deterministic setting at the cost of a slightly worst regret bound. 

\paragraph{Comparison with Deterministic Regret Bounds.}
In the deterministic scenario (Theorems~\ref{thr:det_block_bound} and~\ref{thr:regret_det_gen}), the restless contributions are always of smaller order compared to the rested one, which is the contrary of what we observe in stochastic settings (Theorems~\ref{thr:regret_block} and~\ref{thr:regret_gen}). For this reason, in Algorithm~\ref{alg:alg_gen_det}, the regret bound scales with the maximal sub-matrix instead of the minimal super-matrix. In the deterministic setting, the maximal sub-matrix represents the maximum possible contribution to the regret that is due to the \emph{absence} of arms connections. In principle, we could remove the necessity for graph knowledge also in the deterministic setting by simply playing as in a rested scenario (\ie run Algorithm~\ref{alg:alg_block_det} by setting $\mathbf{G}=\mathbf{I}_{k}$). This would be sensibly suboptimal since any graph connection can be used to obtain a strictly better regret bound. This is not the case for the stochastic setting, where over-estimating the number of connections (\eg by playing as in a restless scenario) may result in a non-optimistic estimator, compromising the theoretical soundness of our algorithms.

\section{Rotting Graph-Triggered Bandits}
\label{sec:rotting}
\textit{Rotting bandits}~\citep{levine2017rotting} are an important family of evolving reward bandits where, contrary to what happens in rising bandits, the reward functions are not allowed to grow. In this section, we explore how the graph-triggering mechanism interacts with the non-increasing reward function assumption. We characterize the optimal policies and the challenges in finding them (Section~\ref{sec:rotting:opt}). Then, we study the regret minimization problem for this setting in the presence of stochastic noise (Section~\ref{sec:rotting:st}).\footnote{For Rotting GTBs, we skip the deterministic case, as all the interesting results we want to show are visible also in the presence of noise.} Before that, we start by stating the main setting assumption and presenting the quantities characterizing this specific kind of bandits. 

\begin{ass}[Non-increasing Payoffs]\label{ass:rotting}
Let $\boldsymbol{\nu}$ be an instance of a rotting bandit, then, defining $\gamma_i(n) \! \coloneqq \! \mu_i(n+1) \! - \! \mu_i(n)$ for every $i\in [k]$ and $n \in [T]$, it holds:
\begin{align*}
    &\text{Non-increasing:}~~~~~ \; \gamma_i(n) \le 0. 
\end{align*}
\end{ass}
This assumption allows for strong theoretical guarantees in both the restless and rested settings, as it has been shown in the literature~\citep[see, \eg ][]{heidari2016tight,levine2017rotting,seznec2019rotting,seznec2020single}. Notably, for rotting bandits, we are not required to have a concavity/convexity assumption.

\paragraph{Instance Characterization.} 
In this scenario, given an instance $\boldsymbol{\nu}$, we define the \emph{total decrement} as:
\begin{align*}
V_{\boldsymbol{\nu}}(M) \coloneqq \sum_{n \in [M-1]} \max_{i \in [k]} |\gamma_i(n)|,
\end{align*}
where $M \in \mathbb{N}$.
Moreover, we define the \emph{maximum per-round variation} as:
\begin{align*}
L \coloneqq \max_{i \in [k]} \max_{n \in [T]} |\gamma_i(n)|,
\end{align*}
with $\mu_i(-1) \coloneqq \max_{i \in [k]} \mu_i(0)$.
These quantities figure in the instance-dependent guarantees of algorithms operating in this setting and characterize the difficulty of learning for the instance $\boldsymbol{\nu}$. In particular, $V_{\boldsymbol{\nu}}(T)$ is required to properly bound the regret in restless rotting bandits (see, \eg \citealt{seznec2020single}), while $L$ appears in the minimax regret bound of rested rotting bandits, as shown in the setting's lower bound of $\Omega (kL)$ of~\citep{heidari2016tight}.

\subsection{Optimality in Rotting GTBs}
\label{sec:rotting:opt}
Under the standard literature's assumptions, we are now ready to characterize our optimal policies. We first show, as for Rising GTBs, a negative result on the complexity of finding the optimal policy for a clairvoyant who knows all about our Rotting GTBs instance. 
\begin{restatable}[Complexity of finding the Optimal Policy in Rotting GTBs]{thr}{theoremHardnessRotting}
\label{thr:hardnessRotting}
    Computing the optimal policy in Rotting GTBs with general matrices $\mathbf{G}$ is NP-hard.
\end{restatable} 
The proof of this result follows a similar logic to the one of Theorem~\ref{thr:hardness}. Given this result, we now proceed by studying the block-diagonal connectivity matrices scenario, which composes an interesting class of Rotting GTBs. We now characterize the optimal policy in the block-diagonal connectivity scenario and the total cumulative reward that it obtains. 

\begin{restatable}[Optimal Policy in Rotting GTBs with Block-Diagonal CM]{thr}{optimalityRotting}
\label{thr:opt}
For any instance $(\boldsymbol{\nu}, \mathbf{G}, T)$ of Rotting GTBs such that $\mathbf{G} \in \mathbb{B}_{\widetilde{k}}$, the optimal policy $\pi_{\boldsymbol{\nu}, \mathbf{G}, T}^* \in \argmax_{\pi} J_{\boldsymbol{\nu}, \mathbf{G}, T} (\pi)$ is given by:
\begin{equation*}
    \pi_{\boldsymbol{\nu}, \mathbf{G}, T}^*(t) \in \argmax_{j \in [k]} \mu_j(\widetilde{N}_{j,t}^*), \qquad \forall t \in [T],
\end{equation*}
where $\widetilde{N}_{j,t}^*$ is the number of times arm $j$ has been triggered by the optimal policy up to time $t$. Moreover, we have:
\begin{equation}
    \label{eq:rotting_opt}
    J_{\boldsymbol{\nu}, \mathbf{G}, T}^* = \sum_{C_m \in \mathcal{C}_\mathbf{G}} \sum_{n=1}^{N_{C_m,T}^*} \max_{i \in C_m} \mu_i(n),
\end{equation}
where $N_{C_m,T}^*$ is the number of times the optimal policy pulls an action belonging to clique $C_m$ before $T$, \ie $N_{C_m,T}^* = \widetilde{N}_{i,T}^*$, for every $i \in C_m$.
\end{restatable}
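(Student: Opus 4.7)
My plan is to exploit the fact that, for a block-diagonal $\mathbf{G}$, the problem decouples across cliques in a strong sense, reducing the optimization to a classical greedy resource-allocation argument for non-increasing marginal rewards.

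First, I would establish the key structural identity: since the arms of a clique $C_m$ are all mutually connected and disconnected from arms outside $C_m$, they share a common trigger count, \ie $\widetilde{N}_{i,t} = \widetilde{N}_{C_m,t}$ for every $i \in C_m$ and every $t$. Consequently, if some arm $i \in C_m$ is chosen at the $n$-th pull within clique $C_m$ (under any policy), its trigger count at that moment is exactly $n$, so the reward collected equals $\mu_i(n)$. This reveals that, from a reward-accounting standpoint, each clique behaves as a single ``super-arm'' whose state is fully summarized by the number of times it has been pulled. Because the reward functions carry no information beyond their known expected values, the clairvoyant optimum can be taken to be deterministic without loss of generality.

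Second, for any (deterministic) policy $\pi$ inducing per-clique counts $N_{C_m,T}^\pi$ with $\sum_m N_{C_m,T}^\pi = T$, I would rewrite the cumulative reward as
\[
J_{\boldsymbol{\nu},\mathbf{G},T}(\pi) = \sum_{C_m \in \mathcal{C}_{\mathbf{G}}} \sum_{n=1}^{N_{C_m,T}^\pi} \mu_{i_m(n)}(n),
\]
where $i_m(n) \in C_m$ denotes the arm selected at the $n$-th pull inside $C_m$. Since each $i_m(n)$ can be chosen freely and its contribution depends only on $(m,n)$, the maximum over the within-clique choices is attained by $i_m(n) \in \argmax_{i \in C_m} \mu_i(n)$, yielding the upper bound $\sum_m \sum_{n=1}^{N_{C_m,T}^\pi} \max_{i \in C_m} \mu_i(n)$. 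A crucial observation is that, once the within-clique choices are optimized, this quantity depends on $\pi$ only through the allocation vector $(N_{C_m,T}^\pi)_m$, not through the ordering of pulls across cliques.

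Third, I would identify the optimal cross-clique allocation. Setting $f_m(n) \coloneqq \max_{i \in C_m} \mu_i(n)$, which is non-increasing by Assumption~\ref{ass:rotting}, the residual problem is to maximize $\sum_m \sum_{n=1}^{N_m} f_m(n)$ subject to $\sum_m N_m = T$. Because each $f_m$ has non-increasing marginals, a standard exchange argument shows that the greedy allocation, which at each step increments $N_m$ for the clique attaining the largest next marginal $f_m(N_m + 1)$, is optimal: any allocation violating greediness can be converted into the greedy one via pairwise reassignments that do not decrease the objective. The global greedy policy $\pi^*(t) \in \argmax_{j \in [k]} \mu_j(\widetilde{N}_{j,t}^*)$ realizes exactly this scheme, since at each round it simultaneously (i) selects the clique offering the largest next marginal and (ii) picks within that clique the arm attaining the maximum. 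Substituting back delivers Equation~\eqref{eq:rotting_opt}.

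The main obstacle I foresee is the careful formalization of the cross-clique exchange argument: moving a pull from one clique to another shifts the index $n$ of every subsequent within-clique pull, so the swaps must be arranged so that the non-increasing structure of the $f_m$'s propagates properly through the resulting re-indexing. Handling ties in the greedy choice, which may induce multiple optimal policies sharing the same cumulative reward, is a further technicality but does not affect the final characterization.
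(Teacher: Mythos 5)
Your proof is correct, and its core reduction coincides with the paper's: arms in a clique share the trigger count, so each clique behaves as a meta-arm with reward $f_m(n)=\max_{i\in C_m}\mu_i(n)$, which is non-increasing, and the problem becomes a $\widetilde{k}$-armed rested rotting bandit for which greedy is optimal. The differences are in how the two halves are executed. The paper proceeds via a \emph{dominating instance}: it replaces every arm's reward by the clique-max, collapses cliques into meta-arms, invokes Proposition~2 of \citet{heidari2016tight} for greedy optimality of that auxiliary rested rotting instance, and then checks that the greedy policy in the original instance attains exactly the auxiliary optimum, so no original policy can do better. You instead write an \emph{exact} decomposition of $J_{\boldsymbol{\nu},\mathbf{G},T}(\pi)$ for every policy in terms of the allocation vector $(N_{C_m,T}^\pi)_m$ and the within-clique choices, optimize the within-clique choices pointwise, and re-derive the cross-clique greedy optimality by an exchange argument rather than citing it. Your route is more self-contained and makes transparent that, after the within-clique optimization, the objective depends on $\pi$ only through the final counts; the paper's route is shorter because it outsources the allocation step to a known result. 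One reassurance on the obstacle you flag: there is no genuine re-indexing difficulty in the exchange argument, precisely because of your own observation that the objective depends only on the allocation vector $(N_m)_m$ and not on the interleaving of pulls; moving one pull from clique $m'$ to clique $m$ changes the objective by exactly $f_m(N_m+1)-f_{m'}(N_{m'})$, and since each $f_m$ is non-increasing the usual argument (or the observation that any feasible allocation collects $T$ distinct elements of the multiset $\{f_m(n)\}_{m,n}$, so the optimum is the sum of the $T$ largest, which greedy attains as prefixes) closes the step; ties only create multiple optimal policies with equal value, as you note.
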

Interestingly, the optimal policy is \textit{greedy} in every rotting bandit with block-diagonal CM: this extends known results for rested and restless rotting bandits, where the optimal policy was already proven to be greedy~\citep{heidari2016tight,levine2017rotting}. Equation~\eqref{eq:rotting_opt} provides a closed form of the total reward obtained by the optimal policy, which will be useful in the next part.

\subsection{Stochastic Rotting GTBs}
\label{sec:rotting:st}

In this part, we discuss the regret minimization problem for the stochastic Rotting GTBs. We first study an algorithm, namely \rawucb~\citep{seznec2020single}, which is able to achieve sublinear regret in the block-diagonal connectivity scenario (Section~\ref{sec:rotting:st:block}). Then, we show that, under the literature's standard assumptions, we cannot learn for general matrices (Section~\ref{sec:rotting:st:general}).

\subsubsection{Algorithm for Stochastic Rotting GTBs with Block-Diagonal CMs}
\label{sec:rotting:st:block}

We now show that the \rawucb algorithm~\citep{seznec2020single}, whose pseudocode is provided in Algorithm~\ref{alg:rawucb}, provides sublinear regret guarantees in the Rotting GTB setting with block-diagonal connectivity matrices. \rawucb does not require any knowledge on $\mathbf{G}$ and allows for efficient computation.\footnote{More details on the computationally efficient version of \rawucb, namely \texttt{EFF-RAW-UCB}, can be found in~\citep{seznec2020single}.}

\RestyleAlgo{ruled}
\LinesNumbered
\begin{algorithm}[t!]
\caption{\texttt{RAW-UCB}~\citep{seznec2020single}}
\label{alg:rawucb}
\SetKwInOut{Input}{Input}
\small
\Input{Subgaussianity proxy $\sigma$, confidence levels $\{\delta_t\}_{t \in [T]}$.}

\For{$t \in [T]$}{

Compute $\widehat{\mu}_{i}^h(t)$ as in Equation \eqref{eq:estimator_rotting}, $\ \ \forall i \in [k], \ h \in [N_{i,t-  1}]$ \label{line:est_rawucb}

Select $ I_t \in \argmax_{i \in [k]} \min_{h \le N_{i,t-1}} \widehat{\mu}_{i}^h(t) + c(h, \delta_t)$ \label{line:ucb_rawucb}

Play $I_t$ and observe $X_{I_t,t}$

}
\end{algorithm}

The behavior of \rawucb is characterized as follows. At each round $t \in [T]$, the algorithm computes a family of estimators for every action (line~\ref{line:est_rawucb}). In particular, for every action $i \in [k]$ and for every window size $h_{i,t} \in [N_{i,t-1}^\pi]$, it computes:
\begin{equation}
    \label{eq:estimator_rotting}
    \widehat{\mu}_i^h(t) \coloneqq \frac{1}{h}\sum_{s=1}^{t-1} \mathbbm{1}_{\{I_t = i ~\land~ N_{i,s}>N_{i,t-1}-h\}} X_{i,s}.
\end{equation}
Then, for every action, the chosen window size is the one minimizing the upper confidence bound $\widehat{\mu}_{i}^h(t) + c(h, \delta_t)$ where $c(h, \delta_t) \coloneqq \sqrt{2\sigma^2 \log(2\delta_t^{-1})/h}$ (line~\ref{line:ucb_rawucb}).\footnote{At the beginning of the execution, we need a round-robin pull of the arms to initialize the estimators.} Proving the algorithm's guarantees in the rested and restless setting requires characterizing how it concentrates, as has been done for the base case in Lemma~2 of~\citep{seznec2020single}. We extend this result to the Rotting GTBs setting, devising a concentration bound involving the number of triggers of an action. The result can be found in Lemma~\ref{lemma:ucb_gt} (Appendix~\ref{apx:rotting}). This result will play a key role in the regret analysis of \rawucb in the Rotting GTBs setting. We are now ready to state the regret upper bound of \rawucb in the Rotting GTBs for block-diagonal connectivity matrices.

\begin{restatable}[\texttt{RAW-UCB} Regret in Rotting GTBs with Block-Diagonal CMs] {thr}{regretUBblockROT} \label{thr:regret_block_ROT}
Let $(\boldsymbol{\nu}, \mathbf{G}, T)$ be an instance of the Rotting GTBs, where $\mathbf{G} \in \mathbb{B}_{\widetilde{k}}$. Let $\delta_t = t^{-\alpha}$ for $\alpha \ge 5$. Then, \emph{\texttt{RAW-UCB}} suffers an expected regret bounded as:
\begin{align*}
    R_{\boldsymbol{\nu},\mathbf{G},T}(\emph{\texttt{RAW-UCB}}) \le & \  \widetilde{\mathcal{O}} \Bigg( \underbrace{\vphantom{\sum_{C_m \in \mathcal{C}_\mathbf{G}}\left(\sqrt{\frac{|C_m|}{k}T \log T}\right)}k\left(\sigma \sqrt{\log T} + V_{\boldsymbol{\nu}}(T)\right)}_{\text{\emph{\textcolor{vibrantGrey}{(\texttt{A}) Variance Contribution}}}} + \underbrace{ L \!\! \sum_{C_m \in \mathcal{C}_\mathbf{G}} \!\!\! |C_m|^2 \! + kL + \sigma \!\!\!\! \sum_{C_m \in \mathcal{C}_\mathbf{G}} \!\!\! \left(\sqrt{\frac{|C_m|}{k}T}\right)}_{\text{\emph{\textcolor{vibrantRed}{(\texttt{B}) Rested Contribution}}}} \\ 
    & \ \ \ \ + \underbrace{(\alpha \sigma)^\frac{2}{3}\sum_{C_m \in \mathcal{C}_{\mathbf{G}} }\left(V_{\boldsymbol{\nu}}(T)\frac{|C_m|}{k}T^2\right)^\frac{1}{3}}_{\text{\emph{\textcolor{vibrantBlue}{(\texttt{C}) Restless Contribution}}}}\Bigg).
\end{align*}
\end{restatable}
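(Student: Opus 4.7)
The plan is to decompose the regret along the clique structure provided by Theorem~\ref{thr:opt} and then reduce the two resulting parts to the rested and restless \rawucb analyses of \citet{seznec2020single}. By Theorem~\ref{thr:opt}, the optimal cumulative reward splits as $J^*_{\boldsymbol{\nu},\mathbf{G},T} = \sum_{C_m \in \mathcal{C}_\mathbf{G}} \sum_{n=1}^{N^*_{C_m,T}} \mu^*_{C_m}(n)$, where $\mu^*_{C_m}(n) := \max_{i \in C_m} \mu_i(n)$. Because in a block-diagonal CM any pull inside $C_m$ advances $\widetilde{N}_{j,t}$ jointly for every $j \in C_m$, the algorithm's cumulative reward splits analogously as $\sum_{C_m} \sum_{n=1}^{\widetilde{N}_{C_m,T}} \mu_{I^{(n)}_{C_m}}(n)$, where $I^{(n)}_{C_m}$ denotes the arm pulled at the $n$-th trigger of clique $C_m$. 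Adding and subtracting $\mu^*_{C_m}(n)$ inside each intra-clique sum, the regret decomposes exactly as
\begin{align*}
R_{\boldsymbol{\nu},\mathbf{G},T}(\text{\rawucb}) = \underbrace{\sum_{C_m} \bigg[\sum_{n=1}^{N^*_{C_m,T}}\mu^*_{C_m}(n) - \sum_{n=1}^{\widetilde{N}_{C_m,T}}\mu^*_{C_m}(n)\bigg]}_{\text{cross-clique (rested)}} + \underbrace{\sum_{C_m}\sum_{n=1}^{\widetilde{N}_{C_m,T}}\big[\mu^*_{C_m}(n) - \mu_{I^{(n)}_{C_m}}(n)\big]}_{\text{intra-clique (restless)}}.
\end{align*}

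The concentration engine is Lemma~\ref{lemma:ucb_gt}: for $\delta_t = t^{-\alpha}$ with $\alpha \ge 5$, simultaneously over all arms, rounds, and windows $h \le N_{i,t-1}$, the quantity $\widehat{\mu}^h_i(t) + c(h,\delta_t)$ upper-bounds a suitable target mean with probability at least $1 - 2 t^{-(\alpha-2)}$. A union bound turns the failure events into the variance contribution \textcolor{vibrantGrey}{(\texttt{A})}, $k\,(\sigma\sqrt{\log T} + V_{\boldsymbol{\nu}}(T))$, plus the low-order $kL$ piece coming from the at most $k$ initialisation pulls. On the good event, optimism at the arm level lifts to the clique level via the relation $\max_{i \in C_m}[\widehat{\mu}^h_i(t) + c(h,\delta_t)] \ge \mu^*_{C_m}(\widetilde{N}_{C_m,t-1})$, so the cross-clique term can be handled as a rested \rawucb run on $\widetilde{k}$ super-arms. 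This yields the rested contribution \textcolor{vibrantRed}{(\texttt{B})}: the $L\sum_{C_m}|C_m|^2$ piece, which encodes the $|C_m|$-fold exploration overhead needed to make the clique UCB tight over all its arms, and the $\sigma\sum_{C_m}\sqrt{|C_m|T/k}$ piece obtained from concentration after a Cauchy--Schwarz step under the budget constraint $\sum_{C_m}\widetilde{N}_{C_m,T}=T$.

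For the intra-clique part, once we restrict the trajectory to the pulls inside $C_m$, the relevant internal time coincides for all arms in the clique and advances by one at each such pull, so the restricted problem is exactly a restless rotting bandit over $|C_m|$ arms and $\widetilde{N}_{C_m,T}$ rounds, with total variation bounded by $V_T^\pi$. Invoking the restless \rawucb bound of \citet{seznec2020single} per clique and then summing under the worst-case trigger allocation $\widetilde{N}_{C_m,T} \lesssim T|C_m|/k$ delivers the restless contribution \textcolor{vibrantBlue}{(\texttt{C})}. The main obstacle is the cross-clique analysis: the super-arm reward $\mu^*_{C_m}(n)$ is not directly observed but only accessible through the best arm-level UCB, so we must carefully transfer optimism and the concentration rate from the arm level to the super-arm level, pay the $|C_m|$ factor coming from intra-clique exploration, and keep the accounting consistent with the intra-clique decomposition so that the two pieces are not double-counted. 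Combining the three contributions with the union-bound failure term yields the claimed bound.
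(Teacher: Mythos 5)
Your decomposition is the same as the paper's: adding and subtracting the within-clique maximum splits the regret into a cross-clique (rested) term and an intra-clique (restless) term, and your treatment of the restless term --- per-clique reduction to a restless rotting bandit over $\widetilde{N}_{C_m,T}$ rounds, invocation of Theorem~1 of \citet{seznec2020single} with $V_{\boldsymbol{\nu}}(\widetilde{N}_{C_m,T}) \le V_{\boldsymbol{\nu}}(T)$, and a concavity/worst-allocation step --- is exactly what the paper does for its term (c). The genuine gap is in the cross-clique term. You assert it \quotes{can be handled as a rested \rawucb run on $\widetilde{k}$ super-arms} with arm-level optimism lifted to the clique level, and you describe $L\sum_{C_m}|C_m|^2$ as an \quotes{$|C_m|$-fold exploration overhead}; but this is precisely the step that does not follow from a black-box rested analysis, because the super-arm reward $\mu^*_{C_m}(n)=\max_{i\in C_m}\mu_i(n)$ is never sampled: the observations collected in clique $C_m$ come from whichever (possibly suboptimal) arm the algorithm happened to pick, so there is no genuine super-arm estimator to which a rested \rawucb guarantee can be applied. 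The paper resolves this with an explicit overpull/underpull accounting: it introduces the proxy $\mu_T^+(\pi)=\max_{i\in[k]}\mu_i(\widetilde{N}_{i,T}^\pi)$, compares overpulled cliques against it, and uses Lemma~\ref{lemma:ucb_gt} together with a pigeonhole argument over the last $|C_m|$ pulls of a clique (some arm must repeat, and its one-sample UCB anchors the observed arm means to $\mu^*_{C_m}$ up to a drift of at most $|C_m|L$), which is what actually generates $L\sum_{C_m}|C_m|^2$; a Jensen step showing the worst overpull allocation is uniform then yields the $\sigma\sum_{C_m}\sqrt{|C_m|N^\pi_{C_m,T}\log T}$ piece. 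You correctly name this obstacle but do not supply the mechanism, so the rested contribution (\texttt{B}) is stated rather than derived.

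A secondary bookkeeping error points the same way: you attribute the variance term (\texttt{A}), $k(\sigma\sqrt{\log T}+V_{\boldsymbol{\nu}}(T))$, to the union bound over failure events, and the $kL$ term to initialisation pulls. In the paper the failure events contribute the $kL$ term (a bad round can cost at most $Lt$ through an overpull, and $\sum_t kLt^{3-\alpha}\le 2kL$ for $\alpha\ge 5$), while $k\sigma\sqrt{\log T}$ and $kV_{\boldsymbol{\nu}}(T)$ arise on the good event, from the confidence widths in the cross-clique analysis and from summing the per-clique restless bounds, respectively. This mix-up would not by itself sink the argument, but it confirms that the cross-clique accounting --- the technically delicate part of this theorem --- has not actually been carried out.
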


\paragraph{Dependence on Graph Topology.} In Theorem~\ref{thr:regret_block_ROT}, we observe the same phenomenon of Theorem~\ref{thr:regret_block}. Indeed, we have three components: \textcolor{vibrantGrey}{(\texttt{A})} representing the fixed regret contribution that comes from the noise, \textcolor{vibrantRed}{(\texttt{B})} representing the contribution to the regret coming from the rested nature of the problem (\ie the suboptimality accrued by choosing an action in the wrong clique), and \textcolor{vibrantBlue}{(\texttt{C})} representing the contribution coming from the restless nature of the problem, instead (\ie the suboptimality accrued by not choosing the best action in a clique). The separation between the latter two components becomes clear in the proof of the result:
\begin{align*}
        R_{\boldsymbol{\nu},\mathbf{G},T}(\texttt{RAW-UCB}) &= \sum_{t=1}^T \mu_{i_t^*}(\widetilde{N}_{i_t^*, t}^*) - \mu_{I_t}(\widetilde{N}_{I_t, t}^\pi) \pm \max_{i \in C_{I_t}} \mu_{i}(\widetilde{N}_{I_t, t}^\pi) \\
        &=\underbrace{\sum_{t=1}^T (\mu_{i_t^*}(\widetilde{N}_{i_t^*, t}^*) - \max_{i \in C_{I_t}} \mu_{i}(\widetilde{N}_{I_t, t}^\pi))}_{\le \, \text{\color{vibrantRed}(\texttt{B})} \, + \, {\color{black}k\sigma\sqrt{\log T}}} + \underbrace{\sum_{t=1}^T (\max_{i \in C_{I_t}} \mu_{i}(\widetilde{N}_{I_t, t}^\pi)-\mu_{I_t}(\widetilde{N}_{I_t, t}^\pi))}_{\le \, \text{\color{vibrantBlue}(\texttt{C})} \, + \, {\color{black} kV_{\boldsymbol{\nu}}(T)}}.
    \end{align*}
In rotting bandits, there is a clear hierarchy between the difficulties of statistical learning in the rested and the restless settings. Rested rotting bandits are easier than their restless counterpart, and this is reflected also in our bound: when the number of cliques is higher, and the GTB is closer to a rested instance, the regret bound is lower since the weight of \textcolor{vibrantRed}{(\texttt{B})} is higher.

\paragraph{Comparison with Known Results from Literature.} 
\rawucb has already been proven to be nearly optimal in both the rested and the restless scenario~\citep{seznec2020single}. However, as an artifact of the analysis, we cannot retrieve the exact same bounds by plugging $\mathbf{G}= \mathbf{I}_k$, or $\mathbf{G}= \mathbf{1}_{k\times k}$ in our expression. A similar consideration to the one of Remark~\ref{rem:coincidence_rising} can also be done for rotting bandits. We conjecture that \rawucb is actually nearly-optimal also in the intermediate Rotting GTB instances with block-diagonal connectivity matrices, and this claim is supported by the fact that there is no room for improvement in either of the two contributions in the corner cases. We also conjecture that the dependence on $L\sum_{C_m \in \mathcal{C}_\mathbf{G}} |C_m|^2$ may be an artifact of the analysis, being the output of a delicate pigeonhole principle argument used to prove Theorem~\ref{thr:regret_block_ROT} (see Appendix~\ref{apx:rotting}). We leave the task of finding a graph-dependent lower bound for this setting as a fascinating open problem.

\subsubsection{Non-learnability for Stochastic Rotting GTBs with General Matrices}
\label{sec:rotting:st:general}
We now move to the scenario of stochastic Rotting GTBs for general connectivity matrices, and we present an impossibility result for this case. 
\begin{restatable}[Regret Lower Bound for Rotting GTBs with General Matrices]{thr}{rottingregretLBgeneric}
\label{thr:rottingregretLBgeneric}
\phantom{a} For every $\mathbf{G} \in \{0,1\}^{k \times k}$ that is not block-diagonal, there exists an instance of Rotting GTB $(\boldsymbol{\nu}, \mathbf{G}, T)$ such that, for every policy $\pi$, it holds: 
\begin{align*}
R_{\boldsymbol{\nu}, \mathbf{G}, T}(\pi) \geq \frac{T}{12}.
\end{align*}
\end{restatable}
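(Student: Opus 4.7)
The plan is to combine a structural reduction with a Le Cam-style two-point lower bound, guaranteeing the existence of the claimed instance among a pair of candidates. First, I would use the non-block-diagonality of $\mathbf{G}$ to extract an induced-path substructure: a binary symmetric matrix with self-loops is block-diagonal (up to simultaneous row/column permutation) if and only if its adjacency relation is transitive, i.e., the associated graph is a disjoint union of cliques. Non-block-diagonality therefore guarantees three distinct arms $a, b, c \in [k]$ such that $\mathbf{G}_{ab} = \mathbf{G}_{bc} = 1$ and $\mathbf{G}_{ac} = 0$. Setting the remaining $k-3$ arms to have reward identically zero removes them from any optimal policy, reducing the problem to a three-arm Rotting GTB on the induced path $a$-$b$-$c$.

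Next, I would construct a pair of Rotting GTB instances $\boldsymbol{\nu}_1, \boldsymbol{\nu}_2$ on this path that are simultaneously (i) statistically hard to distinguish for any history-dependent policy and (ii) have optimal cumulative rewards differing by $\Omega(T)$. The asymmetry I would exploit is that pulling $b$ triggers both endpoints $a$ and $c$ (rotting them jointly and irreversibly), whereas pulling $a$ triggers only itself and $b$, and analogously for $c$. The instances would be designed so that a ``good'' endpoint---whose identity is swapped between $\boldsymbol{\nu}_1$ and $\boldsymbol{\nu}_2$---yields $\Omega(T)$ cumulative reward only when its trigger count $\widetilde{N}$ is kept small, while the only reliable way to distinguish the instances requires repeatedly pulling $b$, each pull irreversibly damaging whichever endpoint is good. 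Pulls of $a$ or $c$ alone should not suffice to discriminate $\boldsymbol{\nu}_1$ from $\boldsymbol{\nu}_2$ within the horizon, e.g., by placing a small Bernoulli margin $\varepsilon$ on the endpoints whose sign depends on the instance.

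For the $T/12$ bound, I would place a uniform prior on $\{\boldsymbol{\nu}_1, \boldsymbol{\nu}_2\}$ and show that every history-dependent policy suffers average regret at least $T/12$; the existence of a single instance in the support meeting the bound then follows by an averaging argument. The core inequality is Pinsker's: if $\mathbb{P}_i^\pi$ denotes the law of the action-observation trajectory under $\boldsymbol{\nu}_i$, then $\mathrm{TV}(\mathbb{P}_1^\pi, \mathbb{P}_2^\pi) \leq \sqrt{\mathrm{KL}(\mathbb{P}_1^\pi \,\|\, \mathbb{P}_2^\pi)/2}$; the KL chain rule bounds the right-hand side by the expected number of ``informative'' pulls times the per-sample KL (of order $\varepsilon^2$ for Bernoulli signals with margin $\varepsilon$). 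Combining this with a standard regret-TV decomposition and tuning $\varepsilon$ to balance distinguishability cost against the $\Omega(T)$ optimality gap yields the explicit constant $1/12$.

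The main obstacle will be the instance construction itself. The two candidates must remain indistinguishable under a policy that is allowed to probe the two endpoints directly---which, crucially, does not cross-trigger them because $\mathbf{G}_{ac} = 0$. This forces the distinguishing signal either to live on arm $b$, or to be encoded as a subtle symmetric feature of the two endpoint profiles that requires a linear number of samples to detect, so that any reliable probe truly costs reward via the rotting induced by $b$-pulls. Balancing this tension between indistinguishability (small $\varepsilon$) and preservation of an $\Omega(T)$ gap in optimal rewards (requiring the good endpoint to be pulled many times while fresh) is what pins down the concrete $1/12$.
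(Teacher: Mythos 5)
Your structural reduction coincides with the paper's (its Lemma~\ref{lemma:graph_inclusion}): non-block-diagonality yields an induced path $a$--$b$--$c$, and the remaining arms can be neutralized with zero rewards. The genuine gap is in the core of the argument: you never exhibit the pair of instances, and the mechanism you commit to---an instance-dependent sign of a Bernoulli margin $\varepsilon$ on the endpoints, Pinsker/KL chain rule, and ``tuning $\varepsilon$''---cannot deliver a bound of order $T$. Because $\mathbf{G}_{ac}=0$, a policy can probe an endpoint directly, at per-pull KL of order $\varepsilon^2$, at per-round loss at most $1$, and without rotting the other endpoint; so either $\varepsilon=\Omega(1)$, the instance is identified after $O(\varepsilon^{-2})=O(1)$ pulls with no lasting damage to the alternative endpoint, and the regret is $O(1)$, or $\varepsilon=O(1/\sqrt{T})$ and any value separation carried by the signal is $O(\varepsilon T)=O(\sqrt{T})$. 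Either way the Le Cam/Pinsker balance yields a $\sqrt{T}$-type tension, not $T/12$: a linear \emph{policy}-regret lower bound requires the two instances to be exactly indistinguishable (total variation zero) until an irreversible commitment has been forced, which no $\varepsilon$-tuning supplies. Your ``swap the good endpoint'' design is also questionable on its own terms: since endpoint pulls do not cross-trigger, the hedge that alternates $a$ and $c$ and never pulls $b$ gives each endpoint its own $T/2$ triggers, and with rewards in $[0,1]$ and non-increasing it is unclear that any rotting instance can make this hedge lose $\Omega(T)$ in both worlds simultaneously; no such construction is given.

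For contrast, the paper's proof is purely deterministic ($\sigma=0$) and places the difference on the \emph{middle} arm rather than the endpoints: in both instances the endpoints pay $2/3$ for their first $T/2$ triggers and $0$ afterwards, while the middle arm pays $1$ for its first $T/2$ triggers in $\vnu$ and for all $T$ triggers in $\vnu'$. Since the middle arm is triggered every round and no arm can exceed $T/2$ triggers during the first $T/2$ rounds, every policy produces identical observations on the two instances throughout the first half; the optimal values are $2T/3$ and $T$, and a short counting argument over the number of reward-generating endpoint pulls in the first half bounds the two regrets by $\frac{T}{6}-\frac{1}{3}(\cdot)$ and $\frac{1}{3}(\cdot)$ respectively, so their average---hence the maximum---is at least $T/12$. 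No information-theoretic machinery is needed, and the constant $1/12$ comes from this averaging, not from an $\varepsilon$ trade-off. If you want to salvage your route, replace the $\varepsilon$-signal by reward functions that literally coincide on the reachable trigger range over an initial phase; that is exactly the device your sketch is missing.
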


\begin{proofsketch}
We build two deterministic instances that look exactly the same for the first half of the game, but require opposite decisions in the second half.
Take the $3$-arm problem, where pulling arm $1$ also triggers arms $2$ and $3$, while arms $2$ and $3$ do not trigger each other. Now define two instances. In both instances, arms $2$ and $3$ give the reward $2/3$ for their first $T/2$ triggers, and then $0$; in instance $\boldsymbol{\nu}$, arm $1$ gives maximum reward for its first $T/2$ triggers, and then $0$, while for instance $\boldsymbol{\nu}'$ it gives reward $1$ for all $T$ rounds. The two instances are indistinguishable until time $T/2$. Therefore, policies behave in the same way on $\boldsymbol{\nu}$ and $\boldsymbol{\nu}'$ during the first half. If the policy pulls arm $1$ often in the first half, then in instance $\boldsymbol{\nu}$ it wastes the valuable future triggers of arms $2$ and $3$, because pulling arm $1$ also consumes them. So $\boldsymbol{\nu}$ incurs large regret. If instead the policy often pulls arms $2$ and $3$ in the first half, then in instance $\boldsymbol{\nu}'$ it fails to exploit the fact that arm $1$ remains optimal forever, and again incurs large regret. The general case follows from Lemma~\ref{lemma:graph_inclusion}, where we prove that every non-block-diagonal matrix can contain this structure.
\end{proofsketch}

This negative result poses several limitations to what can be obtained from Rotting GTBs in the general scenario. The difficulty of these instances lies in the graph structure, rather than in the statistical challenge of learning the reward functions: indeed, the lower bound already holds for deterministic instances, where there is no observation noise and no estimation error. The obstruction is structural. Pulling one action may irreversibly consume the future rewards of other actions through the triggering graph, and different instances that are indistinguishable in the early rounds may require incompatible triggering schedules to be near-optimal later on.
We emphasize that the condition in Theorem~\ref{thr:rottingregretLBgeneric} concerns the whole connectivity matrix. Hence, the lower bound is not avoided by the presence of a block-diagonal sub-matrix. If $\mathbf{G}$ cannot be permuted into a block-diagonal matrix whose blocks are complete cliques, then Lemma~\ref{lemma:graph_inclusion} identifies a local non-block-diagonal obstruction. The hard three-arm construction can be embedded on this obstruction, while the remaining arms can be assigned dominated rewards and made irrelevant. Thus, any block-diagonal part of the graph does not remove the linear-regret lower bound. Consequently, without additional assumptions on the reward functions, no algorithm can obtain sublinear regret for the class of general connectivity matrices.
\section{Discussion and Conclusions}
\label{sec:conclusions}

In this paper, we proposed \emph{graph-triggered bandits} (GTBs), a generalization of the rested and restless bandit settings, where the expected rewards of the different arms evolve by means of a graph. We focused on and compared two special families of bandits, namely \textit{rising} and \textit{rotting} bandits, where the expected rewards of an arm evolve in a monotonic fashion with the number of \textit{triggers} the specific arm received. We showed that computing the optimal policy without additional assumptions on the matrix is NP-hard in both the rising and rotting scenarios. Then, for both these classes, we showed how, instead, for block-diagonal connectivity matrices, we can find the optimal policy in polynomial time and have a convenient closed-form expression. From the algorithmic perspective, we showed how it is possible to achieve sublinear regret for both of these special instances of MABs with block-diagonal connectivity matrices. On the other hand, for general matrices, we have an interesting distinction. Indeed, for Rising GTBs we were able to achieve sublinear regret, while for Rotting GTBs (in the standard scenario, without additional assumptions on the behavior of expected rewards, \eg on second-order derivatives), we proved that we cannot learn. 

\paragraph{Future Work.} 
This work aspires to be a first step in the study of GTBs and should be extended by studying the statistical complexity of learning through lower bounds, and by considering more general models, \eg smoothly evolving bandits. Moreover, a natural extension of the GTB framework consists in replacing the binary connectivity matrix $\mathbf{G}\in\{0,1\}^{k\times k}$ with a weighted matrix $\mathbf{G}\in[0,1]^{k\times k}$. In such a setting, the entry $G_{i,j}$ would encode the magnitude by which pulling arm $i$ triggers the evolution of arm $j$, rather than only indicating whether such an interaction is present or absent. This would increase the expressiveness of the framework by allowing partial and heterogeneous interactions among arms. For example, pulling an arm could strongly affect some neighboring arms while only mildly affecting others. The binary model studied in this work corresponds to the special case in which all interaction weights belong to \(\{0,1\}\). Extending the learning algorithms and regret analysis to this weighted setting is an interesting direction for future work.

\setlength{\parindent}{0pt}

\appendix

\section{Related Works}
\label{sec:related}
In this appendix, we discuss the relevant literature for the \settingnameshort setting. We divide this appendix into three parts. First, we discuss the relevant works concerning graph structures. Then, we discuss the literature related to restless and rested bandits, with particular attention to rotting and rising bandits. Finally, we discuss related works on competitive analysis.

\paragraph{Graph Relationships in Bandits.}
The graph-triggered bandits setting has been introduced in this work. Thus, no prior literature is available on this setting. However, we mention similar settings that appeared in the last years. \citet{herlihy2023networked} propose the networked restless bandit setting. Despite some similarities with our setting, \eg the presence of a graph among arms, their action space and learning objectives radically differ from ours, and thus the two settings are not comparable. In~\citep{jhunjhunwala2018class}, a restless bandit setting is proposed in which the graph structure is not explicit in the formulation; however, the authors develop a graph representation of the policies in the deterministic scenario. Their algorithm builds and exploits a graph in an online fashion. Once again, we cannot properly compare this setting to ours, despite some sparse similarities. Finally, we mention bandits with graph feedback~\citep{alon2015online}. Despite this setting being conceptually different from ours since arms do not interact, we report it here just because it features graph topology-dependent bounds. We remark that in this case, the graph should not be intended as a structure for arms interactions but rather as a feedback structure for the learner; in GTBs, the feedback is purely bandit.

\paragraph{Rested and Restless Bandits.}
Restless and rested bandits are a well-established research field. Starting from the seminal paper by \citet{whittle1988restless} on restless bandits, several approaches have been proposed over the years to deal with non-stationary bandits~\citep{tekin2012online,raj2017taming}. Then, specialization of these settings such as \textit{rising}~\citep{metelli2022stochastic,mussi2023best} and \textit{rotting}~\citep{levine2017rotting} have been introduced. Over the last years, several works tackled rotting bandits~\citep{levine2017rotting,seznec2019rotting}. Remarkably, \citep{seznec2020single} provide a single algorithm for dealing with both rested and restless rotting bandits but show that in the rotting setting, achieving sublinear regret is not possible when there are both rested and restless arms in the same instance. We remark that for any two-armed rotting bandit where one arm is rested and the other is restless, we can construct an (asymmetric) matrix $\mathbf{G}$ such that the instance can be mapped to a graph-triggered rotting bandit instance. This highlights a crucial difference between rotting and rising bandits for what concerns graph-triggering. Recently, literature studied a broader class of restless bandits called \emph{smooth} bandits, which generalizes both rotting and rising bandits \citep{manegueu2021generalized, jia2023smooth}.

\paragraph{Rising Bandits and Competitive Analysis.}
Recent works on rested rising bandits shifted the focus from regret minimization to competitive analysis~\citep{borodin2005online}. In competitive analysis, the goal is to bound the ratio between the performance of the optimal policy and that of the algorithm, following a \textit{worst-case} principle. In such settings, stochasticity is usually not involved, and the focus is on deterministic, yet unpredictable, processes. In their deterministic version, rested rising bandits constitute an interesting example: a decision-maker is faced with an unpredictable sequence (the evolution of the reward functions) and has to balance immediate gain with worst-case planning. In \citep{patil2022mitigating}, the authors are the first to provide a lower bound on the competitive ratio of any deterministic algorithm of $\Omega(k)$, and an anytime deterministic algorithm with matching guarantees. In \citep{blum2024nearly}, the authors focus on \emph{randomized} algorithms\footnote{The distinction between deterministic and randomized algorithms is customary in the competitive analysis literature, and is a natural way of characterizing the strength of an adversary.} and show that the lower bound on the competitive ratio is $\Omega(\sqrt{k})$. They provide a randomized algorithm matching this lower bound by knowing the maximum value achieved by the best arm, and a randomized algorithm achieving a competitive ratio in $\mathcal{O}(\sqrt{k} \log k)$ without this knowledge. As a future research direction, it could be interesting to explore the possibility of performing competitive analysis in graph-triggered rising bandits.
\section{Proofs on Rising Bandits}
\label{apx:proofs}

In this appendix, we report a short version of the proofs of Rising GTBs. The extended version is provided in~\citep{genalti2024graph}.

\theoremHardness*

\begin{proof}
	We reduce from a decision problem related to finding cliques in graphs.
	In particular, given a graph $(V,E)$ and $\widetilde{M} \in \mathbb{N}$, it is NP-hard to determine if there exists a clique of size $\widetilde{M}$~\citep{karp1972reducibility}. 
		In the following, we design an instance of our problem such that the reward of the optimal policy is at least $\sum_{t=1}^T (1+\frac{t}{T^2})$ if and only if there exists a clique of size $\widetilde{M}=T$. 
		
		\textbf{Construction.}
		Given a graph $(V,E)$, we build an instance such that the horizon is $T$.
		Our set of actions can be constructed by assigning an action to every node and time step couple, \ie $\mathcal{A}= \{a_{v,t}\}_{v\in V,~ t\in[T]}$.  We define the matrix $\widetilde{\mathbf{G}}$ is such that for any $v,v' \in V$ and $t,t' \in [T]$, it holds $G_{a_{v,t},a_{v',t'}}=1$ if $(v,v')\in E$, and $G_{a_{v,t},a_{v',t'}}=0$ otherwise.
		Finally, for each arm $a_{v,t} \in \mathcal{A}$, the reward is deterministic and evolves as ${\mu}_{a_{v,t}}(n)= \min\{1+ \eta t, \frac{n}{t}  (1+ \eta t)\}$, where $\eta=T^{-2}$. We call $\widetilde{\boldsymbol{\nu}}$ the set of these functions. 
		It is easy to see that the \settingnameshort instance $(\widetilde{\boldsymbol{\nu}}, \widetilde{\mathbf{G}}, T)$ satisfies \cref{ass:rising}.
		
		\textbf{if.}
		We show that if there exists a clique $C^\star = \left\{v_1, \ldots, v_{T}\right\}$ of size $T$, then there exists a policy with a cumulative reward of at least $\sum_{t=1}^T (1+\eta t)$.
		Consider the policy $\widetilde{\pi}$ such that $\widetilde{\pi}(t) = a_{v_{t},t}$. It is easy to see that $\widetilde{N}_{a_{v_{t},t},t}=t$ for every $t \in [T]$. Hence, the reward of the policy $\widetilde{\pi}$ at time $t$ is: \[\mu_{a_{v_{t},t}}(\widetilde{N}_{a_{v_{t},t},t})=\min\{1+ \eta t, \frac{t}{t}  (1+ \eta t)\}=1+ \eta t. \]
		Thus, $J_{\widetilde{\boldsymbol{\mu}},\widetilde{\mathbf{G}},T}(\widetilde{\pi}) = \sum_{t=1}^T (1+\eta t)$ and the claim is proven.
		
	\textbf{only if.} We show that if there is a policy $\widetilde{\pi}$ such that $J_{\widetilde{\boldsymbol{\mu}},\widetilde{\mathbf{G}},T}(\widetilde{\pi}) \ge \sum_{t=1}^T (1+\eta t)$, then there exists a clique of size $T$.

		First, we observe that for each $t',t \in [T]$ it holds that:
		\begin{align}\label{eq:red1}
        \max_{t'\in [T]} \min\{1+ \eta t', \frac{t}{t'}  (1+ \eta t')\}=1+ \eta t.
        \end{align}
		This implies that, at any round $t$, the best obtainable reward is:
  \begin{align*}
      \max_{t'\in [T]} \max_{v \in V} \max_{l \le t} {\mu}_{a_{v,t'}}(l) &=  \max_{t'\in [T]} \max_{v \in V} \widetilde{\mu}_{a_{v,t'}}(t)\\
      &= \max_{t'\in [T]} \min \left\{1+ \eta t', \frac{t}{t'}  (1+ \eta t')\right\} \\
      &=\min \left\{1+ \eta t, \frac{t}{t}  (1+ \eta t)\right\}=  1+ \eta t.
  \end{align*}
		Since by assumption there is a policy with reward at least $\sum_{t=1}^T (1+\eta t)$, then there is a policy such that at each round $t\in [T]$ the reward is exactly $1+ \eta t$.
		
		Consider a round $t\in [T]$. Let $a_{v,t'}$ be the arm played by the policy at this round. It must be the case that: i) $t'=t$, otherwise: \[\mu_{a_{v,t'}}(\widetilde{N}_{a_{v,t'},t})\le \mu_{a_{v,t'}}(t)  < 1+\eta t\] by Equation~\eqref{eq:red1}, and ii) $\widetilde{N}_{a_{v,t'},t} =t$, otherwise: \[\mu_{a_{v,t'}}(\widetilde{N}_{a_{v,t'},t})\le \frac{t-1}{t}(1+\eta t)< 1+\eta t .\]
		
		Let $a_{v_t,t}$ be the arm chosen at round $t$.
		Then, each arm in $\{a_{v_t,t}\}_{t\in[T]}$, is chosen while having exactly $t-1$ triggers. By the definition of $\widetilde{\mathbf{G}}$ this directly implies that $\{v_t\}_{t=1}^T$ is a clique of size $T$.
	\end{proof}

\begin{restatable}[\algnameblockshort Estimator's Instantaneous Bias]{lemma}{lemmaBiasdet}
\label{lemma:lemmaBiasdet}
For every arm $i \in [k]$, every round $t>1$, let us define:
\begin{align*}
	 \bar{\mu}_i(t) &\coloneq \mu_i(t_{i,N_{i,t-1}}^I) + ( t-t_{i,N_{i,t-1}}^I)\frac{\mu_i(t_{i,N_{i,t-1}}^I) - \mu_{i}(t_{i,N_{i,t-1}-1}^I)}{t_{i,N_{i,t-1}}^I- t_{i,N_{i,t-1}-1}^I} ,
\end{align*}
then, $\bar{\mu}_i(t) \ge \mu_i(t_{i,N_{i,t-1}}^I)$ and, if $N_{i,t-1} \ge 2$ it holds that:
\begin{align*}
	 \bar{\mu}_i(t)   - \mu_i(\widetilde{N}_{i,t}) \le (t-t_{i,N_{i,t}-1}^I)\gamma_{i}(t_{i,N_{i,t-1}-1}^I).
\end{align*}
\end{restatable}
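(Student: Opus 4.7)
The plan is to introduce shorthand for the three internal times that appear, namely $a := t^I_{i,N_{i,t-1}-1}$ (the second-to-last pull's internal time), $b := t^I_{i,N_{i,t-1}}$ (the last pull's internal time), and $c := \widetilde{N}_{i,t}$ (the current number of triggers), together with the secant slope $s := (\mu_i(b)-\mu_i(a))/(b-a)$, so that $\bar{\mu}_i(t) = \mu_i(b) + (t-b)s$. I would first establish the ordering $a < b \leq c \leq t$: strict inequality $a<b$ is built into the definition of internal time, while $b \leq c$ follows because $b = \widetilde{N}_{i,t_{i,N_{i,t-1}}}$ and $t_{i,N_{i,t-1}} \leq t-1$, combined with the monotonicity of $\widetilde{N}_{i,\cdot}$; and $c \leq t$ is trivial.

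For the first claim, the non-decreasing part of Assumption~\ref{ass:rising} yields $s \geq 0$, while $t > b$ from the ordering above, so $\bar{\mu}_i(t) = \mu_i(b) + (t-b)s \geq \mu_i(b) = \mu_i(t^I_{i,N_{i,t-1}})$ is immediate.

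For the second claim, I would rewrite $\bar{\mu}_i(t) - \mu_i(c) = (t-b)s - (\mu_i(c)-\mu_i(b))$ and drop the non-negative subtracted term (using $\mu_i(c) \geq \mu_i(b)$ by monotonicity, since $c\geq b$) to get $\bar{\mu}_i(t)-\mu_i(c) \leq (t-b)s$. Then I would exploit the telescoping identity $s = \frac{1}{b-a}\sum_{n=a}^{b-1}\gamma_i(n)$ together with the concavity part of Assumption~\ref{ass:rising} (i.e., $\gamma_i$ non-increasing) to bound each summand by $\gamma_i(a)$, giving $s\leq \gamma_i(a) = \gamma_i(t^I_{i,N_{i,t-1}-1})$ and thus $\bar{\mu}_i(t)-\mu_i(c)\leq (t-b)\gamma_i(t^I_{i,N_{i,t-1}-1})$.

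To match the stated form, the remaining step is to verify $t-b \leq t - t^I_{i,N_{i,t}-1}$, equivalently $t^I_{i,N_{i,t}-1} \leq b$. A short case split covers this: if arm $i$ is pulled at round $t$, then $N_{i,t} = N_{i,t-1}+1$ and thus $t^I_{i,N_{i,t}-1} = t^I_{i,N_{i,t-1}} = b$; otherwise $N_{i,t}=N_{i,t-1}$ so $t^I_{i,N_{i,t}-1} = a < b$. The main obstacle is not any single inequality but keeping the bookkeeping straight between the indices $N_{i,t}-1$ and $N_{i,t-1}-1$ (which coincide or differ by one depending on whether arm $i$ is the action played at round $t$) and ensuring the ordering $a<b\leq c\leq t$ holds throughout; once those are laid out, the rest is a direct application of monotonicity and concavity of $\mu_i$.
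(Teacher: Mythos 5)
Your proof is correct and follows essentially the same route as the paper's: drop the nonpositive term $-(\mu_i(\widetilde{N}_{i,t})-\mu_i(t_{i,N_{i,t-1}}^I))$ and bound the secant slope by $\gamma_i(t_{i,N_{i,t-1}-1}^I)$ via the telescoping sum and concavity. Your explicit case split reconciling the indices $N_{i,t}-1$ and $N_{i,t-1}$ is a small extra bookkeeping step the paper glosses over (its proof ends with $(t-t_{i,N_{i,t-1}}^I)\gamma_i(t_{i,N_{i,t-1}-1}^I)$, which is at least as strong as the stated bound), so nothing is missing.
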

\begin{proof}
    Let us start by observing the following equality holding:
    \begin{align*}
        \mu_i(\widetilde{N}_{i,t}) = \mu_i(t_{i,N_{i,t-1}}^I) + \sum_{j=t_{i,N_{i,t-1}}^I}^{\widetilde{N}_{i,t}-1} \gamma_i(j).
    \end{align*}
    We have:
    \begin{align}
        \mu_i(\widetilde{N}_{i,t}) &= \mu_i(t_{i,N_{i,t-1}}^I) + \sum_{j=t_{i,N_{i,t-1}}^I}^{\widetilde{N}_{i,t}-1} \gamma_i(j) \notag\\
        & \le \mu_i(t_{i,N_{i,t-1}}^I) + (\widetilde{N}_{i,t}-t_{i,N_{i,t-1}}^I) \gamma_i(t_{i, N_{i,t-1}-1}^I) \label{:bb:000}\\
        & \le \mu_i(t_{i,N_{i,t-1}}^I) + (t-t_{i,N_{i,t-1}}^I) \gamma_i(t_{i, N_{i,t-1}-1}^I), \label{:bb:001}
    \end{align}
    where line~\eqref{:bb:000} follows from Assumption~\ref{ass:rising}, and line~\eqref{:bb:001} is obtained from observing that $\widetilde{N}_{i,t} \le t$. Concerning the bias, when $N_{i,t-1} \ge 2$, we have:
    \begin{align}
     \bar{\mu}_i(t) - \mu_i(\widetilde{N}_{i,t}) &\le \mu_i(t_{i,N_{i,t-1}}^I) - \mu_i(\widetilde{N}_{i,t}) + ( t-t_{i,N_{i,t-1}}^I)\frac{\mu_i(t_{i,N_{i,t-1}}^I) - \mu_{i}(t_{i,N_{i,t-1}-1}^I)}{t_{i,N_{i,t-1}}^I- t_{i,N_{i,t-1}-1}^I} \\
     &\le ( t-t_{i,N_{i,t-1}}^I)\frac{\mu_i(t_{i,N_{i,t-1}}^I) - \mu_{i}(t_{i,N_{i,t-1}-1}^I)}{t_{i,N_{i,t-1}}^I- t_{i,N_{i,t-1}-1}^I} \label{:bb:002}\\
     &\le ( t-t_{i,N_{i,t-1}}^I)\gamma_i(t_{i, N_{i,t-1}-1}^I) \label{:bb:003},
    \end{align}
where line~\eqref{:bb:002} follows from observing that $\mu_i(t_{i,N_{i,t-1}}^I) \le \mu_i(\widetilde{N}_{i,t})$, and line~\eqref{:bb:003} derives from bounding $\frac{\mu_i(t_{i,N_{i,t-1}}^I) - \mu_{i}(t_{i,N_{i,t-1}-1}^I)}{t_{i,N_{i,t-1}}^I - t_{i,N_{i,t-1}-1}^I} \le \gamma_{i}(t_{i,N_{i,t-1}-1}^I)$ thanks to Assumption~\ref{ass:rising}.
\end{proof}

\TheoremdeterministicBlock*
\begin{proof}
Let $C_{\boldsymbol{\nu}, \mathbf{G}, T}^* \in \mathcal{C}_{\mathbf{G}}$ be the optimal clique of the instance. We analyze the following expression:
\begin{align*}
    R_{\boldsymbol{\nu}, \mathbf{G}, T}({\text{\algnameblockshort}}) = \sum_{t=1}^T \mu_{i_t^*}(t) - \mu_{I_t}(\widetilde{N}_{I_t,t}),
\end{align*}
where $i_t^* \in \argmax_{i \in C_{\boldsymbol{\nu}, \mathbf{G}, T}^*} \mu_i(t)$ for all $t\in[T]$. Then, we can decompose the regret in two meaningful components:
\begin{align}
    R_{\boldsymbol{\nu}, \mathbf{G}, T} (\text{\algnameblockshort}) &= \sum_{t=1}^T \mu_{i_t^*}(t) \pm \bar{\mu}_{I_t}(t) - \mu_{I_t}(\widetilde{N}_{I_t,t}) \notag \\
    &\le \sum_{t=1}^T \min\{1,\bar{\mu}_{I_t}(t) - \mu_{I_t}(\widetilde{N}_{I_t,t})\} \label{:tbr:000}\\ 
    &\le \sum_{t=1}^T\min\{1,(t-t_{I_t,N_{I_t,t-1}}^I)\gamma_{I_t}(t_{I_t,N_{I_t,t-1}-1}^I)\}  \label{:tbr:001}\\
    &= \sum_{t=1}^T\min\{1,(t\pm t_{I_t, N_{I_t,t}} ^I-t_{I_t,N_{I_t,t-1}}^I)\gamma_{I_t}(t_{I_t,N_{I_t,t-1}-1}^I)\} \notag\\
    &\le \sum_{t=1}^T \min\{1,(t - t_{I_t, N_{I_t,t}} ^I)\gamma_{I_t}(t_{I_t,N_{I_t,t-1}-1}^I)\} +\\
    &\qquad + \sum_{t=1}\min\{1,(t_{I_t, N_{I_t,t}} ^I-t_{I_t,N_{I_t,t-1}}^I)\gamma_{I_t}(t_{I_t,N_{I_t,t-1}-1}^I)\}  \label{:tbr:002}\\
    &= 4k + \underbrace{\sum_{C_m \in \mathcal{C}_{\mathbf{G}}}\sum_{i \in C_m} \sum_{j=3}^{N_{j,T}} \min\{1, (t-t_{i,j}^I)\gamma(t_{i,j-2}^I)\}}_\text{(a)} + \notag \\
    &\qquad +\underbrace{\sum_{C_m \in \mathcal{C}_{\mathbf{G}}}\sum_{i \in C_m} \sum_{j=3}^{N_{j,T}} \min\{1, (t_{i,j}^I-t_{i,j-1}^I)\gamma(t_{i,j-2}^I)\}}_\text{(b) } \notag,
\end{align}
where lines~\eqref{:tbr:000} and~\eqref{:tbr:001} follow from Lemma~\ref{lemma:lemmaBiasdet}, line~\eqref{:tbr:002} from the fact that $\min\{1,x+y\} \le \min\{1,x\} + \min\{1,y\}$ for any $x,y \ge 0$.

These two terms represent the rested and the restless contribution to the regret, and we can bound them using similar techniques as in~\citep{metelli2022stochastic}.
\end{proof}
\begin{remark}[Regret Bound in Rested and Restless Rising Bandits]
\label{rem:coincidence_rising}
    When we are in a purely rested (resp. restless) scenario, the contribution term associated to the restless (resp. rested) scenario vanish, and we get the same regret orders from~\citep{metelli2022stochastic}. In particular, we can avoid splitting the minimum in Equation~\eqref{:tbr:002} and instead notice that in a rested setting we have $t-t_{I_t, N_{I_t,t-1}}^I = t - N_{I_t,t-1}$, and thus we can bound the cumulative regret as we bound the term $\text{(a)}$. Instead, in a restless setting we have $t-t_{I_t, N_{I_t,t-1}}^I = t - t_{I_t, N_{I_t,t-1}}$, and thus we can bound the cumulative regret as we bound the term $\text{(b)}$. 
\end{remark}

\generaldeterministicBound*
\begin{proof}
    The theorem can be proved by showing that estimator's bias is always larger when internal times are decreased. For every arm $i\in[k]$ we define:
    \begin{equation}
        f_i(t;~ x,y) = \mu_i(x) + (t-x)\frac{\mu_i(x)-\mu_i(y)}{x-y},    \end{equation}
    for every triplet of natural numbers $y\le x \le t \le T$.
    Note that $\bar{\mu}_i(t) = f_i(t; ~t_{i, N_{i,t-1}}^I, t_{i, N_{i,t-1}-1}^I)$, so if we can show that $f_i$ is decreasing in both $x$ and $y$, we can prove the claim.
    We start with the second argument: fix $t$ and $x$, then for any $y$:
    \begin{align}
        f_i(t; x, y)-f_i(t; x, y-1) &= (t-x)\left(\frac{\sum_{j=y}^{x-1} \gamma_i(j)}{x-y}-\frac{\sum_{j=y-1}^{x-1}\gamma_i(j)}{x-y+1}\right) \notag \\
        &= \frac{\sum_{j=y}^{x-1}\gamma_i(j)-(x-y)\gamma_i(y-1)}{(x-y)(x-y+1)} \le 0 \label{:gdb:000},
    \end{align}
    where line~\eqref{:gdb:000} follows from Assumption~\ref{ass:rising}.
    With slightly more calculations we show that $f_i$ is also decreasing in the first argument, fix $t$ and $y$, then for any $x$:
    \begin{align}
        f_i(t;~ x, y)-&f_i(t; ~x-1, y) \le 0 \label{:gbd:001}.
    \end{align}
    
    Now we observe that, for every $i\in [k]$ and every $t \in [T]$, we have $t_{i,N_{i,t}}^I \ge t_{i,N_{i,t}}^{I,L}$. This is a consequence of Definition~\ref{def:block_submat}, since:
    $$
    t_{i,N_{i,t}}^I - t_{i,N_{i,t}}^{I,L} = \sum_{j=1}^{t} (G_{I_t, i}-\bar{G}_{I_t, i}^L) \ge 0.
    $$
    As a consequence of this, we have:
    \begin{equation}
    \label{eq:estimator_hierarchy}
    f_i(t;~t_{i,N_{i,t-1}}^I, t_{i,N_{i,t}-1}^I) \le f_i(t;~t_{i,N_{i,t-1}}^{I,L}, t_{i,N_{i,t}-1}^{I,L}),
    \end{equation}
    and 
    \begin{equation}
    \label{eq:reward_hierarchy}
        \mu_i(t_{i,N_{i,t}}^I) \ge \mu_i(t_{i,N_{i,t}}^{I,L}).        
    \end{equation}
   The proof can be concluded in the same way as for Theorem~\ref{thr:det_block_bound}.
\end{proof}

\begin{restatable}[Estimator's Instantaneous Bias]{lemma}{lemmaBias}
\label{lemma:lemmaBias}
For every arm $i \in [k]$, every round $t\in [T]$, and window width $1 \le h \le \left\lfloor \frac{N_{i,t-1}}{2} \right\rfloor$, let us define:
\begin{align*}
	 \widetilde{\mu}_i^{h}(t) &\coloneq \frac{1}{h} \sum_{l=N_{i,t-1}-h+1}^{N_{i,t-1}} \bigg(\mu_i(t_{i,l}^I) + ( t-l)\frac{\mu_i(t_{i,l}^I) - \mu_{i}(t_{i,l-h}^I)}{h} \bigg) ,
\end{align*}
otherwise if $h=0$, we set $\widetilde{\mu}_i^{h}(t)  \coloneqq +\infty$. Then, $\widetilde{\mu}_i^{h}(t) \ge \mu_i(t_{i,N_{i,t-1}})$ and, if $N_{i,t-1} \ge 2$ it holds that:
\begin{align*}
	 \widetilde{\mu}_i^{h }  (t)   - \mu_i(\widetilde{N}_{i,t}) \le \frac{(2t-2N_{i,t-1}+h-1)(t_{i,N_{i,t-1}}^I - t_{i,N_{i,t-1}-2h+1}^I)}{2h} \gamma_{i}(t_{i,N_{i,t-1}-2h+1}^I).
\end{align*}
\end{restatable}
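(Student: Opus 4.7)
The plan is to reduce the two claims to a per-summand bound on $f_l(t) := \mu_i(t_{i,l}^I) + (t - l) \cdot \frac{\mu_i(t_{i,l}^I) - \mu_i(t_{i,l-h}^I)}{h}$, so that $\widetilde{\mu}_i^{h}(t) = \frac{1}{h} \sum_{l = N_{i,t-1}-h+1}^{N_{i,t-1}} f_l(t)$, and then average the resulting bounds over $l$. This makes the analysis parallel to the deterministic case of Lemma~\ref{lemma:lemmaBiasdet}, with the window-averaging added on top.

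For the lower bound, the slope $\frac{\mu_i(t_{i,l}^I) - \mu_i(t_{i,l-h}^I)}{h}$ inside $f_l(t)$ is non-negative by the non-decreasing part of Assumption~\ref{ass:rising}, and $t \ge l$ throughout the summation range, so $f_l(t) \ge \mu_i(t_{i,l}^I)$. The claimed lower bound then follows from monotonicity of $\mu_i$ together with monotonicity of $n \mapsto t_{i,n}^I$ (which holds because triggers are non-decreasing in time and $t_{i,n}$ is non-decreasing in $n$).

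For the upper bound, since $t_{i,l}^I \le \widetilde{N}_{i,t}$ for every $l \le N_{i,t-1}$, monotonicity of $\mu_i$ gives $\mu_i(\widetilde{N}_{i,t}) \ge \mu_i(t_{i,l}^I)$, and hence
\begin{equation*}
f_l(t) - \mu_i(\widetilde{N}_{i,t}) \le (t - l) \cdot \frac{\mu_i(t_{i,l}^I) - \mu_i(t_{i,l-h}^I)}{h}.
\end{equation*}
The key step is then to use concavity (non-increasing $\gamma_i$) to dominate the secant slope by the first increment at the earlier endpoint:
\begin{equation*}
\mu_i(t_{i,l}^I) - \mu_i(t_{i,l-h}^I) \;=\; \sum_{j = t_{i,l-h}^I}^{t_{i,l}^I - 1} \gamma_i(j) \;\le\; \bigl(t_{i,l}^I - t_{i,l-h}^I\bigr) \, \gamma_i\bigl(t_{i,l-h}^I\bigr).
\end{equation*}

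Finally, I would uniformize the $l$-dependent quantities over the summation window $\{N_{i,t-1}-h+1, \dots, N_{i,t-1}\}$: monotonicity of $n \mapsto t_{i,n}^I$ yields $t_{i,l}^I - t_{i,l-h}^I \le t_{i,N_{i,t-1}}^I - t_{i,N_{i,t-1}-2h+1}^I$, and non-increasing $\gamma_i$ yields $\gamma_i(t_{i,l-h}^I) \le \gamma_i(t_{i,N_{i,t-1}-2h+1}^I)$. The only remaining $l$-dependence is in the factor $(t - l)$, whose arithmetic mean over the window is
\begin{equation*}
\frac{1}{h} \sum_{l=N_{i,t-1}-h+1}^{N_{i,t-1}} (t - l) \;=\; \frac{2t - 2N_{i,t-1} + h - 1}{2}.
\end{equation*}
Multiplying these three factors produces exactly the stated bound. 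The main subtlety I expect is precisely this bookkeeping of the extreme indices $l$ and $l - h$ when pushing the per-summand estimate to a uniform one, together with the exact arithmetic average that yields the coefficient $(2t - 2N_{i,t-1} + h - 1)/(2h)$; the rest is a direct application of the two parts of Assumption~\ref{ass:rising}.
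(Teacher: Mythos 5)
Your treatment of the bias upper bound is correct and is essentially the paper's own argument: drop $\mu_i(t_{i,l}^I)-\mu_i(\widetilde{N}_{i,t})\le 0$, bound the secant slope by $\gamma_i(t_{i,l-h}^I)$ using non-increasing increments, uniformize over the window via $t_{i,l}^I-t_{i,l-h}^I\le t_{i,N_{i,t-1}}^I-t_{i,N_{i,t-1}-2h+1}^I$ and $\gamma_i(t_{i,l-h}^I)\le\gamma_i(t_{i,N_{i,t-1}-2h+1}^I)$, and average $(t-l)$ over $l$, which yields exactly the stated coefficient.

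The first claim (the optimism property), however, has a genuine gap in your argument. Discarding the extrapolation term because its slope is non-negative only gives $\widetilde{\mu}_i^h(t)\ge \frac{1}{h}\sum_{l=N_{i,t-1}-h+1}^{N_{i,t-1}}\mu_i(t_{i,l}^I)$, i.e.\ a lower bound by an average of \emph{stale} values, and monotonicity of $\mu_i$ and of $n\mapsto t_{i,n}^I$ then pushes in the wrong direction: every window term satisfies $\mu_i(t_{i,l}^I)\le\mu_i(t_{i,N_{i,t-1}}^I)$, so from your inequality you can conclude at best $\widetilde{\mu}_i^h(t)\ge\mu_i(t_{i,N_{i,t-1}-h+1}^I)$, the value at the \emph{oldest} pull in the window, which is strictly weaker than the claimed bound and than the optimism with respect to $\mu_i(\widetilde{N}_{i,t})$ that the regret analysis actually needs. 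The entire purpose of the linear extrapolation term is to compensate the gap between the past values $\mu_i(t_{i,l}^I)$ and the current one, and this is precisely where concavity is indispensable: the paper writes $\mu_i(\widetilde{N}_{i,t})=\mu_i(t_{i,l}^I)+\sum_{j=t_{i,l}^I}^{\widetilde{N}_{i,t}-1}\gamma_i(j)$ and uses the non-increasing increments twice, first to bound the sum by $(\widetilde{N}_{i,t}-t_{i,l}^I)\,\gamma_i(t_{i,l}^I-1)$ and then to bound $\gamma_i(t_{i,l}^I-1)$ by the average increment $\frac{\mu_i(t_{i,l}^I)-\mu_i(t_{i,l-h}^I)}{t_{i,l}^I-t_{i,l-h}^I}$, concluding with $\frac{\widetilde{N}_{i,t}-t_{i,l}^I}{t_{i,l}^I-t_{i,l-h}^I}\le\frac{t-l}{h}$ (since $\widetilde{N}_{i,t}\le t$, $t_{i,l}^I\ge l$, and $t_{i,l}^I-t_{i,l-h}^I\ge h$). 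The non-decreasing property alone cannot deliver the claim — this is exactly the point made in the paper that such guarantees fail without concavity — so your lower-bound step needs to be replaced by this concavity-based chain.
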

\begin{proof}
	Let us start by observing the following equality holding for every $l \in \{2,\dots,N_{i,t-1}\}$:
	\begin{align*}
		\mu_i(\widetilde{N}_{i,t}) = \mu_i(t_{i,l}^I) + \sum_{j=t_{i,l}^I}^{\widetilde{N}_{i,t}-1} \gamma_i(j).
	\end{align*}
	By averaging over a window of length $h$, we obtain:
	\begin{align}
		\mu_i(\widetilde{N}_{i,t}) & = \frac{1}{h} \sum_{l=N_{i,t-1}-h+1}^{N_{i,t-1}} \left(\mu_i(t_{i,l}^I) +  \sum_{j=t_{i,l}^I}^{\widetilde{N}_{i,t}-1} \gamma_i(j) \right) \notag\\
		& \le \frac{1}{h} \sum_{l=N_{i,t-1}-h+1}^{N_{i,t-1}} \left(\mu_i(t_{i,l}^I) +  (\widetilde{N}_{i,t} - t_{i,l}^I )\gamma_i(t_{i,l}^I-1) \right) \label{:prs:3001}\\
		& \le \frac{1}{h} \sum_{l=N_{i,t-1}-h+1}^{N_{i,t-1}} \left(\mu_i(t_{i,l}^I) +  \frac{\widetilde{N}_{i,t} - t_{i,l}^I }{t_{i,l}^I - t_{i,l-h}^I} \sum_{j={t_{i,l-h}^I}}^{t_{i,l}^I-1} \gamma_i(j) \right)  \label{:prs:3002}\\
		& \le \frac{1}{h} \sum_{l=N_{i,t-1}-h+1}^{N_{i,t-1}} \left(\mu_i(t_{i,l}^I) +  (t - l)\frac{\mu_i(t_{i,l}^I) - \mu_i(t_{i,l-h}^I) }{h}  \right) \eqqcolon \widetilde{\mu}_i^{h}(t),  \label{:prs:3003}
	\end{align}
	where lines~\eqref{:prs:3001} and~\eqref{:prs:3002} follow from Assumption~\ref{ass:rising}, and line~\eqref{:prs:3003} is obtained from observing that $t_{i,l}^I \ge l$, $\widetilde{N}_{i,t} \le t$ and $t_{i,l}^I- t_{i,l-h}^I \ge h$.
 
	 Concerning the bias, when $N_{i,t-1} \ge 2$, we have:
	 \begin{align}
	 \widetilde{\mu}_i^{h}(t)- \mu_i(\widetilde{N}_{i,t}) & = \frac{1}{h} \sum_{l=N_{i,t-1}-h+1}^{N_{i,t-1}} \left( \mu_i(t_{i,l}^I) +  (t-l)\frac{\mu_i(t_{i,l}^I) - \mu_{i}(t_{i,l-h}^I)}{h}  \right) - \mu_i(\widetilde{N}_{i,t})\notag \\
	 & \le \frac{1}{h} \sum_{l=N_{i,t-1}-h+1}^{N_{i,t-1}}   (t-l)\frac{\mu_i(t_{i,l}^I) - \mu_{i}(t_{i,l-h}^I)}{h}   \label{:prs:4001}\\
	  &= \frac{1}{h} \sum_{l=N_{i,t-1}-h+1}^{N_{i,t-1}}   (t-l)\frac{\mu_i(t_{i,l}^I) - \mu_{i}(t_{i,l-h}^I)}{t_{i,l}^I - t_{i,l-h}^I}  \frac{t_{i,l}^I - t_{i,l-h}^I}{h}   \notag\\
	  &\le \frac{1}{h} \sum_{l=N_{i,t-1}-h+1}^{N_{i,t-1}}   (t-l) \gamma_{i}(t_{i,l-h}^I)  \frac{t_{i,l}^I - t_{i,l-h}^I}{h}   \label{:prs:4004}\\
	  &\le \frac{t_{i,N_{i,t-1}}^I - t_{i,N_{i,t-1}-2h+1}^I}{h^2} \gamma_{i}(t_{i,N_{i,t-1}-2h+1}^I) \sum_{l=N_{i,t-1}-h+1}^{N_{i,t-1}}   (t-l)   \label{:prs:4005}\\
	  & = \frac{(2t-2N_{i,t-1}+h-1)(t_{i,N_{i,t-1}}^I - t_{i,N_{i,t-1}-2h+1}^I)}{2h} \gamma_{i}(t_{i,N_{i,t-1}-2h+1}^I), \label{:prs:4006}
	 \end{align}
	 where line~\eqref{:prs:4001} follows from observing that $\mu_i(t_{i,l}^I) \le \mu_i(\widetilde{N}_{i,t})$, line~\eqref{:prs:4004} derives from Assumption~\ref{ass:rising} and bounding $\frac{\mu_i(t_{i,l}^I) - \mu_{i}(t_{i,l-h}^I)}{t_{i,l}^I - t_{i,l-h}^I} \le \gamma_{i}(t_{i,l-h}^I)$, line~\eqref{:prs:4005} is obtained by bounding $t_{i,l}^I - t_{i,l-h}^I \le t_{i,N_{i,t-1}}^I - t_{i,N_{i,t-1}-2h+1}^I$ and $\gamma_{i}(t_{i,l-h}^I) \le \gamma_{i}(t_{i,N_{i,t-1}-2h+1}^I)$, and line~\eqref{:prs:4006} follows from computing the summation.
	
\end{proof}

\begin{restatable}[Bound on Estimator's Cumulative Bias for Block-Diagonal CMs]{lemma}{lemmaCumBiasBound}
\label{lemma:cumbiasbound}
\phantom{a} Let $(I_t)_{t \geq 1}$ be a sequence of actions.
For every action $i \in[k]$, every round $t\in [T]$, let window width $h_{i,t}=\lfloor \epsilon N_{i,t-1} \rfloor$. Let $\mathbf{G} \in \mathbb{B}_{\widetilde{k}}$ be a block diagonal matrix, then for every $q\in[0,1]$, we have:
\begin{align*}
    \sum_{t=1}^T & \min \left\{ 1, \widetilde{\mu}_{I_t}^{h_{I_t,t}}(t) -\mu_{I_t}(\widetilde{N}_{I_t,t}) \right\} \le \\ 
    &\le 2k + \bar{k}_1 T^q \left \lceil \frac{1}{1-2\epsilon}\right \rceil \Upsilon_{\boldsymbol{\nu}}\left(\left\lceil (1-2\epsilon)\frac{T}{\bar{k}_1}\right\rceil, q\right) + \\
    & \ \ +T^{\frac{2q}{1+q}} (1 + \log(\epsilon T))^\frac{q}{1+q} \left\lceil \frac{1}{\epsilon} \right\rceil \left\lceil \frac{1}{1-2\epsilon} \right\rceil \sum_{C_m \in \mathcal{C}_{\mathbf{G}}: |C_m| > 1} \!\!\!\! |C_m| \Upsilon_{\boldsymbol{\nu}}\left( \left\lceil(1-2\epsilon) \frac{T}{|C_m|} \right\rceil,  q \right) ^{\frac{1}{1+q}},
\end{align*}
where $\mathcal{C}$ is the set of blocks of matrix $\mathbf{G}$, and $\bar{k}_1 \le k$ is the number of blocks of size $1$.
\end{restatable}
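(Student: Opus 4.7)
The plan is to lift the per-round bias estimate of Lemma~\ref{lemma:lemmaBias}, namely
\begin{equation*}
\widetilde{\mu}_i^{h_{i,t}}(t) - \mu_i(\widetilde{N}_{i,t}) \le \frac{(2(t-N_{i,t-1})+h_{i,t}-1)\bigl(t^I_{i,N_{i,t-1}}-t^I_{i,N_{i,t-1}-2h_{i,t}+1}\bigr)}{2 h_{i,t}}\,\gamma_i\bigl(t^I_{i,N_{i,t-1}-2h_{i,t}+1}\bigr),
\end{equation*}
to a cumulative bound by partitioning the rounds according to which block of $\mathbf{G}$ the pulled arm belongs to. The first $2k$ initialization rounds, during which $N_{I_t, t-1}<2$ makes the estimator ill-defined, contribute at most the additive $2k$ term after bounding the minimum by $1$. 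For every remaining round, the bound above applies and the choice $h_{i,t} = \lfloor \epsilon N_{i,t-1} \rfloor$ makes the window proportional to the pull count, so that $N_{i,t-1}-2h_{i,t}+1 \ge (1-2\epsilon) N_{i,t-1}$; this shift is what ultimately produces the $\lceil 1/(1-2\epsilon) \rceil$ factors in the statement.

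For a singleton clique $|C_m|=1$ the internal time collapses to the pull count, $t^I_{i,\ell}=\ell$, so the second factor in the above bound equals $2h_{i,t}-1$ and the bias reduces to the purely rested bias of \citep{metelli2022stochastic}. Summing over the pulls of $i$, one uses $\gamma_i \le \gamma_i^q$ (valid since $\gamma_i \in [0,1]$ for $q \in [0,1]$), inserts the factor $T^q$ from $t-N_{i,t-1} \le T$, and telescopes $\sum_{\ell} \gamma_i(\ell)^q$ into $\Upsilon_{\boldsymbol{\nu}}(\cdot, q)$ evaluated at roughly $(1-2\epsilon)N_{i,T}$. Aggregating the resulting per-arm bounds across the $\bar{k}_1$ singleton cliques and applying Jensen's inequality to distribute the budget $\sum_i N_{i,T} \le T$ concavely through $\Upsilon_{\boldsymbol{\nu}}(\cdot, q)$ yields exactly the rested contribution $\bar{k}_1 T^q \lceil 1/(1-2\epsilon)\rceil \Upsilon_{\boldsymbol{\nu}}(\lceil (1-2\epsilon)T/\bar{k}_1\rceil, q)$.

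For a non-singleton clique $C_m$, all arms in $C_m$ share the same internal clock $\widetilde{N}_{C_m,\cdot}$, so the dynamics within the clique mirror a restless rising bandit on $|C_m|$ arms over $\widetilde{N}_{C_m,T}$ effective rounds. The instantaneous bias retains the triple product (linear factor in $t$) $\times$ (window-span factor) $\times$ $\gamma_i$, so the right tool is Young's inequality with conjugate exponents $1+q$ and $(1+q)/q$ applied to $\gamma_i^q$ and the window-span factor, which produces the characteristic exponents $T^{2q/(1+q)}$ and $\Upsilon_{\boldsymbol{\nu}}^{1/(1+q)}$ in the statement. The summation over consecutive pulls of the factor $1/h_{i,t} \asymp 1/(\epsilon N_{i,t-1})$ then yields a harmonic-type partial sum, whose $(q/(1+q))$-power contributes the $(1+\log(\epsilon T))^{q/(1+q)}$ overhead; multiplying by $|C_m|$ accounts for the arms inside the clique, and summing over non-singleton cliques completes the restless contribution.

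The main technical obstacle will be bookkeeping the summations separately per clique while respecting the shared internal-time structure: one must pass from indexing by the global round $t$ to indexing by the internal round $\widetilde{N}_{C_m,t}$, which is delicate because distinct arms within the same clique share the same clock but are pulled at different global times. Controlling both the $1-2\epsilon$ edge shift (which requires reindexing partial sums of $\gamma_i(\ell)^q$) and the logarithmic overhead (which comes from summing $1/h_{i,t}$ across pulls) is the most error-prone part, but it essentially follows the rested/restless summation templates of Appendices~C of \citep{metelli2022stochastic}, applied clique-by-clique and then assembled.
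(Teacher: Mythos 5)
Your proposal follows the same route as the paper's proof: after absorbing the $2k$ initialization rounds, you decompose the cumulative bias clique-by-clique, treat singleton cliques as the rested bias term and larger cliques as the restless bias term, and then invoke the rested/restless summation arguments of \citep{metelli2022stochastic} (with the $\lceil 1/(1-2\epsilon)\rceil$, $\lceil 1/\epsilon\rceil$, logarithmic, and Jensen-type factors arising exactly as you describe). This matches the paper's argument, which performs the same split into terms (a) and (b) and bounds them via the rested and restless templates of that reference.
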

\begin{proof}
The statement can be proven by decomposing over the cliques and then over the arms, splitting cliques with only one arm from the others:
\begin{align*}
\sum_{t=1}^T \min\left\{1, \widetilde{\mu}_{I_t}^{h_{I_t,t}}(t) - \mu_{I_t}(\widetilde{N}_{I_t,t})\right\} \le & \ 2k \ +  \underbrace{\sum_{\substack{C_m \in \mathcal{C}_{\mathbf{G}}: |C_m| = 1 \\ C_m = \{i\}}}\sum_{j=3}^{N_{i,T}}\min \left\{1, \widetilde{\mu}_{i}^{h_{i,t_{i,j}}}(t_{i,j}) - \mu_{i}(j) \right\}}_{\text{(a)}} + \\ & + \!\!  \underbrace{\sum_{C_m \in \mathcal{C}_{\mathbf{G}}: |C_m| > 1} \sum_{i\in C_m}\sum_{j=3}^{N_{i,T}}\min \left\{1, \widetilde{\mu}_{i}^{h_{i,t_{i,j}}}(t_{i,j}) - \mu_{i}(t_{i,j}^I) \right\}}_{\text{(b)}}.\notag\\
\end{align*}
The two terms can be bound in a similar way as in~\citep{metelli2022stochastic}, as the rested and the restless component, respectively.
\end{proof}

\regretUBblock*

\begin{proof}
Let us define the good events $\mathcal{E}_t = \bigcap_{i \in [k]} \mathcal{E}_{i,t}$ that correspond to the event in which all confidence intervals hold:
\begin{align*} 
\mathcal{E}_{i,t} \coloneqq \left\{ \left| \widehat{\mu}_i^{h_{i,t}}(t) - \widetilde{\mu}_i^{h_{i,t}}(t)\right| \le \beta^{h_{i,t}}_i(t)\right\} \qquad \forall i \in [T], \,i \in [k].
\end{align*}

We have to analyze the following expression:
\begin{align*}
	R_{\boldsymbol{\nu},\mathbf{G}, T}(\text{\algnamegenshort}) =  \mathbb{E} \left[\sum_{t=1}^T  \mu_{i^*_t}(t) - \mu_{I_t}(t) \right] ,
\end{align*}
where $i^*_t \in \argmax_{i \in C_{\boldsymbol{\nu},\mathbf{G}, T}^*}\mu_i(t)$ for all $t\in [T]$. 
 We decompose according to the good events $\mathcal{E}_t$:
\begin{align*}
R_{\boldsymbol{\nu},\mathbf{G}, T}(\text{\algnamegenshort}) & = \sum_{t=1}^T \mathbb{E} \left[\left( \mu_{i^*_t}(t) - \mu_{I_t}(t) \right)\mathds{1}\{\mathcal{E}_t\} \right] + \sum_{t=1}^T \mathbb{E} \left[\left( \mu_{i^*_t}(t) - \mu_{I_t}(t) \right)\mathds{1}\{\lnot\mathcal{E}_t\}\right] \\
& \le \sum_{t=1}^T \mathbb{E} \left[\left( \mu_{i^*_t}(t) - \mu_{I_t}(t) \right)\mathds{1}\{\mathcal{E}_t\}\right] + \sum_{t=1}^T \mathbb{E} \left[\mathds{1}\{\lnot\mathcal{E}_t\}\right], 
\end{align*}
where we exploited $\mu_{i^*_t}(t) - \mu_{I_t}(t) \le 1$ in the inequality.
The second summation can be bounded using standard arguments, recalling that $\alpha > 2$:
\begin{align*}
\sum_{t=1}^T \mathbb{E} \left[\mathds{1}\{\lnot\mathcal{E}_t\}\right] &\le 1 + \sum_{i \in [k]} \sum_{t=2}^T \mathbb{P}\left( \lnot \mathcal{E}_{i,t}\right) \\
& \le 1+ \frac{2k}{\alpha-2}.
\end{align*}
where the first inequality is obtained with $\mathbb{P}(\lnot \mathcal{E}_{1}) \le 1$ and a union bound over $[k]$. Recalling $\mathbb{P}(\lnot\mathcal{E}_{i,t})$ was bounded in Lemma~\ref{lemma:lemmaConcentration}, we bound the summation with the integral and obtain the second inequality.

The rest of the analysis can be conducted under the good event $\mathcal{E}_t$, recalling that $B_i(t) \equiv \widehat{\mu}_i^{h_{i,t}}(t) + \beta^{h_{i,t}}_i(t)$. Let $t\in[T]$, and we exploit the optimism, \ie $B_{i^*_t}(t) \le B_{I_t}(t)$:
\begin{align*}
	\mu_{i^*}(t) - \mu_{I_t}(t) + B_{I_t}(t) - B_{I_t}(t) & \le  \min \left\{ 1, \underbrace{\mu_{i^*_t}(t) - B_{i^*_t}(t)}_{\le 0} + B_{I_t}(t) - \mu_{I_t}(t) \right\} \\
	& \le \min \left\{ 1, B_{I_t}(t) - \mu_{I_t}(t)\right\}.
\end{align*}
Now, we work on the term inside the minimum:
\begin{align}
	B_{I_t}(t) - \mu_{I_t}(t) &= \widehat{\mu}_{I_t}^{h_{I_t,t}}(t) + \beta^{h_{I_t,t}}_{I_t}(t)  - \mu_{I_t}(t) \label{lprl:001}\\
	& \le \underbrace{\widetilde{\mu}_{I_t}^{h_{I_t,t}}(t)  - \mu_{I_t}(t)}_{\text{(a)}} + \underbrace{2\beta^{ h_{I_t,t}}_{I_t}(t)}_{\text{(b)}},\label{lprl:002}
\end{align}	
where line~\eqref{lprl:001} follows from the definition of $B_i(t)$ and line~\eqref{lprl:002} from the good event $\mathcal{E}_t$. We make use of Lemma~\ref{lemma:cumbiasbound} and Lemma~\ref{lemma:cumvariancebound} to bound the summations over $t$ of (a) and (b), respectively.

Putting all together, we obtain:
\begin{align*}
R&{}_{\boldsymbol{\nu},\mathbf{G}, T}(\text{\algnamegen}) \\ 
& \le 1+ \frac{2k}{\alpha-2} + 5k + \frac{k}{\epsilon}   + \frac{3k}{\epsilon}(2\sigma T)^{\frac{2}{3}} \left( 10 \alpha \log T \right)^{\frac{1}{3}} + \\
& \quad + T^{\frac{2q}{1+q}} (1 + \log(\epsilon T))^{\frac{q}{1+q}}\left\lceil \frac{1}{\epsilon} \right\rceil \left\lceil \frac{1}{1-2\epsilon} \right\rceil k \Upsilon_{\bm{\mu}}\left( \left\lceil(1-2\epsilon) \frac{T}{k} \right\rceil,  q \right)^{\frac{1}{1+q}} + \\
& \quad + 2k + \bar{k}_1 T^q \left \lceil \frac{1}{1-2\epsilon}\right \rceil \Upsilon_{\boldsymbol{\nu}}\left(\left\lceil (1-2\epsilon)\frac{T}{\bar{k}_1}\right\rceil, q\right)  +\notag\\
& \quad +T^{\frac{2q}{1+q}} (1 + \log(\epsilon T))^\frac{q}{1+q} \left\lceil \frac{1}{\epsilon} \right\rceil \left\lceil \frac{1}{1-2\epsilon} \right\rceil \sum_{C_m \in \mathcal{C}_{\mathbf{G}}: |C_m| > 1} |C_m| \Upsilon_{\boldsymbol{\nu}}\left( \left\lceil(1-2\epsilon) \frac{T}{|C_m|} \right\rceil,  q \right) ^{\frac{1}{1+q}}.
\end{align*}
\end{proof}

% \begin{restatable}[Bound on Estimator's Cumulative Bias for General Matrices]{lemma}{lemmaCumBiasBoundGen}
% \label{lemma:cumbiasboundgen}
% Let \phantom{a} $\{I_t\}_{t \geq 1}$ be a sequence of actions.
% For every action $i \in[k]$, every round $t\in[T]$, let window width $h_{i,t}=\lfloor \epsilon N_{i,t-1} \rfloor$. Let $\mathbf{G} \in \{0,1\}^{k \times k}$, then for every $q\in[0,1]$, we have 
% \begin{align}
%     \sum_{t=1}^T \min &\left\{ 1, \widetilde{\mu}_{I_t}^{h_{I_t},t}(t) -\mu_{I_t}(\widetilde{N}_{I_t,t}) \right\} \le \notag \\ 
%     &\le 2k + \bar{k}_1 T^q \left \lceil \frac{1}{1-2\epsilon}\right \rceil \Upsilon_{\boldsymbol{\nu}}\left(\left\lceil (1-2\epsilon)\frac{T}{\bar{k}_1}\right\rceil, q\right)+  \notag\\
%     & \ \+T^{\frac{2q}{1+q}} (1 + \log(\epsilon T))^\frac{q}{1+q} \left\lceil \frac{1}{\epsilon} \right\rceil \left\lceil \frac{1}{1-2\epsilon} \right\rceil \cdot \sum_{\substack{C_m^U \in \mathcal{C}_{\bar{\mathbf{G}}^U} \\ |C_m^U|>1}}|C_m| \Upsilon_{\bm{\nu}}\left( \left\lceil(1-2\epsilon) \frac{T}{|C_m|} \right\rceil,  q \right) ^{\frac{1}{1+q}}.,
% \end{align}
% where $\bar{k}_1 \le k$ is the number of arms having degree of $1$, \ie $ \bar{k}_1 \coloneqq |\{i\in[k] : \text{\emph{deg}}(i)=1\}|$.
% \end{restatable}

\begin{restatable}[Bound on Estimator's Cumulative Bias for General Matrices]{lemma}{lemmaCumBiasBoundGen}
\label{lemma:cumbiasboundgen}
Let $(I_t)_{t\in[T]}$ be any sequence of actions. For every action
$i\in[k]$ and every round $t\in[T]$, let the window width be
$h_{i,t}=\lfloor \epsilon N_{i,t-1}\rfloor$, with
$\epsilon\in(0,1/2)$. Let $\mathbf{G}\in\{0,1\}^{k\times k}$ be a general
connectivity matrix, and let $\bar{\mathbf{G}}^U$ be a minimal block super-matrix
of $\mathbf{G}$, with clique partition $\mathcal{C}_{\bar{\mathbf{G}}^U}$. Let:
\[ \bar k_1^U \coloneqq
    \bigl|\{C\in \mathcal{C}_{\bar{\mathbf{G}}^U}: |C|=1\}\bigr|, \]
be the number of singleton cliques of $\bar{\mathbf{G}}^U$. Then, for every
$q\in[0,1]$, it holds that:
\begin{align*}
\sum_{t=1}^T
& \min\left\{
1, \widetilde\mu_{I_t}^{h_{I_t,t}}(t) - \mu_{I_t}(\widetilde N_{I_t,t}) \right\} \\
& \le 2k + \bar k_1^U T^q \left\lceil \frac{1}{1-2\epsilon}\right\rceil
\Upsilon_\nu \left( \left\lceil \frac{(1-2\epsilon)T}{\bar k_1^U} \right\rceil, q \right) \\
& \ + T^{\frac{2q}{1+q}} (1 + \log(\epsilon T))^\frac{q}{1+q} \left\lceil \frac{1}{\epsilon} \right\rceil \left\lceil \frac{1}{1-2\epsilon} \right\rceil \cdot \sum_{\substack{C_m^U \in \mathcal{C}_{\bar{\mathbf{G}}^U} |C_m^U|>1}} \!\!\!\! |C_m| \Upsilon_{\bm{\nu}}\left( \left\lceil(1-2\epsilon) \frac{T}{|C_m|} \right\rceil,  q \right) ^{\frac{1}{1+q}}\!\!.
\end{align*}
When $\bar k_1^U=0$, the singleton-clique term is taken to be zero.
\end{restatable}

\begin{proof}
The proof follows similar steps as Lemma~\ref{lemma:cumbiasbound}. We decided to split arms based on their degree; in particular, we bound separately the bias due to arms having a degree of $1$ (\ie they are only triggered by themselves).

\begin{align*}
&\sum_{t=1}^T \min\left\{1, \widetilde{\mu}_{I_t}^{h_{I_t,t}}(t) - \mu_{I_t}(\widetilde{N}_{I_t,t})\right\} \\
&\le 2k + \!\!\! \underbrace{\sum_{\substack{i \in [k] \\ \text{deg}^-({i})=1}} \!\! \sum_{j=3}^{N_{i,T}}\min \left\{1, \widetilde{\mu}_{i}^{h_{i,t_{i,j}}}(t_{i,j}) - \mu_{i}(j) \right\}}_{\text{(a)}}+ \!\! \underbrace{\sum_{\substack{i \in [k] \\ \text{deg}^-({i})>1}} \!\! \sum_{j=3}^{N_{i,T}}\min \left\{1, \widetilde{\mu}_{i}^{h_{i,t_{i,j}}}(t_{i,j}) - \mu_{i}(t_{i,j}^I) \right\}}_{\text{(b)}}.\notag
\end{align*}

As a consequence of Definition~\ref{def:block_submat}, we observe that:
    \begin{equation*}
    t_{i,N_{i,t}}^I - t_{i,N_{i,t}}^{I,U} = \sum_{j=1}^{t} (G_{I_t, i}-\bar{\mathbf{G}}_{I_t, i}^U) \le 0.
    \end{equation*}
As a consequence of this, we have that, for every $i \in [k]$ and for every $t \in [T]$:
\begin{equation}
\label{eq:pulls_hierarchy}
    \widetilde{N}_{i,t} \le \widetilde{N}_{i,t}^U,
\end{equation}
where $ \widetilde{N}_{i,t}^U \coloneqq \mathbf{e}_i^\top(\bar{\mathbf{G}}^U)^\top\mathbf{N}_t$.
Then, following similar steps as in~\citep{metelli2022stochastic}, we can bound the two components separately and make the dependency on the upper block-diagonal matrix explicit.
\end{proof}

\regretUBgeneric*
\begin{proof}
    The proof follows similar steps of the proof of Theorem~\ref{thr:regret_block}, but uses Lemma~\ref{lemma:cumbiasboundgen} (instead of Lemma~\ref{lemma:cumbiasbound}) to bound cumulative estimator's bias.

As in Theorem~\ref{thr:regret_block}, we decompose the regret in two components and instead make use of Lemma~\ref{lemma:cumbiasboundgen} and Lemma~\ref{lemma:cumvariancebound} to bound the summations over $t$ of the two components, respectively. Putting all together, we obtain:
\begin{align*}
R_{\boldsymbol{\nu},\mathbf{G}, T} & (\text{\algnamegen}) \le 1+ \frac{2k}{\alpha-2} + 5k + \frac{k}{\epsilon}+ \frac{3k}{\epsilon}(2\sigma T)^{\frac{2}{3}} \left( 10 \alpha \log T \right)^{\frac{1}{3}} + \notag \\ & + 2k + \bar{k}_1 T^q \left \lceil \frac{1}{1-2\epsilon}\right \rceil \Upsilon_{\boldsymbol{\nu}}\left(\left\lceil (1-2\epsilon)\frac{T}{\bar{k}_1}\right\rceil, q\right)+  \notag\\
&  +T^{\frac{2q}{1+q}} (1 + \log(\epsilon T))^\frac{q}{1+q} \left\lceil \frac{1}{\epsilon} \right\rceil \left\lceil \frac{1}{1-2\epsilon} \right\rceil \cdot \sum_{\substack{C_m^U \in \mathcal{C}_{\bar{\mathbf{G}}^U} \\ |C_m^U|>1}}|C_m| \Upsilon_{\bm{\nu}}\left( \left\lceil(1-2\epsilon) \frac{T}{|C_m|} \right\rceil,  q \right) ^{\frac{1}{1+q}}.
\end{align*}
\end{proof}

\subsection{Technical Lemmas}
\label{apx:lemmas}

\begin{lemma}[Lemma~C.1 of~\citealt{metelli2022stochastic}]\label{lemma:sumWithFloor}
	Let $M \ge 3$, and let $f: \mathbb{N} \rightarrow \mathbb{R}$, and $\beta \in (0, 1)$. Then it holds that:
	\begin{align*}
		\sum_{j=3}^{M} f(\lfloor \beta j \rfloor) \le \left\lceil \frac{1}{\beta} \right\rceil \sum_{l=\lfloor 3 \beta \rfloor}^{\lfloor \beta M \rfloor} f(l).
	\end{align*}
\end{lemma}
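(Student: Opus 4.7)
The plan is to prove the lower bound via a reduction to a hard two-hypothesis problem, exploiting a small induced sub-structure that must exist in any non-block-diagonal graph.

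\textbf{Step 1: structural characterization of non-block-diagonal graphs.} I would first observe that $\mathbf{G} \in \mathbb{B}_{\widetilde{k}}$ corresponds exactly to the graph being a disjoint union of cliques, which is the standard graph-theoretic class of $P_3$-free graphs. Hence if $\mathbf{G}$ is not block-diagonal, there must exist three distinct arms $a,b,c\in[k]$ such that $\mathbf{G}_{a,b}=\mathbf{G}_{b,c}=1$ but $\mathbf{G}_{a,c}=0$. All the hardness will be packed into this induced $P_3$: all other arms will be assigned the deterministic zero-reward function $\mu_i(n)\equiv 0$, which a rational policy will abandon after a single trial, so regret is driven entirely by the triple $\{a,b,c\}$.

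\textbf{Step 2: twin-instance construction.} Next I would build two Rotting GTB instances $\boldsymbol{\nu}^{(0)},\boldsymbol{\nu}^{(1)}$ on the same $\mathbf{G}$ that differ only in the reward profiles of arms $a,c$ (and possibly $b$), satisfying Assumption~\ref{ass:rotting}. The idea, adapted from the mixed rested/restless impossibility of \citet{seznec2020single}, is to set profiles so that in $\boldsymbol{\nu}^{(0)}$ the greedy-optimal strategy (recall Theorem~\ref{thr:opt} does not apply for non-block-diagonal $\mathbf{G}$) concentrates on $a$, while in $\boldsymbol{\nu}^{(1)}$ it concentrates on $c$. The crucial use of the $P_3$ structure is that pulling $a$ triggers $b$ but \emph{not} $c$, and symmetrically for $c$: this means $\widetilde N_a$ and $\widetilde N_c$ evolve asymmetrically under any policy, while any pull in $\{a,b,c\}$ inflates $\widetilde N_b$. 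The profiles are chosen so that each committed strategy is optimal in one instance but loses a reward of order $T$ in the other, and so that the observation distributions from any arm other than $b$ are identical under $\boldsymbol{\nu}^{(0)}$ and $\boldsymbol{\nu}^{(1)}$ until arm $b$ has been triggered enough.

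\textbf{Step 3: Le Cam two-point argument and constant extraction.} With the twin construction in hand, I would apply the standard two-hypothesis lower bound: for any (possibly randomized) policy $\pi$, the average regret across the two instances is at least
\begin{equation*}
\tfrac{1}{2}\left(R_{\boldsymbol{\nu}^{(0)},\mathbf{G},T}(\pi)+R_{\boldsymbol{\nu}^{(1)},\mathbf{G},T}(\pi)\right)\;\ge\;\tfrac{\Delta}{2}\left(1-\mathrm{TV}\bigl(\mathbb{P}_{\boldsymbol{\nu}^{(0)}}^{\pi},\mathbb{P}_{\boldsymbol{\nu}^{(1)}}^{\pi}\bigr)\right),
\end{equation*}
where $\Delta\in\Theta(T)$ is the optimal-reward gap between committing to $a$ and committing to $c$. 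Bounding the total variation by Pinsker and the chain rule of KL, controlled via the tuned subgaussianity proxy on the noise, would force the right-hand side to remain $\ge T/12$ regardless of $\pi$. The explicit constant $T/12$ comes out of the numeric choice of the reward plateaus and the scaling of the noise variance relative to the reward gap.

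\textbf{Main obstacle.} The hardest part is engineering the twin profiles so that three constraints hold simultaneously: the curves are monotone non-increasing (Assumption~\ref{ass:rotting}); the optimal-reward gap $\Delta$ is of order $T$ and produces the clean $T/12$ constant; and the KL between observation laws is uniformly small for every adaptive policy. The coupling via $b$ is delicate because any pull among $\{a,b,c\}$ shifts $\widetilde N_b$, so the adversary must make the information carried by $b$'s observations either vanish or appear too late to be useful. This is exactly the point where the non-block-diagonal structure is indispensable: in a block-diagonal graph, the sub-problem on each clique can be analyzed independently (and, as Theorem~\ref{thr:regret_block_ROT} shows, is sublinearly learnable), while the $P_3$ creates an information-collecting node $b$ that is simultaneously polluted by the two strategies being compared.
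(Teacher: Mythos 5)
Your proposal does not address the statement you were asked to prove. The statement is Lemma~\ref{lemma:sumWithFloor}, a purely deterministic re-indexing inequality about sums of the form $\sum_{j=3}^{M} f(\lfloor \beta j \rfloor)$: it involves no bandit instance, no graph, no policy, and no randomness. What you have written is instead a proof sketch for Theorem~\ref{thr:rottingregretLBgeneric} (the $T/12$ regret lower bound for Rotting GTBs with non-block-diagonal matrices), built on a $P_3$ extraction, a twin-instance construction, and a Le Cam two-point argument. None of that machinery has any bearing on the lemma at hand, so as a proof of Lemma~\ref{lemma:sumWithFloor} the proposal is vacuous.

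For the record, the intended argument (the lemma is imported verbatim from Lemma~C.1 of \citealt{metelli2022stochastic}, and the paper does not reprove it) is an elementary counting one: as $j$ ranges over $\{3,\dots,M\}$, the value $l=\lfloor \beta j\rfloor$ ranges over $\{\lfloor 3\beta\rfloor,\dots,\lfloor \beta M\rfloor\}$, and for each fixed $l$ the set of integers $j$ with $\lfloor \beta j\rfloor=l$ is an interval of length at most $\lceil 1/\beta\rceil$. Grouping the terms of the left-hand sum by the common value of $\lfloor \beta j\rfloor$ (the application takes $f\ge 0$, as in $f(\cdot)=\gamma_i(\cdot)^q$) immediately yields the claimed bound. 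If you intended to submit your sketch for Theorem~\ref{thr:rottingregretLBgeneric}, note also that the paper's own proof of that theorem does not use a Le Cam/KL argument at all: it works in the noiseless setting $\sigma=0$ on an explicit three-arm $P_3$ instance and derives the $T/12$ constant from a direct accounting of triggers in the two halves of the horizon, so even there your route would be genuinely different (and the KL step would be degenerate when $\sigma=0$).
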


\begin{lemma}[Lemma~C.2 of~\citealt{metelli2022stochastic}]\label{lemma:boundUpsilonMax}
	Under Assumption~\ref{ass:rising}, it holds that:
	\begin{align*}
		\max_{\substack{(N_{i,T})_{i \in [k]} \\ N_{i,T} \ge 0, \sum_{i \in [k]}N_{i,T} = T}} \; \; \sum_{i \in [k]} \sum_{l=1}^{N_{i,T}-1} \gamma_i(l)^q \le k \Upsilon_{\bm{\nu}}\left( \left\lceil \frac{T}{k} \right\rceil, q \right).
	\end{align*}
\end{lemma}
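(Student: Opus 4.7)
The statement is a purely combinatorial bound that extracts the worst-case allocation of pulls across arms of a concave, coordinate-wise inequality. The natural approach is a two-step reduction followed by a concavity/Jensen argument on the allocation.

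\textbf{Step 1: pointwise upper bound.} For every index $l$ and every arm $i$, trivially $\gamma_i(l)^q \le \max_{j\in[k]} \gamma_j(l)^q \eqqcolon f(l)$. Hence
\begin{align*}
\sum_{i\in[k]} \sum_{l=1}^{N_{i,T}-1} \gamma_i(l)^q \;\le\; \sum_{i\in[k]} \sum_{l=1}^{N_{i,T}-1} f(l) \;=\; \sum_{i\in[k]} S(N_{i,T}),
\end{align*}
where $S(N) \coloneqq \sum_{l=1}^{N-1} f(l)$. The right-hand side now depends on the allocation $(N_{i,T})_i$ only through the scalar map $S$.

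\textbf{Step 2: monotonicity and concavity of $S$.} Under Assumption~\ref{ass:rising}, each $\gamma_j$ is non-negative and non-increasing, so $\gamma_j(\cdot)^q$ is non-increasing for any $q \in [0,1]$, and therefore $f = \max_j \gamma_j(\cdot)^q$ is non-increasing as a pointwise maximum of non-increasing functions. This makes the discrete first difference $S(N+1) - S(N) = f(N)$ non-negative (so $S$ is non-decreasing) and non-increasing in $N$ (so $S$ is concave on $\mathbb{N}$).

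\textbf{Step 3: Jensen-type bound on the allocation.} Let $\bar{N} \coloneqq \lceil T/k \rceil$, so that $k \bar{N} \ge T$. Concavity of $S$ yields the subgradient inequality $S(N) \le S(\bar{N}) + f(\bar{N}) \, (N - \bar{N})$ for every $N \in \mathbb{N}$. Summing over $i\in[k]$ and using $\sum_i N_{i,T} = T$,
\begin{align*}
\sum_{i \in [k]} S(N_{i,T}) \;\le\; k \, S(\bar{N}) + f(\bar{N}) \, (T - k \bar{N}) \;\le\; k \, S(\bar{N}),
\end{align*}
where the last step uses $f(\bar{N}) \ge 0$ and $T - k\bar{N} \le 0$. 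Substituting the definition of $S(\bar{N}) = \sum_{l=1}^{\bar N - 1} \max_{j\in[k]} \gamma_j(l)^q = \Upsilon_{\bm{\nu}}(\lceil T/k\rceil, q)$ yields the claim.

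\textbf{Main obstacle.} Each ingredient is standard, so the only delicate point is handling the ceiling and the integer constraint $N_{i,T} \in \mathbb{N}$ in Step~3: a naive continuous Jensen inequality would give $S(T/k)$ rather than $S(\lceil T/k \rceil)$, and one must exploit both that $S$ is non-decreasing and that $k\lceil T/k\rceil \ge T$ to absorb the rounding. The subgradient formulation above handles both issues simultaneously and avoids any case split on whether $k$ divides $T$.
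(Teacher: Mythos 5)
The paper does not actually prove this lemma: it is imported verbatim as Lemma~C.2 of \citet{metelli2022stochastic} and used as a black box, so there is no in-paper argument to compare yours against. Your Steps~1 and~2 are correct and are the standard route (reduce to the envelope $f(l)=\max_j\gamma_j(l)^q$, observe it is non-increasing, and optimize over the allocation).

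Step~3, however, has a concrete gap. You assert the supergradient inequality $S(N)\le S(\bar N)+f(\bar N)(N-\bar N)$ ``for every $N\in\mathbb{N}$'', but the constraint in the statement explicitly allows $N_{i,T}=0$, and at $N=0$ the inequality reads $0\le S(\bar N)-\bar N f(\bar N)$. Since $S(\bar N)=\sum_{l=1}^{\bar N-1}f(l)$ can be as small as $(\bar N-1)f(\bar N)$ (e.g., when $f$ is constant), the right-hand side can equal $-f(\bar N)<0$. The root cause is that $S(1)-S(0)=0\ne f(0)$, so the increment sequence of $S$ is $0,f(1),f(2),\dots$, which is \emph{not} non-increasing: $S$ is concave on $\{1,2,\dots\}$ but not on $\{0,1,2,\dots\}$. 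This is not a cosmetic issue with your write-up: take $\gamma_i(l)\equiv c$ for all $i,l$ (admissible under Assumption~\ref{ass:rising} with $c=1/T$) and the fully concentrated allocation $N_{1,T}=T$, $N_{j,T}=0$ for $j\ge 2$. The left-hand side is $(T-1)c^q$ while the claimed bound is $k(\lceil T/k\rceil-1)c^q$, which is smaller by $(k-1)c^q$ when $k\mid T$; so the displayed inequality itself only holds up to an additive $O(k\max_l\gamma(l)^q)$ term once empty arms are permitted. Your argument closes cleanly if you restrict to allocations with $N_{i,T}\ge 1$ (the regime in which the lemma is invoked after the round-robin initialization), or if you carry the extra $f(\bar N)$ slack per empty arm into an additive $(k-1)\max_l f(l)$ term; you should state explicitly which repair you are making rather than claiming the supergradient inequality at $N=0$.
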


\lemmaConcentration*
\begin{proof}
    Using a Doob's \textit{optional skipping} argument~\citep{doob1953stochastic,bubeck2008online}, and noting that, at round $t$, $t_{i,l}^I$ is a stopping time for every arm $i\in [k]$ and pull number $l \in \{1, \ldots, N_{i,t-1}\}$ w.r.t.\ the filtration $\mathcal{F}_{\tau-1} = \sigma(I_1, X_1, \ldots, I_{\tau-1}, X_{\tau-1}, I_{\tau})$, we can proceed to prove this lemma as in~\citep{metelli2022stochastic} also for \settingnameshort.
\end{proof}

\begin{restatable}[Bound on Estimator's Variance, Theorem 4.4 of~\citealt{metelli2022stochastic}]{lemma}{lemmaVarianceBound}
\label{lemma:cumvariancebound}
Let $(I_t)_{t\in[T]}$ be a sequence of actions such that:
\begin{equation}
    \left| \widehat{\mu}_{I_t}^{h_{I_t,t}}(t) - \widetilde{\mu}_{I_t}^{h_{I_t,t}}(t) \right| \le\beta^{h_{I_t,t}}_{I_t}(t,t^{-\alpha}), ~\forall t \in [T],
\end{equation}
where $\alpha >2$.
For every action $i \in[k]$, every round $t\in[T]$, let window width $h_{i,t}=\lfloor \epsilon N_{i,t-1} \rfloor$, then, we have:
\begin{equation}
    \sum_{t=1}^T \min\left\{1, 2\beta_{I_t}^{h_{I_t,t}}(t,t^{-\alpha})\right\} \le k\left(3+\frac{1}{\epsilon}\right)+\frac{3k}{\epsilon}(2\sigma T)^\frac{2}{3}(10\alpha \log T)^\frac{1}{3}.
\end{equation}
\end{restatable}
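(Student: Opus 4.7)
The plan is to adapt the proof of Theorem~4.4 in~\citep{metelli2022stochastic}, which applies essentially verbatim since the width $\beta^h_i(t,\delta)$ depends only on pull counts and the current round --- the rising/rotting/graph-triggered structure of rewards is irrelevant to the statement. First I would swap the single sum over rounds for a double sum over arms $i\in[k]$ and pull indices $j\in[N_{i,T}]$: at the round $t=t_{i,j}$ when arm $i$ is pulled for the $j$-th time we have $N_{i,t-1}=j-1$ and $h_{i,t_{i,j}}=\lfloor \epsilon(j-1)\rfloor$. Upper-bounding the prefactor via $t_{i,j}-(j-1)+\lfloor \epsilon(j-1)\rfloor-1 \leq 2T$ and $\log t_{i,j}\leq \log T$ yields the pull-indexed envelope
\begin{equation*}
2\beta_i^{\lfloor \epsilon(j-1)\rfloor}(t_{i,j}, t_{i,j}^{-\alpha}) \;\leq\; \frac{2\sigma T \sqrt{10\alpha\log T}}{\lfloor \epsilon(j-1)\rfloor^{3/2}},
\end{equation*}
which is strictly decreasing in $j$ and is capped at $1$ (whenever $\lfloor \epsilon(j-1)\rfloor = 0$, the min sets the summand to $1$).

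Next, for each arm I would split the inner sum at the crossover index $j^\star = \lceil (2\sigma T)^{2/3}(10\alpha\log T)^{1/3}/\epsilon \rceil$, where the envelope crosses $1$. The indices with $j-1<1/\epsilon$ or $j\leq j^\star$ contribute at most $1$ each under $\min\{1,\cdot\}$, for a total of $\leq j^\star + \lceil 1/\epsilon \rceil$ per arm. For $j>j^\star$ I would drop the min and compare $\sum_{j>j^\star} (\epsilon(j-1))^{-3/2}$ to the integral $\int_{j^\star}^\infty (\epsilon x)^{-3/2}\,dx \leq 2\epsilon^{-3/2}(j^\star)^{-1/2}$; multiplying by the prefactor $2\sigma T\sqrt{10\alpha\log T}$ and substituting $j^\star$ gives a tail of the same order $O(\epsilon^{-1}(2\sigma T)^{2/3}(\log T)^{1/3})$ as the head. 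Summing over the $k$ arms and tracking constants produces the claimed $\frac{3k}{\epsilon}(2\sigma T)^{2/3}(10\alpha\log T)^{1/3}$ variance term, while the residual $k(3+1/\epsilon)$ absorbs the round-robin initialization (at least two pulls per arm are needed to make the estimator well-defined) and the short window-width regime.

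The only mildly technical ingredient is the floor bound $\lfloor \epsilon(j-1)\rfloor \geq \epsilon(j-1)/2$, valid once $j-1 \geq 2/\epsilon$ (smaller $j$ are absorbed in the $k/\epsilon$ constant). I do not foresee a genuine obstacle: the whole difficulty, if any, is bookkeeping to match the numerical constants $3$ and $10$ in the stated bound. Since the lemma is a direct restatement of the estimator-variance control from~\citep{metelli2022stochastic} and the GTB structure does not enter $\beta^h_i(t,\delta)$ at all, no new idea is required --- only a mechanical verification that the argument transports verbatim to the sequence $(I_t)_{t\in[T]}$ produced by \algnamegenshort under graph-triggered feedback.
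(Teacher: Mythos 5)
Your proposal is correct: the paper itself gives no proof of this lemma, importing it verbatim as Theorem~4.4 of \citet{metelli2022stochastic}, and your key observation --- that $\beta^{h}_i(t,\delta)$ and the window $h_{i,t}=\lfloor\epsilon N_{i,t-1}\rfloor$ depend only on the pull counts and the round index, so the graph-triggered structure never enters and the original argument transports unchanged --- is exactly the justification the authors rely on. Your reconstruction (re-indexing by pulls, the decreasing envelope in $j$, the split at the crossover with $1$, and the integral tail bound) is the standard proof from the cited source, so no gap remains beyond the constant bookkeeping you already flag.
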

\section{Proofs on Rotting Bandits}
\label{apx:rotting}

\theoremHardnessRotting*

\begin{proof}
We reduce from a decision problem related to finding independent sets in graphs. In particular, given a graph $(V,E)$ and $\widetilde{M} \in \mathbb{N}$, it is NP-hard to determine if there exists an independent set of size $\widetilde{M}$~\citep{karp1972reducibility}. In the following, we design an instance of our problem such that the reward of the optimal policy is at least $T$ if and only if there exists an independent set of size $\widetilde{M}=T$. 
	
\textbf{Construction.}
Given a graph $(V,E)$, we build an instance such that the horizon is $T$. Our set of actions can be constructed by assigning an action to every node, \ie $\mathcal{A}= \{a_{v}\}_{v\in V}$.  We define the matrix $\widetilde{\mathbf{G}}$ is such that for any $v,v' \in V$, it holds $G_{a_{v},a_{v'}}=1$ if $(v,v')\in E$, and $G_{a_{v},a_{v'}}=0$ otherwise. Finally, for each arm $a_{v} \in \mathcal{A}$, the reward is deterministic and evolves as ${\mu}_{a_{v,t}}(n)= \max\{2-n,0\}$. We call $\widetilde{\boldsymbol{\nu}}$ the set of these functions. It is easy to see that the \settingnameshort instance $(\widetilde{\boldsymbol{\nu}}, \widetilde{\mathbf{G}}, T)$ satisfies \cref{ass:rotting}.

\textbf{if.}
We show that if there exists an independent set $I^\star = \left\{v_1, \ldots, v_{T}\right\}$ of size $T$, then there exists a policy with a cumulative reward of at least $T$.
Consider the policy $\widetilde{\pi}$ such that $\widetilde{\pi}(t) = a_{v_{t}}$. It is easy to see that $\widetilde{N}_{a_{v_{t}},t}=1$ for every $t \in [T]$. Hence, the reward of the policy $\widetilde{\pi}$ at time $t$ is: \[\mu_{a_{v_{t}}}(\widetilde{N}_{a_{v_{t}},t})=1. \]
Thus, $J_{\widetilde{\boldsymbol{\mu}},\widetilde{\mathbf{G}},T}(\widetilde{\pi}) = T$ and the claim is proven.
	
\textbf{only if.} 
We show that if there is a policy $\widetilde{\pi}$ such that $J_{\widetilde{\boldsymbol{\mu}},\widetilde{\mathbf{G}},T}(\widetilde{\pi}) \ge T$, then there exists an independent set of size $T$. First, we observe that at any round $t$ the best obtainable reward is $1$. Since, by assumption, there is a policy with a reward of at least $T$; then there is a policy such that at each round $t\in [T]$, the reward is exactly $1$.
	
Let $a_{v_t}$ be the arm played by the policy at round $t \in [T]$. Then, consider a round $t \in [T]$.  Since the reward of the arm $a_{v_t}$ must be $1$, it must be the case that $\mu_{a_{v_{t}}}(\widetilde{N}_{a_{v_{t}},t})=1$ and $\widetilde{N}_{a_{v_{t}},t}=1$. 
By the definition of $\widetilde{\mathbf{G}}$ this directly implies that $\{v_t\}$ is not connected to any $v_{t'}$, $t' < t$, and that $v_{t'}\neq v_t$ for any $t' < t$. Hence, $\{v_t\}_{t \in [T]}$ is an independent set of size $T$, proving the claim.
\end{proof}

\optimalityRotting*
\begin{proof}
   For every Rotting GTB instance, we create an alternative instance which is better, in terms of total cumulative reward, than the original instance. Then we show that playing greedy in the original instance yields the same cumulative reward of the optimal policy from the alternative instance.
    
    For  each clique $C_m \in \mathcal{C}_\mathbf{G}$, we substitute the reward function of every arm $i \in C_m$ with $\mu_i^*(n) = \max_{i \in C_m} \mu_i(n)$ for every $n \in [T]$. This way, whenever an action is chosen it is guaranteed to always yield the same reward as any other possible action inside the same clique. We create an alternative instance $(\widetilde{\boldsymbol{\nu}}, \widetilde{\mathbf{G}}, T)$ by collapsing all the actions inside the same clique into a single meta-action, resulting in a $\widetilde{k}$-armed rested rotting bandit problem, where the set of actions corresponds to the set of cliques of the original instance. We use Proposition 2 of~\citep{heidari2016tight} to get that the optimal policy in the alternative instance is to play, at every round, the action with the highest instantaneous reward. Such policy achieves a total reward, in the alternative instance, of:
    \begin{equation*}
        J_{\widetilde{\boldsymbol{\nu}}, \widetilde{\mathbf{G}}, T}^* 
        = \sum_{C_m \in \mathcal{C}_\mathbf{G}} \sum_{n=1}^{N_{C_m,T}^*} \max_{i \in C_m} \mu_i(n),
    \end{equation*}
    We now show that playing the greedy policy in the original instance yields an equal total cumulative reward. Playing greedily in the original instance we get:
    \begin{align*}
        J_{\boldsymbol{\nu}, \mathbf{G}, T}^* &= \sum_{t=1}^T \max_{i \in [k]}\mu_i(\widetilde{N}_{i,t}^*) \\
        &= \sum_{C_m \in\mathcal{C}_\mathbf{G}} \sum_{t=1}^T \mathbbm{1}_{\{{I_t^* \in C_m}\}} \max_{i \in [k]}\mu_i(\widetilde{N}_{i,t}^*) \\
        &= \sum_{C_m \in\mathcal{C}_\mathbf{G}} \sum_{t=1}^T \mathbbm{1}_{\{{I_t^* \in C_m}\}} \max_{i \in C_m}\mu_i(\widetilde{N}_{i,t}^*) \\
        &=  \sum_{C_m \in \mathcal{C}_\mathbf{G}} \sum_{n=1}^{N_{C_m,T}^*} \max_{i \in C_m} \mu_i(n) \\
        &= J_{\widetilde{\boldsymbol{\nu}}, \widetilde{\mathbf{G}}, T}^* \ge J_{\boldsymbol{\nu},\mathbf{G}, T}(\pi), \qquad \forall \pi.
    \end{align*}
    The performance of the optimal policy in the alternative instance is matched by the greedy policy played in the original instance. The proof is concluded by observing that the optimal total cumulative reward of the alternative instance cannot be lower than the total reward of any policy $\pi$ in the original instance, since the alternative instance has pointwise higher reward functions for every action.
\end{proof}

\subsection{Upper Bounding the Regret of \texttt{RAW-UCB}}
We start by defining the expectation version of the estimator defined in Equation~\eqref{eq:estimator_rotting} as $\bar{\mu}_i^h(t) \coloneqq \frac{1}{h}\sum_{s=1}^{t-1} \mathbbm{1}_{\{I_t = i ~\land~ N_{i,s}>N_{i,t-1}-h\}} \mu_{i}(\widetilde{N}_{i,s})$. Before moving on, we recall the following result, which also introduces the notion of \textit{good event} $\xi_t^\alpha$.
\begin{proposition}[Bound on the Probability of Bad Event, \citealt{seznec2020single}]
\label{prop:varbound}
    Let $\delta_t = 2t^{-\alpha}$, and:
    \begin{align*}
        \xi_t^\alpha \coloneqq \left\{ \forall i\in[k], ~\forall n\le t-1,~ \forall h \le n,~ | \widehat{\mu}_i^h(t)- \bar{\mu}_i^h(t)| \le c(h,\delta_t)\right\},
    \end{align*}
    for $c(h,\delta_t) \coloneqq \sqrt{2\sigma^2\log(2\delta_t^{-1})/h}$. Then:
    \begin{equation}
        \label{eq:goodevent_prob}
        \mathbb{P}\left(\bar{\xi_t^\alpha}\right) \le Kt^{2-\alpha}.
    \end{equation}
\end{proposition}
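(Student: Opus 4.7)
The plan is to prove this as a standard sub-Gaussian concentration result combined with a union bound, handling the adaptivity of the pulls via an optional skipping argument. The key observation is that the difference $\widehat{\mu}_i^h(t)-\bar{\mu}_i^h(t) = \tfrac{1}{h}\sum_{s=1}^{t-1}\mathbbm{1}\{I_s=i \wedge N_{i,s}>N_{i,t-1}-h\}(X_{i,s}-\mu_i(\widetilde{N}_{i,s}))$ is the empirical mean of the $h$ most recent reward noises of arm $i$, each of which is zero-mean and $\sigma^2$-sub-Gaussian by assumption.

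First, I would fix $i \in [k]$, a time $t$, and a window size $h$. The pull times $t_{i,1}, t_{i,2}, \dots$ of arm $i$ are stopping times with respect to the natural filtration $\mathcal{F}_{s} = \sigma(I_1, X_{I_1,1}, \dots, I_s, X_{I_s,s})$. By Doob's optional skipping (as used in Lemma~\ref{lemma:lemmaConcentration} and in \citealt{bubeck2008online}), conditioning on the event that arm $i$ has been pulled at least $h$ times, the last $h$ noise samples $(X_{i,t_{i,N_{i,t-1}-j+1}} - \mu_i(\widetilde{N}_{i,t_{i,N_{i,t-1}-j+1}}))_{j=1}^{h}$ behave as an independent sequence of zero-mean $\sigma^2$-sub-Gaussian variables. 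Hence a standard Hoeffding-type inequality yields
\begin{align*}
\mathbb{P}\Big(\big|\widehat{\mu}_i^h(t) - \bar{\mu}_i^h(t)\big| > \varepsilon\Big) \;\le\; 2\exp\!\left(-\tfrac{h\varepsilon^2}{2\sigma^2}\right).
\end{align*}
Choosing $\varepsilon = c(h,\delta_t) = \sqrt{2\sigma^2\log(2\delta_t^{-1})/h}$ makes the right-hand side exactly equal to $\delta_t = 2t^{-\alpha}$.

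Second, I would apply a union bound over all the quantifiers in the definition of $\xi_t^\alpha$. The event is indexed by $i \in [k]$, $n \le t-1$, and $h \le n$, so the total count is bounded by $k \cdot \sum_{n=1}^{t-1} n \le k t^2/2$. Combining this with the per-event bound of $\delta_t = 2t^{-\alpha}$ yields
\begin{align*}
\mathbb{P}(\bar{\xi_t^\alpha}) \;\le\; k \cdot \tfrac{t^2}{2} \cdot 2t^{-\alpha} \;=\; k \, t^{2-\alpha},
\end{align*}
which gives the claim with $K=k$.

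The main obstacle, and the only non-routine step, is justifying that the sub-Gaussian Hoeffding inequality still applies when the indices $t_{i,N_{i,t-1}-j+1}$ are random stopping times selected adaptively by a policy that has observed all past rewards; this is precisely why the optional skipping argument is needed, and it is the same tool already invoked in Lemma~\ref{lemma:lemmaConcentration}. Once that step is made rigorous, the rest is bookkeeping on the union bound constants. One subtle point to verify is that the concentration should either be stated conditionally on $N_{i,t-1} \ge h$ or extended to all $h \le N_{i,t-1}$ by noting that for $h > N_{i,t-1}$ the event in $\xi_t^\alpha$ is vacuously addressed (since the quantification is over $h \le n \le t-1$ and the definition of $\widehat{\mu}_i^h$ in Equation~\eqref{eq:estimator_rotting} handles windows up to $N_{i,t-1}$).
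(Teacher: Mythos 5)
Your proof is correct. Note that the paper itself gives no proof of this proposition: it is imported verbatim from \citet{seznec2020single} (their Proposition~1), and the argument there is exactly the one you give --- a Hoeffding-type sub-Gaussian tail bound for each fixed triple $(i,n,h)$, made legitimate under adaptive sampling via Doob's optional skipping, followed by a union bound over the at most $k\sum_{n\le t-1} n \le kt^2/2$ events, each of probability $\delta_t = 2t^{-\alpha}$, yielding $Kt^{2-\alpha}$ with $K=k$. Your closing caveats (conditioning on $N_{i,t-1}\ge h$, and the fact that the union over $n$ is what absorbs the randomness of the pull count) are the right points to flag; nothing is missing.
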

\begin{lemma}[Overestimation under the Good Event]
\label{lemma:ucb_gt}
Under $\xi_t^\alpha$, if action $I_t$ is selected by Algorithm~\ref{alg:rawucb}, for every $h \in [N_{i,t-1}]$ we have:
\begin{equation}
    \label{eq:overest_rotting}
    \bar{\mu}_{I_t}^h \ge \max_{i \in [k]} \mu_i(\widetilde{N}_{i,t-1}^\pi) - 2c(h,\delta_t),
\end{equation}
where $\widetilde{N}_{i,t-1}^\pi$ is the number of triggers of action $i$ provoked by playing with Algorithm~\ref{alg:rawucb} up until time $t$.
\end{lemma}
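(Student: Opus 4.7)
The plan is to chain together three ingredients: (i) the monotonicity of the expected rewards guaranteed by Assumption~\ref{ass:rotting}, (ii) the two-sided concentration afforded by the good event $\xi_t^\alpha$ from Proposition~\ref{prop:varbound}, and (iii) the UCB-style selection rule of Algorithm~\ref{alg:rawucb}.

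First, I would establish an intermediate monotonicity claim: for every arm $i \in [k]$ and every valid window $h$,
\begin{equation*}
\bar{\mu}_i^h(t) \ge \mu_i(\widetilde{N}_{i, t-1}^\pi).
\end{equation*}
By definition, $\bar{\mu}_i^h(t)$ is the average of $\mu_i(\widetilde{N}_{i, s}^\pi)$ over the last $h$ rounds $s < t$ in which arm $i$ was pulled. Since $\widetilde{N}_{i, s}^\pi$ is non-decreasing in $s$ (triggers only accumulate), every such past round satisfies $\widetilde{N}_{i, s}^\pi \le \widetilde{N}_{i, t-1}^\pi$, and Assumption~\ref{ass:rotting} yields $\mu_i(\widetilde{N}_{i, s}^\pi) \ge \mu_i(\widetilde{N}_{i, t-1}^\pi)$ term by term. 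Averaging preserves the inequality.

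Next, let $i^\star \in \argmax_{i \in [k]} \mu_i(\widetilde{N}_{i, t-1}^\pi)$ be the currently best arm. On $\xi_t^\alpha$, combining the concentration bound with the monotonicity claim gives, for every valid $h'$,
\begin{equation*}
\widehat{\mu}_{i^\star}^{h'}(t) + c(h', \delta_t) \ge \bar{\mu}_{i^\star}^{h'}(t) \ge \mu_{i^\star}(\widetilde{N}_{i^\star, t-1}^\pi) = \max_{i \in [k]} \mu_i(\widetilde{N}_{i, t-1}^\pi),
\end{equation*}
so taking the minimum over $h'$ preserves this lower bound on the index of $i^\star$.

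Finally, I would invoke the selection rule of Algorithm~\ref{alg:rawucb}: since $I_t$ maximizes the min-over-windows UCB across arms, we have $\min_{h'} (\widehat{\mu}_{I_t}^{h'}(t) + c(h', \delta_t)) \ge \min_{h'} (\widehat{\mu}_{i^\star}^{h'}(t) + c(h', \delta_t)) \ge \max_{i \in [k]} \mu_i(\widetilde{N}_{i, t-1}^\pi)$. Because the minimum lower-bounds the value for any specific window, $\widehat{\mu}_{I_t}^h(t) + c(h, \delta_t) \ge \max_{i} \mu_i(\widetilde{N}_{i, t-1}^\pi)$ for every $h$ in the valid range. Applying concentration once more in the opposite direction, $\bar{\mu}_{I_t}^h(t) \ge \widehat{\mu}_{I_t}^h(t) - c(h, \delta_t)$, and subtracting $c(h, \delta_t)$ from the previous inequality delivers the claimed $2 c(h, \delta_t)$ slack. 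The only (mild) subtlety specific to the GTB setting is the monotonicity step: one must notice that even though $\widetilde{N}_{i, s}^\pi$ may differ from the pull count $N_{i, s}^\pi$ when the graph has edges beyond self-loops, it is still non-decreasing in $s$, so the rotting assumption applies termwise inside $\bar{\mu}_i^h(t)$ exactly as in the classical rested/restless cases.
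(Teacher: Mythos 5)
Your proof is correct and follows essentially the same route as the paper's (itself adapted from Lemma~1 of \citealt{seznec2020single}): rotting monotonicity applied termwise to the true-mean averages $\bar{\mu}_{i^\star}^{h'}(t)$ (noting that $\widetilde{N}_{i,s}^\pi$ is non-decreasing in $s$, the key GTB adaptation), concentration on $\xi_t^\alpha$ applied once for the best arm and once for $I_t$, and the min-over-windows selection rule to transfer the lower bound from $i^\star$'s index to $\widehat{\mu}_{I_t}^h(t)+c(h,\delta_t)$ for every valid $h$. No gaps.
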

\begin{proof}
This proof is adapted from the one of Lemma 1 of~\citep{seznec2020single}.
    Let $h_{i,t}^{\min }\in \argmin_{h \le N_{i,t-1}} \widehat{\mu}_i^h(t) + c(h, \delta_t)$.
    
    Let $i_t^\pi \in \argmax_{i \in [k]} \mu_i(\widetilde{N}_{i,t-1}^\pi)$ be the best available action at time $t$. From the rotting assumption, we know that:
    $$
    \max_{i \in [k]} \mu_i(\widetilde{N}_{i,t-1}^\pi)  = \mu_{i_t^\pi}(\widetilde{N}_{i,t-1}^\pi) \le \bar{\mu}_{i_t^\pi}^1(t) \le \ldots \le \bar{\mu}_{i_t^\pi}^{h_{i_t^\pi,t}^{\min}}(t).
    $$
    Under $\xi_t^\alpha$, we have:
    $$
    \bar{\mu}_{i_t^\pi}^{h_{i_t^\pi,t}^{\min}}(t) \le \widehat{\mu}_{I_t}^{h_{I_t,t}^{\min}}(t) + c(h_{I_t,t}^{\min}, \delta_t).
    $$
    We now use the definition of $h_{I_t,t}^{\min}$:
    $$
     \widehat{\mu}_{I_t}^{h_{I_t,t}^{\min}}(t) + c(h_{I_t,t}^{\min}, \delta_t) \le \widehat{\mu}_{I_t}^{h}(t) + c(h, \delta_t).
    $$
    Again, we use $\xi_t^\alpha$:
    $$
     \widehat{\mu}_{I_t}^{h}(t) + c(h, \delta_t)\le \bar{\mu}_{I_t}^{h}(t) + 2c(h, \delta_t).
    $$

    Putting all together, we obtain the statement.
    \end{proof}

\regretUBblockROT*
\begin{proof}
Let us proceed to decompose the regret:
    \begin{align*}
        R_{\boldsymbol{\nu}, \mathbf{G}, T}(\texttt{RAW-UCB}) &= \sum_{t=1}^T \left( \mu_{i_t^*}(\widetilde{N}_{i_t^*, t}^*) - \mu_{I_t}(\widetilde{N}_{I_t, t}^\pi) \right) \\
        &= \sum_{t=1}^T \left( \mu_{i_t^*}(\widetilde{N}_{i_t^*, t}^*) - \mu_{I_t}(\widetilde{N}_{I_t, t}^\pi) \pm \max_{i \in C_{I_t}} \mu_{i}(\widetilde{N}_{I_t, t}^\pi) \right) \\
        &=\underbrace{\sum_{t=1}^T (\mu_{i_t^*}(\widetilde{N}_{i_t^*, t}^*) - \max_{i \in C_{I_t}} \mu_{i}(\widetilde{N}_{I_t, t}^\pi))}_{\text{(b)}} + \underbrace{\sum_{t=1}^T (\max_{i \in C_{I_t}} \mu_{i}(\widetilde{N}_{I_t, t}^\pi)-\mu_{I_t}(\widetilde{N}_{I_t, t}^\pi))}_{\text{(c)}}
    \end{align*}
Before bounding the two terms, we observe the following:
\begin{equation}
\label{eq:clique_opt}
    \sum_{t=1}^T \max_{i \in C_{I_t}} \mu_{i}(\widetilde{N}_{I_t, t}^\pi) = \sum_{C_m \in \mathcal{C}_{\mathbf{G}}} \sum_{n=1}^{N_{C_m,T}^\pi} \max_{i \in C_m} \mu_i(n).
\end{equation}
Equation~\eqref{eq:clique_opt} is a consequence of Equation~\eqref{eq:rotting_opt} (Theorem~\ref{thr:opt}), when applied to the restless bandit problems obtained by each clique when considered alone. We have for (c):

\begin{align*}
    \text{(c)} &= \sum_{t=1}^T \max_{i \in C_{I_t}} \left( \mu_{i}(\widetilde{N}_{I_t, t}^\pi)-\mu_{I_t}(\widetilde{N}_{I_t, t}^\pi) \right) \\
    &\stackrel{\text{Eq.~\eqref{eq:clique_opt}}}{=} \sum_{C_m \in \mathcal{C}_{\mathbf{G}}} \sum_{n=1}^{N_{C_m,T}^\pi} \left( \max_{i \in C_m} \mu_{i}(n)-\mu_{I_{t_{C_m,n}}}(n) \right) \\
    &\stackrel{(\Diamond)}{\le} {\color{black} 6k V_{\boldsymbol{\nu}}(T)}+ {\color{black} 4(8\alpha \sigma)^\frac{2}{3}\sum_{C_m \in \mathcal{C}_{\mathbf{G}} }\left(V_{\boldsymbol{\nu}}(T)|C_m|(N_{C_m,T}^\pi)^2\log T\right)^\frac{1}{3}} + \\
    &\qquad  + 2(2\sqrt{2}\alpha \sigma)^\frac{1}{3}\sum_{C_m \in \mathcal{C}_{\mathbf{G}} }\left(V_{\boldsymbol{\nu}}(T)^2|C_m|^2N_{C_m,T}^\pi\sqrt{\log T}\right)^\frac{1}{3}
\end{align*}
The last inequality is obtained by observing that, fixing the number of times a pull is selected, we have a nested restless bandit problem having as the time horizon the number of times the clique is pulled $N_{C_m,T}^\pi$. \texttt{RAW-UCB} plays greedily in each clique independently. Thus, when an action belonging to clique $C_m$ is selected, it is the same action that an instance of \texttt{RAW-UCB} would have played in a restless rotting bandit composed only of the actions belonging to $C_m$.\footnote{The UCB is different since the clique-specific instance of \texttt{RAW-UCB} would have used \emph{internal times} instead of the external time $t$, however, the order is preserved and the decision is the same.} In the step marked with $(\Diamond)$, this equivalence allows us to bound (c) with the summation of regret bounds of the algorithm for smaller restless bandits defined for the cliques, by using Theorem 1 from~\citep{seznec2020single} and bounding $\log N_{C_m, T}^\pi \le \log T$ and $V_{\boldsymbol{\nu}}(N_{C_m,T}^\pi) \le V_{\boldsymbol{\nu}}(T)$ for every $C_m \in \mathcal{C}_\mathbf{G}$. The last term is dominated by the other two in every quantity, and is thus omitted in the final bound.

We now focus on (b):
\begin{align*}
    \text{(b)} &= \sum_{t=1}^T (\mu_{i_t^*}(\widetilde{N}_{i_t^*, t}^*) - \max_{i \in C_{I_t}} \mu_{i}(\widetilde{N}_{I_t, t}^\pi)) \\
    &\!\!\!\!\!\!\stackrel{\text{Eq.}~\eqref{eq:overest_rotting}}{=} \sum_{C_m \in \mathcal{C}_{\mathbf{G}}} \sum_{n=1}^{N_{C_m,T}^{*}} \max_{i \in C_m} \mu_i(n) - \sum_{t=1}^T\max_{i \in C_{I_t}} \mu_{i}(\widetilde{N}_{I_t, t}^\pi) \\
    &\!\!\!\!\!\!\stackrel{\text{Eq.}~\eqref{eq:clique_opt}}{=} \sum_{C_m \in \mathcal{C}_{\mathbf{G}}} \sum_{n=1}^{N_{C_m,T}^{*}} \max_{i \in C_m} \mu_i(n) - \sum_{C_m \in \mathcal{C}_{\mathbf{G}}} \sum_{n=1}^{N_{C_m,T}^\pi} \max_{i\in C_m} \mu_i(n)
\end{align*}
The term (b) only depends on the difference between the allocation of pulls among the cliques between the optimal policy and the algorithm's policy. Thus, it makes sense to split the cliques into two sets, namely $\text{OP}$ and $\text{UP}$: the first will contain the OverPulled cliques, the second the UnderPulled cliques, which are cliques pulled by \texttt{RAW-UCB} more than the optimal policy and the cliques pulled less, respectively.
\begin{align*}
    \sum_{C_m \in \mathcal{C}_{\mathbf{G}}}\sum_{n=1}^{N_{C_m,T}^{*}} \max_{i \in C_m} \mu_i(n) &- \sum_{C_m \in \mathcal{C}_{\mathbf{G}}} \sum_{n=1}^{N_{C_m,T}^{\pi}} \max_{i \in C_m} \mu_i(n) \\ &= \sum_{C_m \in \text{UP}}\sum_{n=N_{C_m,T}^{\pi}+1}^{N_{C_m,T}^{*}}\max_{i \in C_m}\mu_i(n) -\sum_{C_m \in \text{OP}}\sum_{n=N_{C_m,T}^{*}+1}^{N_{C_m,T}^{\pi}}\max_{i \in C_m}\mu_i(n).
\end{align*}
We now introduce the auxiliary quantity $\mu_T^+(\pi)\coloneqq \max_{i \in [k]} \mu_i(\widetilde{N}_{i,T}^\pi)$. We also observe that the two terms in the RHS have the same number of addends, since the number of overpulls must be equal to the number of underpulls. Finally, we define $h_{C_m,T}$ as the number of overpulls of clique $C$.
\begin{align*}
    \sum_{C_m \in \text{UP}}\sum_{n=N_{C_m,T}^{\pi}}^{N_{C_m,T}^{*}-1}\max_{i \in C_m}\mu_i(n) &-\sum_{C_m \in \text{OP}}\sum_{n=N_{C_m,T}^{*}}^{N_{C_m,T}^{\pi}-1}\max_{i \in C_m}\mu_i(n) \\ &\le\sum_{C_m \in \text{UP}}\sum_{n=N_{C_m,T}^{\pi}}^{N_{C_m,T}^{*}-1}\mu_T^+(\pi) -\sum_{C_m \in \text{OP}}\sum_{n=N_{C_m,T}^{*}}^{N_{C_m,T}^{\pi}-1}\max_{i \in C_m}\mu_i(n)\\
    &= \sum_{C_m \in \text{OP}}\sum_{n=N_{C_m,T}^{*}}^{N_{C_m,T}^{\pi}-1}(\mu_T^+(\pi)-\max_{i \in C_m}\mu_i(n)) \\
    &=  \sum_{C_m \in \text{OP}}\sum_{h=0}^{h_{C_m,T}-1}(\mu_T^+(\pi)-\max_{i \in C_m}\mu_i(N_{C_m,T}^{*}+h)).
\end{align*}
We can now decompose the last summation by the means of events $\{\xi_t^\alpha\}_t$:
\begin{align*}
    \text{(b$_\xi$)} &\le \sum_{C_m \in \text{OP}}\sum_{h=0}^{h_{C_m,T}-1}\mathbbm{1}\{\xi_{t_{C_m,N_{C_m,T}^* + h}^\pi}^\alpha\}(\mu_T^+(\pi)-\max_{i \in C_m}\mu_i(N_{C_m,T}^{*}+h)) \\
    &\le \sum_{C_m \in \text{OP}^\xi}\sum_{h=0}^{h_{C_m,T}^\xi}(\mu_T^+(\pi)-\max_{i \in C_m}\mu_i(N_{C_m,T}^{*}+h)),
\end{align*}
where $h_{C_m,T}^\xi \coloneqq \max\{h \le h_{C_m,T} : \xi_{t_{C_m,N_{C_m,T}^* + h}^\pi}\}$ is the largest number of overpulls a clique undergoes before time $t_{C_m,N_{C_m,T}^* + h}^\pi \le T$ under the events $\xi_t^\alpha$, and $\text{OP}^\xi\coloneqq \{ C_m \in \text{OP} : h_{C_m,T}^\xi \ge 1\}$. We call, for short, $\widetilde{t}_{C_m,h}$ the time at which clique $C_m$ is overpulled for the $h$-th time i.e., $t_{C_m,N_{C_m,T}^* + h}^\pi$, and observe that:
\begin{align*}
    & \sum_{h=0}^{h_{C_m,T}^\xi}\max_{i \in C_m}\mu_i(N_{C_m,T}^{*}+h) \\
    &=\sum_{h=0}^{h_{C_m,T}^\xi}\mathbbm{1}\{h \neq h_{C_m, t_{j,N_{j,T}^\pi}} \forall j \in [k]\}\max_{i \in C_m}\mu_i(N_{C_m,T}^{*}+h) + \sum_{j \in C_m}\max_{i \in C_m}\mu_i(N_{C_m,T}^{*}+h_{C_m, t_{j,N_{j,T}^\pi}}) \\
    &= \sum_{i \in C_m} \sum_{h=0}^{h_{i,T}^\xi-1}  \mu_i(N_{C_m,\widetilde{t}_{C,h}}^{\pi}) + \sum_{j \in C_m}\max_{i \in C_m}\mu_i(\widetilde{N}_{j, t_{j, N_{j,T}^\pi}}^{\pi}) \\
    &\stackrel{\text{Eq.}~\eqref{eq:estimator_rotting}}{=} \sum_{i \in C_m} (h_{i,T}^\xi-1)\bar{\mu}_i^{h_{i,T}^\xi-1}(\widetilde{t}_{C,h_{i,T}^\xi}) + \sum_{j \in C_m}\max_{i \in C_m}\mu_i(\widetilde{N}_{j, t_{j, N_{j,T}^\pi}}^{\pi})\\
    &\stackrel{\text{Eq.}~\eqref{eq:overest_rotting}}{\ge} \sum_{i \in C_m} (h_{i,T}^\xi-1)\left(\max_{i\in [k]} \mu_i(\widetilde{N}_{i,T}^\pi) - 2c(h_{i,T}^\xi-1, \delta_{\widetilde{t}_{C,h_{i,T}^\xi}})\right)
    + \sum_{j \in C_m}\max_{i \in C_m}\mu_i(\widetilde{N}_{j, t_{j, N_{j,T}^\pi}}^{\pi}) \\
    &\ge (h_{C_m,T}^\xi-|C_m|)\max_{i\in [k]} \mu_i(\widetilde{N}_{i,T}^\pi) -2\sum_{i \in C_m} (h_{i,T}^\xi-1)c(h_{i,T}^\xi-1, \delta_T)
    + \sum_{j \in C_m}\max_{i \in C_m}\mu_i(\widetilde{N}_{j, t_{j, N_{j,T}^\pi}}^{\pi}) \\
    &= (h_{C_m,T}^\xi-|C_m|)\mu_T^+(\pi) -2\sum_{i \in C_m} (h_{i,T}^\xi-1)c(h_{i,T}^\xi-1, \delta_T)
    + \sum_{j \in C_m}\max_{i \in C_m}\mu_i(\widetilde{N}_{j, t_{j, N_{j,T}^\pi}}^{\pi}).
\end{align*}
Plugging this observation into the previous, we get:
\begin{align*}
    \text{(b$_\xi$)}  &\le \sum_{C_m \in \text{OP}^\xi}\left(|C_m|\mu_T^+(\pi)-\sum_{j \in C_m}\max_{i \in C_m}\mu_i(\widetilde{N}_{j, t_{j, N_{j,T}^\pi}}^{\pi}) + 2\sum_{i \in C_m} (h_{i,T}^\xi-1)c(h_{i,T}^\xi-1, \delta_T)\right) \\
    &= \sum_{C_m \in \text{OP}^\xi}\left(\sum_{i \in C_m}(\mu_T^+(\pi)-\max_{j\in C_m}\mu_j(\widetilde{N}_{i, t_{i, N_{i,T}^\pi}}^{\pi})) + 2\sum_{i \in C_m} (h_{i,T}^\xi-1)c(h_{i,T}^\xi-1, \delta_T)\right) \\
    &\stackrel{(\star)}{\le} 2k\sigma \sqrt{\log T} + L\sum_{C_m \in \mathcal{C}_{\mathbf{G}}} |C_m|^2 + 2\sum_{C_m \in \text{OP}^\xi}\left(\sum_{i \in C_m} (h_{i,T}^\xi-1)c(h_{i,T}^\xi-1, \delta_T)\right) \\
    &\le 2k\sigma \sqrt{\log T} + L\sum_{C_m \in \mathcal{C}_{\mathbf{G}}} |C_m|^2 + 2\sum_{C_m \in \text{OP}^\xi}\left(\sigma\sum_{i \in C_m} \sqrt{(h_{i,T}^\xi-1)\log T}\right) \\
    &\le 2k\sigma \sqrt{\log T} + L\sum_{C_m \in \mathcal{C}_{\mathbf{G}}} |C_m|^2 + 2\sum_{C_m \in \text{OP}}\left(\sigma\sqrt{\log T}\sum_{i \in C_m} \sqrt{(h_{i,T}-1)}\right) \\
    &\stackrel{(J)}{\le}  {\color{black} 2k\sigma \sqrt{\log T}} + L\sum_{C_m \in \mathcal{C}_{\mathbf{G}}} |C_m|^2 +{\color{black} 2\sum_{C_m \in \mathcal{C}_{\mathbf{G}}}\left( \sigma\sqrt{|C_m|N_{C_m,T}^\pi \log T}\right)}.
\end{align*}
The step marked with $(\star)$ is justified by the following considerations. Let $i \in C_m$, we shorten the notation for the time at which clique $C_m$ is triggered for the $(\widetilde{N}_{i,t_{i, N_{i,T}^\pi}}^\pi-m)$-th time as  $t_{i,-m} \coloneqq t_{C_m,\widetilde{N}_{i, t_{i, N_{i,T}^\pi}}^\pi-m}$. In other words, after this time, the clique $C_m$ is only chosen $m$ times before the action $i \in C_m$ is pulled for the last time. Consider the $|C_m|$ times the clique $C_m$ is chosen before pulling $i$ for the last time: then, due to the pigeonhole principle, at least one action belonging to the clique should appear at least two times before the last pull. Without loss of generality, we assume that only one action appears exactly two times, and call the first appearance time $t_{i,-m}$ and the second $t_{i,-m'}$ (note that $m'\le m\le |C_m|$). Finally, in the step marked with $(J)$ we used Jensen's inequality to find the worst allocation of overpull among the actions in the same clique, which is the uniform one, \ie $h_{i,T} \le N_{C_m,T}^\pi/|C_m|$.

We now observe that:
\begin{align*}
    \sum_{i \in C_m} & \left(\mu_T^+(\pi)-\max_{j\in C_m}\mu_j(\widetilde{N}_{i, t_{i, N_{i,T}^\pi}}^{\pi}) \right) \\
    &= \sum_{i \in C_m} \left(\mu_T^+(\pi) - \max_{j\in C_m}\left\{\mu_j(\widetilde{N}_{i, t_{i, N_{i,T}^\pi}}^{\pi}) \pm \mu_j(\widetilde{N}_{i, t_{i, N_{i,T}^\pi}}^{\pi}-m)\right\}\right) \\
    &\le \sum_{i \in C_m} \left(\mu_T^+(\pi) - \max_{j\in C_m}\mu_j(\widetilde{N}_{i, t_{i, N_{i,T}^\pi}}^{\pi}-m)+ mL\right) \\
    &\le \sum_{i \in C_m} \left(\mu_T^+(\pi) - \max_{j\in C_m}\mu_j(\widetilde{N}_{i, t_{i, N_{i,T}^\pi}}^{\pi}-m)\right) + |C_m|^2 L, 
\end{align*}
We can now prove the step $(\star)$ by bounding:
\begin{align*}
    \sum_{i \in C_m}\max_{j\in C_m}\mu_j(\widetilde{N}_{i, t_{i, N_{i,T}^\pi}}^{\pi}-m) &\ge \sum_{i \in C_m} \mu_{I_{t_{i,-m}}}(\widetilde{N}_{i, t_{i, N_{i,T}^\pi}}^{\pi}-m)\\
    &= \sum_{i \in C_m} \bar{\mu}_{I_{t_{i,-m}}}^1(t_{i,-m'}) \\
    &\stackrel{\text{Eq.}~\eqref{eq:overest_rotting}}{\ge} \sum_{i \in C_m} \left(\max_{j\in [k]}\mu_j(\widetilde{N}_{j, t_{i, -m'}}^{\pi})-2c(1, \delta_{t_{i,-m'}})\right) \\
    &\stackrel{(\dagger)}{\ge} \sum_{i \in C_m} \left(\mu_T^+(\pi)-2c(1, \delta_{T})\right),
\end{align*}
where $(\dagger)$ is a consequence of $\widetilde{N}_{j, t_{i, -m'}}^{\pi} \le  \widetilde{N}_{j, T}^{\pi}$ for every $j \in [k]$.

To conclude the proof, we need to find out what happens under $\bar{\xi_t}^\alpha$:
\begin{align*}
   \text{(b$_{\bar{\xi}}$)}  &\le \sum_{C_m \in \text{OP}}\sum_{h=0}^{h_{C_m,T}-1}\mathbbm{1}\{\bar{\xi}_{t_{C_m,N_{C_m,T}^* + h}^\pi}^\alpha\}(\mu_T^+(\pi)-\max_{i \in C_m}\mu_i(N_{C_m,T}^{*}+h)) \\
    &= \sum_{C_m \in \text{OP}} \sum_{h=0}^{h_{C_m,T}-1} \sum_{t=1}^T \mathbbm{1}\{\bar{\xi}_{t_{C_m,N_{C_m,T}^* + h}^\pi}^\alpha \wedge t_{C_m,N_{C_m,T}^* + h}^\pi = t\}(\mu_T^+(\pi)-\max_{i \in C_m}\mu_i(N_{C_m,T}^{*}+h)) \\
    &\stackrel{(*)}{\le}  \sum_{t=1}^T \mathbbm{1}\{\bar{\xi}_{t_{C_m,N_{C_m,T}^* + h}^\pi}^\alpha\}Lt \left(\sum_{C_m \in \text{OP}} \sum_{h=0}^{h_{C_m,T}-1} \mathbbm{1}\{t_{C_m,N_{C_m,T}^* + h}^\pi = t\}\right) \\
    &\le \sum_{t=1}^T \mathbbm{1}\{\bar{\xi}_{t_{C_m,N_{C_m,T}^* + h}^\pi}^\alpha\}Lt.
\end{align*}
In the step marked with $(\star)$, we use the fact that at time $t$ overpulling a clique can yield at most $Lt$ regret. The last step is a consequence that for each round $t$ we can have at most $1$ overpull. We conclude the bound by using Proposition~\ref{prop:varbound}:
\begin{align*}
    \mathbb{E}[\text{(b$_{\bar{\xi}}$)}] \le \sum_{t=1}^T \mathbb{P}\left(\bar{\xi}_t^\alpha\right)Lt\stackrel{\eqref{eq:goodevent_prob}}{\le} \sum_{t=1}^T kLt^{3-\alpha} 
    \stackrel{(\alpha \ge 5)}{\le} {\color{black} 2kL}.
\end{align*}
We then observe that, given an arbitrary concave function $g$, we have:
$$
\sum_{C_m \in \mathcal{C}_{\mathbf{G}}} g(|C_m| N_{C_m,T}^\pi) \le \sum_{C_m \in \mathcal{C}_{\mathbf{G}}} g\left(\frac{|C_m|}{k}T\right).
$$
This can be applied to component (b) with $g(\cdot) = \sqrt{\cdot}$ and to component (c) with $g(\cdot) = (\cdot)^\frac{2}{3}$.
The statement of the theorem can be obtained by summing up all the components: 
\begin{align*}
    R_{\boldsymbol{\vnu},\mathbf{G},T}(\pi) \le \mathbb{E}\left[\text{(b$_{\xi}$)}+\text{(b$_{\bar{\xi}}$)}+\text{(c)}\right].
\end{align*}
\end{proof}

\subsection{Regret Lower Bound for Rotting GTBs}

\begin{restatable}{lemma}{graph_inclusion}
\label{lemma:graph_inclusion}
Let $\mathcal{G}_n = (V_{n},E_{n})$ be a graph. Then, either the graph possesses block-diagonal connectivity and the nodes can be partitioned in $M$ disjoint cliques, i.e., $V = \bigcup_{m=1}^M C_m$, or there exist three nodes $v_1$, $v_2$ and $v_3$ such that $e_{v_1, v_2}$, $e_{v_2, v_3} \in E$ and $e_{v_1, v_3} \notin E$.
\end{restatable}
\begin{proof}
    We proceed by induction. The statement holds for $n\le 3$. We assume that $\mathcal{G}_n$ is an arbitrary graph satisfying the statement. Then we add one node $v_{n+1}$ and obtain $\mathcal{G}_{n+1} = (V_{n+1},E_{n+1})$.

    \paragraph{If $\mathcal{G}_n$ is block-diagonal connected.} 
    We list all the possible scenarios:
    \begin{itemize}
        \item If $e_{v_{n+1},i} \notin E_{n+1}$ for every $i \in V_n$, then the node $v_{n+1}$ is a single-element clique and the new graph is block-diagonal.
        \item If $e_{v_{n+1},i} \in E_{n+1}$ for every $i \in C_m$, and $e_{v_{n+1},j} \notin E_{n+1}$ for every $j \in V_n \setminus C_m$, then the node $v_{n+1}$ is added to the clique $C_m$ and the new graph is block-diagonal.
        \item If $e_{v_{n+1},i} \in E_{n+1}$ for some $i \in C_m$ and $e_{v_{n+1},j} \notin E_{n+1}$ for some $j \in C_m$, then $e_{v_{n+1}, i}$, $e_{i, j} \in E$ and $e_{v_{n+1}, j} \notin E$.
        \item If $e_{v_{n+1},i} \in E_{n+1}$ for some $i \in C_m$ and $e_{v_{n+1},j} \in E_{n+1}$ for some $j \in C_{m'}$, then $e_{v_{n+1}, i}$, $e_{v_{n+1}, j} \in E$ and $e_{i, j} \notin E$.
    \end{itemize}
    \paragraph{If there exist three nodes $v_1$, $v_2$ and $v_3$ such that $e_{v_1, v_2}$, $e_{v_2, v_3} \in E$ and $e_{v_1, v_3} \notin E$.}
    There is no way to connect $v_1$ and $v_3$ by adding a node, thus the statement still holds for $\mathcal{G}_{n+1}$.
\end{proof}

\rottingregretLBgeneric*
\begin{proof}
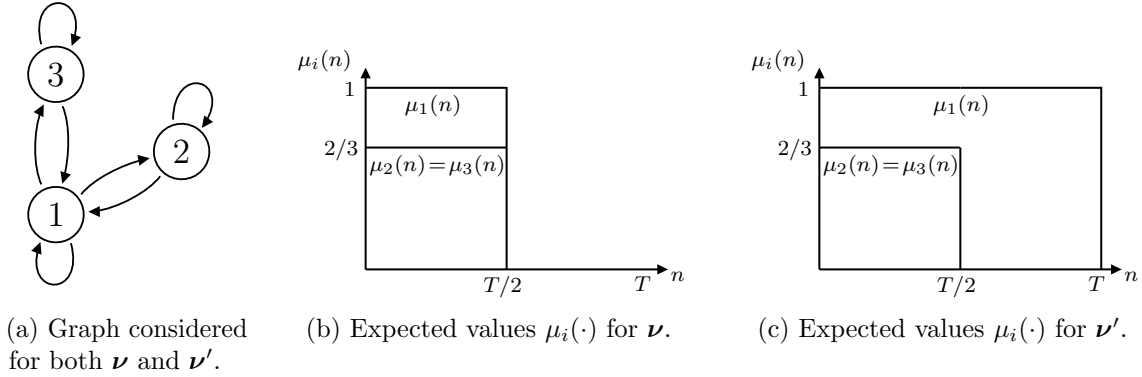
\begin{figure}[t!]
    \centering
    \subfloat[Graph considered for both $\vnu$ and $\vnu '$.]{\resizebox{0.20\linewidth}{!}{\tikzset{every picture/.style={line width=0.75pt}} %set default line width to 0.75pt        

\begin{tikzpicture}[x=0.75pt,y=0.75pt,yscale=-1,xscale=1]
%uncomment if require: \path (0,300); %set diagram left start at 0, and has height of 300

%Shape: Circle [id:dp7916128970930008] 
\draw   (65.72,66.33) .. controls (70.08,59.24) and (79.36,57.03) .. (86.45,61.39) .. controls (93.54,65.75) and (95.75,75.03) .. (91.39,82.12) .. controls (87.03,89.21) and (77.75,91.43) .. (70.66,87.07) .. controls (63.57,82.71) and (61.36,73.42) .. (65.72,66.33) -- cycle ;
%Shape: Circle [id:dp3855077965793061] 
\draw   (134,108.3) .. controls (138.36,101.21) and (147.65,99) .. (154.74,103.36) .. controls (161.83,107.72) and (164.04,117) .. (159.68,124.09) .. controls (155.32,131.18) and (146.04,133.39) .. (138.95,129.03) .. controls (131.86,124.67) and (129.64,115.39) .. (134,108.3) -- cycle ;
%Shape: Circle [id:dp5774915367226878] 
\draw   (65.75,142.6) .. controls (70.11,135.51) and (79.4,133.3) .. (86.49,137.66) .. controls (93.58,142.02) and (95.79,151.3) .. (91.43,158.39) .. controls (87.07,165.48) and (77.79,167.7) .. (70.7,163.34) .. controls (63.61,158.98) and (61.39,149.69) .. (65.75,142.6) -- cycle ;
%Curve Lines [id:da5066631309227252] 
\draw    (135.2,129.66) .. controls (128.76,138.11) and (111.83,147.29) .. (99.16,148.74) ;
\draw [shift={(96.17,148.93)}, rotate = 359.31] [fill={rgb, 255:red, 0; green, 0; blue, 0 }  ][line width=0.08]  [draw opacity=0] (6.25,-3) -- (0,0) -- (6.25,3) -- cycle    ;
%Curve Lines [id:da3876570707692515] 
\draw    (92.26,139.48) .. controls (98.96,131.13) and (114.26,123.41) .. (126.75,120.4) ;
\draw [shift={(129.52,119.8)}, rotate = 169.23] [fill={rgb, 255:red, 0; green, 0; blue, 0 }  ][line width=0.08]  [draw opacity=0] (6.25,-3) -- (0,0) -- (6.25,3) -- cycle    ;
%Curve Lines [id:da8567122676257841] 
\draw    (70.65,134.07) .. controls (66.22,124.41) and (66.09,105.16) .. (70.76,93.3) ;
\draw [shift={(71.99,90.57)}, rotate = 117.36] [fill={rgb, 255:red, 0; green, 0; blue, 0 }  ][line width=0.08]  [draw opacity=0] (6.25,-3) -- (0,0) -- (6.25,3) -- cycle    ;
%Curve Lines [id:da29414195720889347] 
\draw    (82.18,91.56) .. controls (86.4,101.4) and (86.02,118.53) .. (82.8,130.97) ;
\draw [shift={(82.03,133.7)}, rotate = 287.28] [fill={rgb, 255:red, 0; green, 0; blue, 0 }  ][line width=0.08]  [draw opacity=0] (6.25,-3) -- (0,0) -- (6.25,3) -- cycle    ;
%Curve Lines [id:da4835898787187134] 
\draw    (86.71,166.04) .. controls (94.73,196.05) and (60.97,195.47) .. (69.27,168) ;
\draw [shift={(70.16,165.37)}, rotate = 110.64] [fill={rgb, 255:red, 0; green, 0; blue, 0 }  ][line width=0.08]  [draw opacity=0] (6.25,-3) -- (0,0) -- (6.25,3) -- cycle    ;

%Curve Lines [id:da647812100135106] 
\draw    (142.2,98.87) .. controls (142.56,67.8) and (174.78,76.91) .. (159.39,101.1) ;
\draw [shift={(157.82,103.39)}, rotate = 306.25] [fill={rgb, 255:red, 0; green, 0; blue, 0 }  ][line width=0.08]  [draw opacity=0] (6.25,-3) -- (0,0) -- (6.25,3) -- cycle    ;
%Curve Lines [id:da934301552316581] 
\draw    (70.58,57.88) .. controls (63.03,27.75) and (96.77,28.85) .. (88.05,56.19) ;
\draw [shift={(87.12,58.81)}, rotate = 291.52] [fill={rgb, 255:red, 0; green, 0; blue, 0 }  ][line width=0.08]  [draw opacity=0] (6.25,-3) -- (0,0) -- (6.25,3) -- cycle    ;

% Text Node
\draw (72.4,66.6) node [anchor=north west][inner sep=0.75pt]  [font=\Large] [align=left] {$\displaystyle 3$};
% Text Node
\draw (140.5,108.5) node [anchor=north west][inner sep=0.75pt][font=\Large]    [align=left] {$\displaystyle 2$};
% Text Node
\draw (72.5,143) node [anchor=north west][inner sep=0.75pt][font=\Large]    [align=left] {$\displaystyle 1$};

\end{tikzpicture}}
    \label{fig:graph_lb_rotting}}
    \hfill
    \subfloat[Expected values $\mu_i( \cdot )$ for $\vnu$.]{\resizebox{0.36\linewidth}{!}{\tikzset{every picture/.style={line width=0.75pt}} %set default line width to 0.75pt        

\begin{tikzpicture}[x=0.75pt,y=0.75pt,yscale=-1,xscale=1]
%uncomment if require: \path (0,448); %set diagram left start at 0, and has height of 448

%Straight Lines [id:da61338526472692] 
\draw    (70.14,230.31) -- (70.14,133.15) ;
\draw [shift={(70.14,130.15)}, rotate = 90] [fill={rgb, 255:red, 0; green, 0; blue, 0 }  ][line width=0.08]  [draw opacity=0] (5.36,-2.57) -- (0,0) -- (5.36,2.57) -- cycle    ;
%Straight Lines [id:da8381383921585819] 
\draw    (70.14,230.31) -- (217.08,230.31) ;
\draw [shift={(220.08,230.31)}, rotate = 180] [fill={rgb, 255:red, 0; green, 0; blue, 0 }  ][line width=0.08]  [draw opacity=0] (5.36,-2.57) -- (0,0) -- (5.36,2.57) -- cycle    ;
%Straight Lines [id:da9915755492198006] 
\draw    (70.44,140.21) -- (140.04,140.21) ;
%Straight Lines [id:da7371486689707163] 
\draw    (70.32,169.85) -- (139.77,169.85) ;
%Straight Lines [id:da8379816258012334] 
\draw    (140.04,230.31) -- (140.04,139.84) ;

% Text Node
\draw (127,232) node [anchor=north west][inner sep=0.75pt]  [font=\scriptsize]  {$T/2$};
% Text Node
\draw (202,232) node [anchor=north west][inner sep=0.75pt]  [font=\scriptsize]  {$T$};
% Text Node
\draw (220,230) node [anchor=north west][inner sep=0.75pt]  [font=\scriptsize]  {$n$};
% Text Node
\draw (35,120) node [anchor=north west][inner sep=0.75pt]  [font=\scriptsize]  {$\mu _{i}( n)$};
\draw (58,136) node [anchor=north west][inner sep=0.75pt]  [font=\scriptsize]  {$1$};
\draw (48,164) node [anchor=north west][inner sep=0.75pt]  [font=\scriptsize]  {$2/3$};

% Text Node
\draw (70.5,172) node [anchor=north west][inner sep=0.75pt]  [font=\scriptsize]  {$\mu_{2}(n) \! = \! \mu_{3}(n)$};
% Text Node
\draw (88,142) node [anchor=north west][inner sep=0.75pt]  [font=\scriptsize]  {$\mu_{1}(n)$};

\end{tikzpicture}} \label{fig:lb_rotting_nu}}
    \hfill
    \subfloat[Expected values $\mu_i( \cdot )$ for $\vnu '$.]{\resizebox{0.36\linewidth}{!}{\tikzset{every picture/.style={line width=0.75pt}} %set default line width to 0.75pt        

\begin{tikzpicture}[x=0.75pt,y=0.75pt,yscale=-1,xscale=1]
%uncomment if require: \path (0,448); %set diagram left start at 0, and has height of 448

%Straight Lines [id:da61338526472692] 
\draw    (70.14,230.31) -- (70.14,133.15) ;
\draw [shift={(70.14,130.15)}, rotate = 90] [fill={rgb, 255:red, 0; green, 0; blue, 0 }  ][line width=0.08]  [draw opacity=0] (5.36,-2.57) -- (0,0) -- (5.36,2.57) -- cycle    ;
%Straight Lines [id:da8381383921585819] 
\draw    (70.14,230.31) -- (217.08,230.31) ;
\draw [shift={(220.08,230.31)}, rotate = 180] [fill={rgb, 255:red, 0; green, 0; blue, 0 }  ][line width=0.08]  [draw opacity=0] (5.36,-2.57) -- (0,0) -- (5.36,2.57) -- cycle    ;
%Straight Lines [id:da9915755492198006] 
\draw    (70.44,140.21) -- (140.04,140.21) ;
%Straight Lines [id:da7371486689707163] 
\draw    (70.32,169.85) -- (139.77,169.85) ;
%Straight Lines [id:da8379816258012334] 
\draw    (140.04,230.31) -- (140.04,169.84) ;
%Straight Lines [id:da06936037597866496] 
\draw    (209.89,230.16) -- (209.89,139.69) ;
%Straight Lines [id:da4557201810560709] 
\draw    (140.04,140.21) -- (209.65,140.21) ;

% Text Node
\draw (127,232) node [anchor=north west][inner sep=0.75pt]  [font=\scriptsize]  {$T/2$};
% Text Node
\draw (202,232) node [anchor=north west][inner sep=0.75pt]  [font=\scriptsize]  {$T$};
% Text Node
\draw (220,230) node [anchor=north west][inner sep=0.75pt]  [font=\scriptsize]  {$n$};
% Text Node
\draw (35,120) node [anchor=north west][inner sep=0.75pt]  [font=\scriptsize]  {$\mu _{i}( n)$};
\draw (58,136) node [anchor=north west][inner sep=0.75pt]  [font=\scriptsize]  {$1$};
\draw (48,164) node [anchor=north west][inner sep=0.75pt]  [font=\scriptsize]  {$2/3$};

% Text Node
\draw (70.5,172) node [anchor=north west][inner sep=0.75pt]  [font=\scriptsize]  {$\mu_{2}(n) \! = \! \mu_{3}(n)$};
% Text Node
\draw (125,142) node [anchor=north west][inner sep=0.75pt]  [font=\scriptsize]  {$\mu_{1}(n)$};

\end{tikzpicture}} \label{fig:lb_rotting_nuprime}}
    \caption{Instances used in the proof of Theorem~\ref{thr:rottingregretLBgeneric}.}
    \label{fig:lb_rotting}
\end{figure}
Consider the deterministic rotting scenario, \ie where $\sigma=0$. Consider two instances $\vnu$ and $\vnu '$ of $3$-armed rotting bandit with graph structure as depicted in Figure~\ref{fig:lb_rotting}. The graph which represents the connection of the arms is represented in Figure~\ref{fig:graph_lb_rotting}. The expected rewards at the different number of triggers $n$ is depicted in Figure~\ref{fig:lb_rotting_nu} for instance $\vnu$ and in Figure~\ref{fig:lb_rotting_nuprime} for instance $\vnu '$. For both instances, arms $2$ and $3$ present an expected reward equal to $2/3$ for the first $T/2$ triggers, and then the expected reward becomes $0$. On the other hand, the two instances differ in the behavior of the expected reward of arm $1$. Indeed, such reward is $1$ until we trigger the arm $T/2$ times for instance $\vnu$ and for all the $T$ triggers for instance $\vnu '$. 

We recall that the clairvoyant is aware of both the graph $\mathbf{G}$ and the expected values $\mu_i(n)$, for every $i\in [k]$ and $n \in [T]$. We can easily compute the total reward for the best policy possible $\pi^*$ for instance $\vnu$:
\begin{align}
    J_{\boldsymbol{\vnu },\mathbf{G},T}(\pi^*) &= \frac{2}{3} T, \label{eq:lbrotting:jstarnu} 
\end{align}
which corresponds to pulling arms $2$ and $3$ only (both for $T/2$ times), and for instance $\vnu '$:
\begin{align}
    J_{\boldsymbol{\vnu '},\mathbf{G},T}(\pi^*) &= T, \label{eq:lbrotting:jstarnuprime}
\end{align}
which corresponds to always pull arm $1$ for $T$ times. We highlight that the optimal policy $\pi^*$ is different for the two instances.

We now need to introduce some additional notations that will be used in the proof. We call $\mathbb{E}_{\vnu} \left[ N_i^R \left( n \right) \right]$ the expected number of pulls for arm $i$ generating reward (\ie for which the expected reward is different from $0$) up to time $n$ for instance $\vnu$. We now start by observing that, up to the round $T/2$, the two instances are exactly the same, so every policy $\pi$ will have the same behavior in expectation. Given that, we observe that for both instances we have the same reward, equal to:
\begin{align*}
J_{\boldsymbol{\vnu},\mathbf{G},T/2}(\pi) = J_{\boldsymbol{\vnu '},\mathbf{G},T/2}(\pi) &= \Nonehalf + \frac{2}{3} \Ntwohalf + \frac{2}{3} \Nthreehalf \\ 
&= \frac{T}{2} - \frac{1}{3} \Ntwohalf - \frac{1}{3} \Nthreehalf,
\end{align*}
where the last equality follows from $\Nonehalf + \Ntwohalf + \Nthreehalf = T/2$.
This result is valid for both $\vnu$ and $\vnu '$, as the policy will behave in the same way, and so $\mathbb{E}_{\vnu} \left[ N_i^R \left( \frac{T}{2} \right) \right] = \mathbb{E}_{\vnu '} \left[ N_i^R \left( \frac{T}{2} \right) \right]$, for every $i \in [ 3 ]$, as the policy on the two instances $\vnu$ and $\vnu '$ are not distinguishable the first $T/2$ rounds.

We now have to understand what will happen from $T/2$ to $T$ in the best case possible.

\noindent\textbf{Instance $\vnu$.}~~We can easily see how for arm $1$ we have terminated the pulls which generate reward, so we have to pull arms $2$ and $3$. We can now compute the remaining triggers generating reward for arm $2$ in the second half of the rounds:
\begin{align*}
    \NtwoT - \Ntwohalf & \leq \underbrace{\frac{T}{2}}_{\substack{\text{Triggers} \\ \text{initially} \\  \text{available}}} - \underbrace{\Ntwohalf}_{\substack{\text{Already} \\ \text{used}}} \\ 
    & \qquad - \underbrace{\left( \frac{T}{2} - \Ntwohalf - \Nthreehalf \right)}_{\substack{\text{Triggers used} \\ \text{from arm 1}}} \\ 
    & \leq \Nthreehalf
\end{align*}
We can do the same reasoning for arm $3$ and, for symmetry, we get:
\begin{align*}
    \NthreeT - \Nthreehalf \leq \Ntwohalf
\end{align*}
We now consider a policy using all these triggers, and we compute the expected cumulative reward: 
\begin{align*}
J_{\boldsymbol{\vnu},\mathbf{G},T}&(\pi) \\
& \leq J_{\boldsymbol{\vnu},\mathbf{G},T/2}(\pi) + \frac{2}{3} \Ntwohalf + \frac{2}{3} \Nthreehalf \\ 
& \leq \frac{T}{2} - \frac{1}{3} \Ntwohalf - \frac{1}{3} \Nthreehalf  + \frac{2}{3} \Ntwohalf + \frac{2}{3} \Nthreehalf \\
& \leq \frac{T}{2} + \frac{1}{3} \Ntwohalf + \frac{1}{3} \Nthreehalf .
\end{align*}

\noindent\textbf{Instance $\vnu '$.}~~Instead, for instance $\vnu '$, we can easily see that the best choice from $T/2$ to $T$ is to always pull arm $1$ for all the $T/2$ rounds, receiving a reward of $1$ each time. Given that, we have:
\begin{align*}
J_{\boldsymbol{\vnu '},\mathbf{G},T}(\pi) & \leq J_{\boldsymbol{\vnu '},\mathbf{G},T/2}(\pi) + \frac{T}{2} \\ 
& = T - \frac{1}{3} \Ntwohalf - \frac{1}{3} \Nthreehalf.
\end{align*}

\noindent\textbf{Regret.}~~Moving to the regret, we have for instance $\vnu$:
\begin{align*}
R_{\boldsymbol{\vnu},\mathbf{G},T}(\pi) &= J_{\boldsymbol{\vnu },\mathbf{G},T}(\pi^*) - J_{\boldsymbol{\vnu },\mathbf{G},T}(\pi) \\ 
&\geq \frac{2}{3}T - \frac{T}{2} - \frac{1}{3} \Ntwohalf - \frac{1}{3} \Nthreehalf \\
&\geq \frac{T}{6} - \frac{1}{3} \Ntwohalf - \frac{1}{3} \Nthreehalf ,
\end{align*}
while for instance $\vnu '$: 
\begin{align*}
R_{\boldsymbol{\vnu '},\mathbf{G},T}(\pi) &= J_{\boldsymbol{\vnu '},\mathbf{G},T}(\pi^*) - J_{\boldsymbol{\vnu '},\mathbf{G},T}(\pi) \\ 
&\geq \frac{1}{3} \Ntwohalf + \frac{1}{3} \Nthreehalf .
\end{align*}
We can now compute a lower bound on the regret:
\begin{align*}
    R_{T}(\mathfrak{A}) &= \max \left\{ R_{\boldsymbol{\vnu},\mathbf{G},T}(\pi) , R_{\boldsymbol{\vnu '},\mathbf{G},T}(\pi) \right\} \\ 
    &\geq \frac{1}{2} \left( R_{\boldsymbol{\vnu},\mathbf{G},T}(\pi) + R_{\boldsymbol{\vnu '},\mathbf{G},T}(\pi) \right) \\ 
    &= \frac{1}{2} \left( \frac{T}{6} - \frac{1}{3} \Ntwohalf - \frac{1}{3} \Nthreehalf \right. \\ 
    & \qquad \qquad \left. + \frac{1}{3} \Ntwohalf + \frac{1}{3} \Nthreehalf \right) \\ 
    &= \frac{T}{12}.
\end{align*}
This proof holds for the specific graph structure we discussed here. However, by joining this result with the one of Lemma~\ref{lemma:graph_inclusion}, we can generalize this result for every non-block-diagonal connectivity matrix.
\end{proof}

\bibliography{biblio}

\end{document}